\newif\ifarxiv

\arxivtrue

\ifarxiv
\documentclass[dvipsnames,runningheads,final]{llncs}
\else
\documentclass[dvipsnames,runningheads, final]{llncs}
\fi
\usepackage[T1]{fontenc}
\usepackage{graphicx}
\usepackage{hyperref}
\usepackage{hyperxmp}
\usepackage[obeyFinal]{todonotes}
\usepackage{amsmath}
\usepackage{cleveref}
\usepackage[shortlabels,inline]{enumitem}
\usepackage{catchfile}
\usepackage{titlecaps}
\usepackage{amssymb}
\usepackage{tikz}
\usepackage{mathrsfs}
\usetikzlibrary{shapes.geometric}
\usetikzlibrary{arrows,arrows.meta,calc}
\usetikzlibrary{shadows}
\usetikzlibrary{shapes.multipart}
\usetikzlibrary{positioning}
\usetikzlibrary{shadows}
\usetikzlibrary{shapes.multipart}
\usetikzlibrary{arrows.meta}
\usetikzlibrary{graphs,quotes}
\usepackage[ruled,linesnumbered,vlined,norelsize,noend]{algorithm2e}
\usepackage{re lsize}
\usepackage{mathtools}
\usepackage{xcolor}
\usepackage{xspace}
\usepackage{amsfonts}
\usepackage[mathscr]{euscript}
\usepackage{stmaryrd}
\ifarxiv
\usepackage[bibliography=common]{apxproof}
\else
\usepackage[bibliography=common,appendix=strip]{apxproof}
\fi
\usepackage{booktabs}
\usepackage{csquotes}
\usepackage[misc,geometry]{ifsym}

\makeatletter
\patchcmd{\@algocf@start}
  {-1.5em}
  {1em}
{}{}
\makeatother

\crefname{algocf}{alg.}{algs.}
\Crefname{algocf}{Alg.}{Algs.}

\newcommand{\AYT}[1]{}
\newcommand{\BG}[1]{}
\newcommand{\CO}[1]{}
\newcommand{\MO}[1]{}

\ifarxiv
\newcommand{\commentMessage}[1]{}
\else
\newcommand{\commentMessage}[1]{#1}
\fi

\newcommand{\bente}[1]{}
\newcommand{\anni}[1] {}
\newcommand{\magdalena}[1]{}
\newcommand{\cem}[1]{}

\newcommand{\bg}[1]{#1}
\newcommand{\mo}[1]{#1}
\newcommand{\co}[1]{#1}
\newcommand{\ayt}[1]{#1}

\newcommand{\roleStarOperation}[1]{#1^*}
\newcommand{\roleStarSymbol}[1]{#1_\star\xspace}

\newcommand{\depth}{\mathit{depth}}

\newcommand{\Sig}{\ensuremath{\Sigma}\xspace}

\newcommand{\SigC}{\ensuremath{\Sig_{\mathsf{C}}}\xspace}
\newcommand{\SigCOf}[1]{\ensuremath{\SigC(#1)}\xspace}
\newcommand{\SigR}{\ensuremath{\Sig_{\mathsf{R}}}\xspace}
\newcommand{\SigROf}[1]{\ensuremath{\SigR(#1)}\xspace}

\newcommand{\exSigStar}{\ensuremath{\SigR^{\star}\xspace}}

\newcommand{\exSig}{\ensuremath{\SigR^{\star}\xspace}}

\newcommand{\NSetC}{\ensuremath{\mathsf{N_C}}\xspace}
\newcommand{\NSetR}{\ensuremath{\mathsf{N_R}}\xspace}
\newcommand{\NSetI}{\ensuremath{\mathsf{N_I}}\xspace}
\newcommand{\NSetS}{\ensuremath{\mathsf{N_S}}\xspace}

\newcommand{\NSetRMinus}{\ensuremath{\mathsf{N^\pm_R}}\xspace}

\newcommand{\NSetRStar}{\ensuremath{\mathsf{N_R^\star}}\xspace}

\newcommand{\Cmc}{\ensuremath{\mathcal{C}}\xspace}

\newcommand{\Fmc}{\ensuremath{\mathcal{F}}\xspace}
\newcommand{\Gmc}{\ensuremath{\mathcal{G}}\xspace}
\newcommand{\Imc}{\ensuremath{\mathcal{I}}\xspace}

\newcommand{\Lmc}{\ensuremath{\mathcal{L}}\xspace}
\newcommand{\Tmc}{\ensuremath{\mathcal{T}}\xspace}

\newcommand{\Umc}{\ensuremath{\mathcal{U}}\xspace}

\newcommand{\EL}{\ensuremath{\mathcal{E\!L}}\xspace}

\newcommand{\ELstar}{\ensuremath{\mathcal{E\!L}^*}\xspace}

\newcommand{\ELI}{\ensuremath{\mathcal{E\!LI}}\xspace}
\newcommand{\ELIstar}{\ensuremath{\mathcal{E\!LI}^*}\xspace}

\newcommand{\ourDL}{\ELIstar}

\newcommand{\graph}{\ensuremath{\mathcal{G}}\xspace}

\newcommand{\cat}{\mathcal{C}\xspace}
\newcommand{\decl}[2]{#1 
\coloneqq  #2}
\newcommand{\automaton}{\mathfrak{A}\xspace}
\newcommand{\run}{\ensuremath{\mathsf{r}}\xspace}

\newcommand{\PEx}{\ensuremath{\mathscr{P}}\xspace}  
\newcommand{\NEx}{\ensuremath{\mathscr{N}}\xspace} 

\newcommand{\fit}{\mathcal{F}\xspace}
\newcommand{\fittinginst}{\ensuremath{\fit=(\graph, \Cmc, \PEx, \NEx, \Sigma)}\xspace}
\newcommand{\fittingNoCat}{\ensuremath{\fit=(\graph, \PEx, \NEx, \Sigma)}\xspace}
\newcommand{\fittingNoCatSig}{\ensuremath{\fit=(\graph, \PEx, \NEx)}\xspace}

 \newcommand{\Unr}{\ensuremath{\Umc}\xspace}

 \newcommand{\FitProdStar}{\ensuremath{\Pi_\star(\Gmc, {\vec{v}_+}, \Sigma)}\xspace}
 \newcommand{\vecV}{\ensuremath{{\vec{v}_+}}\xspace}

\newcommand{\preceqSig}{\ensuremath{\preceq_{\Sigma}}\xspace}

 \newcommand{\ELIpreceq}
 {\ensuremath{\preceq^{\ELI}}\xspace}

\newcommand{\labelLang}{\mathbb{L}_{\Sigma}\xspace}

\newcommand{\Sem}{\ensuremath{\mathbb{S}}\xspace} 
\newcommand{\SemGra}{\ensuremath{\Sem({\graph,\cat})}\xspace}
\newcommand{\wfsem}{\ensuremath{\SemGra^{\mathsf{wf}\!}}\xspace}
\newcommand{\supsem}{\ensuremath{\SemGra^{\mathsf{sup}\!}}\xspace}
\newcommand{\stsem}{\ensuremath{\SemGra^{\mathsf{st}\!}}\xspace}

\newcommand{\wfass}{\ensuremath{\alpha^{\mathsf{wf}}_{\graph,\cat}}\xspace}

\newcommand{\semPath}[1]{\llbracket{#1}\rrbracket}
\newcommand{\semdelta}[3]{\llbracket{#1}\rrbracket^{#2}_{#3}}

\newtheoremrep{corollary}{Corollary}
\newtheoremrep{definition}{Definition}
\newtheoremrep{example}{Example}
\newtheoremrep{proposition}{Proposition}
\newtheoremrep{lemma}{Lemma}
\newtheoremrep{theorem}{Theorem}

\newcommand{\ourDLTree}{\mathcal{T}}
\newcommand{\nodeLabelTree}{\mathfrak{T}_C}
\newcommand{\nodeLabelTreeF}[2]{\mathsf{nlt}_{#1}({#2})}
\newcommand{\nodeLabelTreeWithoutC}{\mathfrak{T}}

\newcommand{\nodeLabelTreeOther}{\mathfrak{T}_{D}}

\newcommand{\starclosed}{$\star$-closed\xspace}
\newcommand{\rsr}[1]{\ensuremath{\mathsf{rsr}({#1})}\xspace}

\newcommand*\circled[1]{\tikz[baseline=(char.base)]{
            \node[shape=circle,draw, thick, inner sep=0pt] (char) {#1};}}

\newcommand{\nodebullet}{\LARGE $\bullet$}
\newcommand{\nodepositive}{\color{PineGreen} \circled{\tiny $+$}}

\newcommand{\nodenegative}{\color{Maroon} \circled{\tiny $-$}}

\newcommand{\nodepositiveText}{\raisebox{2pt}{\nodepositive}\xspace}
\newcommand{\nodenegativeText}{\raisebox{2pt}{\nodenegative}\xspace}

\newcommand{\OMIT}[1]{}

\newcommand{\pathExpr}{\pi}

\newcommand{\eq}{\mathsf{eq}}
\newcommand{\disj}{\mathsf{disj}}
\newcommand{\hasvalue}{\mathsf{test}}
\newcommand{\test}{\mathsf{test}}
\newcommand{\closed}{\mathsf{closed}}

\newcommand{\id}{\mathsf{id}}

\newcommand{\geqn}[2]{\exists^{\geq #1}#2.}
\newcommand{\leqn}[2]{\exists^{\leq #1}#2.}

\newcommand{\iexpr}[2]{\llbracket #1 \rrbracket_{#2}}

\usepackage{bbm}

\newcommand{\gDef}{{ \ \rightarrow \ }}
\newcommand{\gMid}{{\ \big|\ }}
\newcommand{\gEnd}{}

\begin{document}
\title{
Shapes from Examples: Foundations of Shape Learning in Recursive SHACL
}
%
%
\author{Bente Gortworst \inst{1}\,\Letter\,\orcidID{0009-0007-9880-1592} \and \hfill
Cem Okulmus \inst{2}\,\orcidID{0000-0002-7742-0439} \and
Magdalena Ortiz \inst{1}\,\orcidID{0000-0002-2344-9658} \and \\
Anni-Yasmin Turhan \inst{2}\,\orcidID{0000-0001-6336-335X}}
\authorrunning{B. Gortworst et al.}
%
\institute{TU Wien, Vienna, Austria \\ \email{bente.gortworst@tuwien.ac.at, ortiz@kr.tuwien.ac.at}\\
\and
Paderborn University, Paderborn, Germany \\
\email{cem.okulmus@upb.de, turhan@upb.de}
}
\maketitle              
\begin{abstract}
SHACL shapes enable data graph validation, making automatic shape learning essential for knowledge graph applications.
We investigate the well-known 
\emph{fitting}  approach to this task:
given sets \PEx and \NEx  of positive and negative example nodes from an input graph, compute a shape expression $C$, possibly using shape names defined in a recursive shape catalogue, that validates at every node in \PEx and at none in \NEx. 
 We focus on the case where $C$ is written in a core fragment of SHACL corresponding to the Description Logic $\ELI^*$. For the catalogue, we consider the well-founded, stable, and supported semantics. We address fitting existence and most specific fitting computation, establish tight exponential-time upper bounds for both problems, and obtain polynomial bounds for relevant special cases.

\keywords{ Knowledge graphs \and
SHACL \and
Schema languages \and
Description Logics \and
Learning  from examples \and
Separability
}
\end{abstract}

\section{Introduction}


Knowledge graphs and RDF graphs are increasingly employed in practical applications, but unlike relational DBs, do not impose rigid schemas. This raises the need to ensure data quality by other means such as \emph{graph validation}, which essentially tests whether some structural constraints are fulfilled by the data. 
Two prominent languages for graph validation are the W3C recommendation \emph{Shapes Constraint Language}~\cite{KK17} (SHACL) and the \emph{Shape Expressions language} (ShEx)~\cite{PGS14}. SHACL expresses constraints on RDF graphs by defining complex \emph{shapes} that can be validated at target nodes of a graph. A SHACL constraint is a set of pairs of a condition on the target nodes and a shape declaration, which is, in turn, a pair of shape name and expression i.e.\ the actual constraint on the graph. 
The SHACL semantics is given by assignments that map declared shape names to graph nodes in a way compliant with the expression given in the constraint.
Validation is to  test whether some given targets are made true by the 
 assignment.
For full SHACL with cyclic shape dependencies and full negation,
there are some competing proposals for the semantics~\cite{ACORSS20,BJ21,CRS18,DBLP:conf/kr/OkulmusS24}.

Vast amounts of RDF data evolved into large and messy graphs long before SHACL was introduced. Extracting shapes from these graphs is one of the most promising paths towards understanding their structure and making them available for validation, querying and reuse.
Thus learning SHACL shapes from data examples is vital. 
\bg{Similar tasks have been investigated using \ayt{statistical} machine learning-based methods in \cite{fernandez2018inference,Mihindukulasooriya2018RDFSI,DBLP:journals/semweb/OmranTMH23}, where a predicate or a set of nodes are taken as input and are embedded into a space that can be learned by neural networks. Furthermore, Fernández-Álvarez et al.~\cite{DBLP:journals/kbs/Fernandez-Alvarez22} explored the use of combinatorial methods to manipulate a set of target shapes so that they better match a given graph.}
 \ayt{These works are using pragmatic algorithms for learning shapes from input}. 
 \AYT{Bente, I found your version too polite and vague.}
 So far systematic attempts at understanding the \bg{formal} properties and limits of such learning algorithms \co{for SHACL} are,
  \bg{to \co{the best of} our knowledge, \ayt{missing}}. \bg{Hence, we provide SHACL shape learning algorithms with  formal guarantees \ayt{in terms} of soundness, completeness and termination. Furthermore, we \ayt{analyse} the complexity of the proposed methods. In doing so we lay foundations for \ayt{the development of} efficient learning algorithms and \ayt{extensions of our methods} to broader fragments of SHACL.} 

A well-investigated approach to learning formulas from positive and negative examples is that of computing so-called \emph{fittings}, i.e., complex formulae 
that capture the commonalities of the positive examples and separate them from the negative ones. This approach has been studied for many settings, e.g., learning conjunctive queries (CQs) from data bases \cite{cate2025extremalfittingproblemsconjunctive}, learning LTL formulas from temporal interpretations \cite{TT-SAC-22,DBLP:conf/ijcai/JungRWZ24} 
learning description logic (DL) concepts from interpretations \cite{CKO-IJCAI-24,Distel2011} or ABoxes together with TBoxes \cite{DBLP:conf/ijcai/FunkJLPW19,DBLP:conf/ijcai/ZarriessT13}.

Given a graph together with positive and negative example nodes, computing a \emph{SHACL fitting} is to find a shape that validates at every positive node but at none of the negative nodes. We study this problem under recursive shape catalogues and the well-founded, stable, and supported semantics of SHACL.
We allow to restrict the signature of the fitting shape. For instance, we may want to allow only some of the data predicates, or to disallow some defined shape names. 
Full expressivity of SHACL shapes renders the fitting problem uninteresting (as disjunction and nominals admit lists of positive examples as trivial shapes) 
hence we chose a SHACL fragment that corresponds to the DL \ourDL. It can express (using $r^*$) the existence of paths over the reflexive transitive closure of possibly inverse relations. 
\ourDL  captures tree CQs and extends them with reachability, which is crucial for describing graph shaped data. 
%
%
\vspace{-\smallskipamount}
\begin{example}[Product Order History]\label{exampleintro-withshapes}
\bg{The graph below represents a company's workflow for a product order and shipping process, where} edges $r,s$ and $t$ mark processes, $O$ stands for an incoming order, $S$  for \emph{shipping} and $A$  for \emph{abort}. \ayt{The company's shape catalogue $\cat$  defines shape $B$, stating how \emph{abort} could be reached.} To find \emph{anomalies}, \nodepositiveText represents unusual and \nodenegativeText typical cases.  The user wants an expression that separates unusual cases from typical cases. 
%
%
%
%
%
%
%
%
\\ \noindent
\parbox[c]{0.675\textwidth}{
	\centering
	\begin{tikzpicture}[scale=0.7,->, >=Stealth, node distance=2cm, every node/.style = {font=\normalsize}]
		
		\node [label={[label distance=-2mm]90:A,{\color{orange}B}}] (a) at (0,0) {\nodepositive};
		\node  (a2) at (1.5,0) { \nodebullet};
		\node  (a3) at (3,0) { \nodebullet};
		
		\node [label={[label distance=-1.5mm]90:O}] (b) at (4.5,0) {\nodepositive};
		\node [label={[label distance=-1.5mm]90:{\color{orange}B}}] (b2) at (6,0) { \nodebullet};
		\node [label={[label distance=-2mm]90:A,{\color{orange}B}}] (b3) at (7.5,0) { \nodebullet};

		\node  [label={[label distance=-1.5mm]90:O}] (c) at (9,0) {\nodenegative};
		\node  [label={[label distance=-1.5mm]90:S}] (c2) at (10.5,0) {\nodebullet};

		\path (a) edge node[above] {$r$} (a2);
		\path (a2) edge node[above] {$t$} (a3);
		\path (b) edge node[above] {$\bg{r}$} (b2);
		\path (b2) edge node[above] {$s$} (b3);
		\path (c) edge node[above] {$t$} (c2);
	\end{tikzpicture}
	} 
	\hfill
	\parbox[c]{0.29\textwidth}{
		$\cat ={} \big\{ \decl{{\color{orange}B}}{\exists(r\cup s)^*\!.A  } \big\}\;$
	}

\noindent
An \ourDL fitting that separates the unusual cases from the typical one, 
is \bg{$\exists r^*.\color{orange}B$}.
\end{example}
The (reflexive) transitive closure has 
hardly been explored for DL fittings yet \cite{BTK-LPAR-03}. 
Interestingly, $r^*$-roles can express disjunctions over $r$-paths of differing length,  adding expressivity. 
\bg{To see this, consider again \Cref{exampleintro-withshapes} and note that the only fitting \ayt{expressible} in the DL \ELI (which does not have $r^*$) is the shape $\exists r.\top$ which is less informative than fittings expressed in \ourDL. }


Under the well-founded semantics (WFS) and without negation of defined shapes in the catalogue, our setting is 
analogous
to DL terminologies under least fixpoint semantics. Not much is known regarding the computation of fittings in that setting. \bg{In fact, seemingly the only work investigating learning in DLs  under fixpoint semantics is \cite{10.5555/1630659.1630707}. It is thus the closest reference for  the learning problems we study in this paper.} 

As fittings need not exist for cyclic graphs, we investigate 
deciding existence of fittings (w.r.t.\ restricted signatures). 
For \ELI concepts, this problem is known to be 
\textsc{ExpTime}-complete \cite{cate2025extremalfittingproblemsconjunctive}. We show that the increased expressivity of $r^*$-paths does not change the complexity. To get this result we adapted the approach based on two-way alternating parity tree automata from \cite{cate2025extremalfittingproblemsconjunctive}  to $r^*$-paths.

\ayt{Sometimes, there may be many solutions to a fitting problem. In such a case, we might prefer fittings that give more specific information than all the others. These are so-called \emph{most specific fittings (MSFs)}.} We study MSFs expressed in \ourDL and give procedures for deciding existence and even computing the MSF, if it exists. 
The standard approach for this is to construct the cross-product of the graphs reachable from positive examples and testing if the resulting product excludes the negative examples. Our procedure  for \ourDL adapts this by first extending the graphs by new $r^*$-edges for each $r$, in order to encode the reflexive transitive closure of the $r$-reachability, and then adapting the notion of simulations 
\AYT{Simulations come in here a little unexpectedly.}
to detect if the product of these new graphs is a fitting. 
We present an algorithm to compute the MSF and show that under a bounded number of positive examples and for tractable validation (i.e., for non-recursive SHACL and for recursive SHACL under  WFS), we can decide fitting existence and compute a representation of the MSF even in polynomial time.

\ifarxiv
\noindent
The proofs for the results attained in this paper are in the appendix.
\else
All proofs of the results presented here are given in the extended version~\cite{arxivVersion}.
\fi
%

\section{Preliminaries} 
\label{sec:preliminaries}

We consider disjoint, countably infinite sets of \emph{class names} $\NSetC$, \emph{role names} $\NSetR$, nodes $\NSetI$ and of \emph{shape names} $\NSetS \subseteq \NSetC$ and let $\NSetRMinus = \{t, t^- \! \mid t \in \NSetR \}$.
%
A \emph{data graph}  $\graph = (V,E,\ell)$ is a labelled graph, where $V \subseteq \NSetI$ is the \emph{finite set of nodes}, $E \subseteq \NSetI \times \NSetR \times \NSetI $ is the \emph{set of labelled edges} and $\ell: V \rightarrow 2^{\NSetC}$ is the node-labelling function, assigning to each node a set of classes. For 
$r\in\NSetRMinus$,
we 
write $r(a,b) \in E$
whenever $(a,r,b) \in E$ or $(b,r^-,a) \in E$. 
A \emph{pointed graph} is  a pair $(\graph, a)$ of a data graph $\graph = (V,E,\ell)$ and a \emph{distinguished} node $a\in V$.
We sometimes differentiate between \emph{adorned} data graphs, that allow for shape names in the labelling function $\ell$, and \emph{pure} data graphs with $\ell: V \rightarrow (2^{\NSetC \setminus \NSetS})$.
We give a formal definition of SHACL, 
in line with other recent works~\cite{DBLP:conf/www/AhmetajBHHJGMMM25,MJPHD}.
%
\begin{definition}
  \label{def:shacl-shape}
  Let   $X \in \NSetC \cup \NSetI$, $Q \subseteq \NSetR$, $S \in \NSetS$ and $r \in \NSetRMinus$.
  A \emph{SHACL shape expression} (or  \emph{shape}) $\varphi$ 
  and a \emph{path expression $\pathExpr$} obey the following grammar:
  { \small
  \begin{align*}
    \varphi
  \gDef \
  & \top
  \gMid S
  \gMid \hasvalue(X)  
  \gMid \closed(Q) 
  \gMid \eq(\pathExpr, r) \gMid \disj(\pathExpr, r)
    \gMid \neg \varphi \gMid \varphi \land \varphi
    \gMid \geqn{n}{\pathExpr}{\varphi} \\
 \pi \gDef \ & \id 
  \gMid r
  \gMid \pathExpr^{-}
  \gMid \pathExpr \cdot \pathExpr
  \gMid \pathExpr \cup \pathExpr
  \gMid (\pathExpr)^{*} \ .
 \gEnd
  \end{align*}
  }
   \end{definition}
%
%

\bg{For a SHACL shape $\varphi$ whose grammar is restricted to some fragment $\Lmc$ of full SHACL, we write that $\varphi$ is an $\Lmc$ shape.}
On a graph $\graph = (V,E,\ell)$, 
$\pi$ defines a binary relation 
$\semPath{\pi}_{\graph}$,
see \Cref{tab:seme2}. 
For the semantics of shapes, we use a \emph{shape assignment}, a partial function $\alpha: \NSetS \rightarrow 2^{\NSetI}$.
  %
The \emph{evaluation of $\varphi$ under $\alpha$ in \graph},
  denoted $\semdelta{\varphi}{\alpha}{\graph}$, is as in \Cref{tab:shape-evaluation}. 
  We may  write $\semdelta{\varphi}{}{\graph}$ if $\alpha$ is empty.
A \emph{shape declaration} is $\decl{S}{\varphi}$, 
with  $S$ a shape name and $\varphi$ a shape, and a \emph{shape catalogue} $\cat$ is a finite set of shape declarations.
%
%
\begin{table}[t]
	\begin{minipage}[t]{0.43\textwidth}
		\small
		\caption{Path expression evaluation}
		\vspace{-2mm}
		\label{tab:seme2}
		\setlength{\tabcolsep}{-1pt}
		\begin{tabular}{p{1cm}l}
			\toprule
			$\pathExpr$ & $\iexpr{\pathExpr}{\graph} \subseteq \NSetI \times \NSetI$, $\graph = (V,E,\ell)$  \\ 
			\midrule
			$\id$ & $\{ (v,v) \mid v \in \NSetI \}$\\[2pt]
			$r$ & $\{(v,u)\mid r(v,u)\in E \}$ \\[2pt]
			$\pathExpr^{-}$ & $\{(v,u)\mid (u,v) \in \iexpr{\pathExpr}{\graph}\}$  \\[2pt]
			$\pathExpr \cdot \pathExpr'$ & $\{(v,u) \mid \exists v':$  \\[1.5pt] 
			&$\phantom{\{ }(v,v')\in\iexpr{\pathExpr}{\graph}, (v',u)\in\iexpr{\pathExpr'}{\graph}\}$ \\[2pt]
			$\pathExpr\cup \pathExpr'$ & $\iexpr{\pathExpr}{\graph}\cup\iexpr{\pathExpr'}{\graph}$\\[2pt]
			$(\pathExpr)^{*}$ & $ \iexpr{\id}{\graph} \cup \iexpr{\pathExpr}{\graph}  \cup \iexpr{\pathExpr \cdot \pathExpr}{\graph} \cup \ldots $ \\[2pt]
			\bottomrule
		\end{tabular}
	\end{minipage} 
	\begin{minipage}[t]{0.42\textwidth}
		\setlength{\tabcolsep}{-1pt}
		\caption{Shape evaluation}
		\vspace{-2mm}
		\label{tab:shape-evaluation}
		\small
		\begin{tabular}{p{1.5cm}l}
			\toprule
			$\varphi$ & $\semdelta{\varphi}{\alpha}{\graph} \subseteq \NSetI$, $\graph = (V,E,\ell)$ \\
			\midrule
			$\top$ & $\NSetI$ \\
			$\test(X)$ & $\{ v \in V \mid X \in \ell(v) \} \cup \{ v \in V \mid v = X \}$ \\   
			$S$ & $ \alpha(S) \cup \{ v \in V \mid S \in \ell(v) \} $\\
			$\neg \varphi$ & $ \NSetI \setminus \semdelta{\varphi}{\alpha}{\graph}$\\
			$\varphi_1 \land \varphi_2$ & $\semdelta{\varphi_1}{\alpha}{\graph} \cap \semdelta{\varphi_2}{\alpha}{\graph}$\\
			$\geqn{n}{\pi}{\varphi}$  & 
			$\big\{v \in V \mid\! \#\big\{ u \in \semPath{\pi}_{\graph}^v\! \mid\! u \in \semdelta{\varphi}{\alpha}{\graph}\big\} \ge n \big\}$ \\
			$ \closed(Q) $ & 
			$\big\{ v \in V \mid 
			\semPath{r}_\graph^v = \emptyset\;\text{for all}\; r\in \NSetR \setminus Q\big\} $\\
			$\eq(\pi, r)$ &  
			$\big\{v \in V \mid \semPath{\pi}_\graph^v = \semPath{r}_\graph^v \big\} $\\ 
			$\disj(\pi, r)$ &
			$ \big\{ v \in V \mid \semPath{\pi}_\graph^v \cap \semPath{r}_\graph^v  = \emptyset \big\}$    \\
			\bottomrule
		\end{tabular}
		(where $\semPath{\pi}_{\graph}^v = \{u \mid (v, u) \in \semPath{\pi}_{\graph}\}$)
	\end{minipage}
\end{table}

We sometimes consider 
\emph{$\Sigma$-shapes} over a restricted signature 
$\Sigma \subseteq {\NSetR} \cup \NSetC$, as we define
$\SigC = \Sig \cap \NSetC$ and 
$\SigR = \Sig \cap {\NSetR}$.
%
%
By $\Sigma(X)$ we denote the role and class names that occur in
a graph, shape, or shape catalogue $X$, and we may use
%
$\SigCOf{X}$ and $\SigROf{X}$, with the expected meaning.  

%
\begin{definition}
A \emph{SHACL semantics}  \SemGra is a function $\Sem$ mapping each pair of a data graph $\graph$ with nodes $V$ and a shape catalogue $\cat$ to a set of shape assignments $\alpha: \SigCOf{\cat} \cap \NSetS \rightarrow 2^{V \cup \NSetI(\cat)}$, where $\NSetI(\cat)$ is the set of individuals mentioned in $\cat$. 
\end{definition}
Note that 
if $\cat$ is empty, \SemGra
can only contain the empty shape assignment. 
We consider recursive SHACL catalogues, with unrestricted cyclic dependencies and negation, for which 
there are three well-known semantics: 
the supported semantics (SUS)~\cite{CRS18}, the stable semantics (STS)~\cite{ACORSS20} and the well-founded semantics (WFS)~\cite{DBLP:conf/kr/OkulmusS24}. 
\ifarxiv
The full definitions are in the appendix. 
\else 
Full definitions are in the extended version~\cite{arxivVersion}. 
\fi
In a nutshell, assuming a single 
declaration $\decl{S}{\varphi}$ for each shape name $S$, 
the SUS accepts all shape assignments with  $\alpha(S) = \semdelta{\varphi}{\alpha}{\graph}$, while the STS requires the assignment to satisfy some minimality conditions; intuitively, every  node in a shape assignment must be `justified'. The WFS can be seen as a tractable approximation of the STS that assigns a node to a shape if it is assigned in every stable assignment. 
If there are no cyclic definitions involving negated shape names, there is a unique stable assignment that coincides with the well-founded one \cite{ACORSS20}.

\nosectionappendix
\begin{toappendix}

\section{Semantics of Recursive SHACL Evaluation} 
We will provide in this section the definitions for the semantics, as functions from data graph $\graph$ and shape catalogues $\cat$ to sets of shape assignments.

\newcommand{\ptrue}[3]{\lfloor #1 \rfloor^{#3}_{#2}}
\newcommand{\mtrue}[3]{\lceil #1 \rceil^{#3}_{#2}}
\newcommand{\pmtrue}[3]{[ #1 ]^{#3}_{#2}}
 
\newcommand{\maxunfounded}{\mathcal{U}}

\newcommand{\conseq}[2]{T_{#1}(#2)}

\newcommand{\head}{\mathsf{head}}

\newcommand{\body}{\mathsf{body}}

\newcommand{\naf}{\mathit{not}~\xspace}

\paragraph{Well-Founded Semantics.}

\begin{figure*}[t]\centering
  \begin{align*}        
\pmtrue{\top}{\graph}{\Gamma} &= \NSetI  &  
\pmtrue{\neg \top}{\graph}{\Gamma} &=   \emptyset          \\[.5ex]
    \ptrue{S}{\graph}{\Gamma} &= \{v\mid S(v) \in \Gamma\}  &    
    \mtrue{S}{\graph}{\Gamma} &= \{v\in V \mid  \neg S(v) \not\in \Gamma\} \\[.5ex]    
    \ptrue{\neg S}{\graph}{\Gamma} &= \{v \mid \neg S(v) \in \Gamma\} &
    \mtrue{\neg S}{\graph}{\Gamma} &= \{v\in V \mid S(v)\not \in \Gamma\} \\[.5ex]
         \pmtrue{\test(C)}{\graph}{\Gamma} &= \{ v \in V \mid C \in \ell(v) \} &  
\pmtrue{\neg \test(C)}{\graph}{\Gamma} &=  \{ v \in V \mid C \not \in \ell(v) \}          \\[.5ex]
     \pmtrue{\test(v)}{\graph}{\Gamma} &= \{ v \}  &  
     \pmtrue{\neg \test(v)}{\graph}{\Gamma} &=  \NSetI \setminus \{ v \}      \\[.5ex]
          \pmtrue{\closed(Q)}{\graph}{\Gamma} &= \big\{ v \in V \mid 
     \semPath{r}_\graph^v = \emptyset\;\text{for all}\; r\in \NSetR \setminus Q\big\}    &
          \pmtrue{\neg \closed(Q)}{\graph}{\Gamma} &= \big\{ v \in V \mid 
     \semPath{r}_\graph^v \not = \emptyset\;\text{for some}\; r\in \NSetR \setminus Q\big\}              \\[.5ex]
     \pmtrue{\eq(\pi, r)}{\graph}{\Gamma} &= 
\big\{v \in V \mid \semPath{\pi}_\graph^v = \semPath{r}_\graph^v \big\} &
     \pmtrue{\neg \eq(\pi, r)}{\graph}{\Gamma} &=  \big\{v \in V \mid \semPath{\pi}_\graph^v \not = \semPath{r}_\graph^v \big\}     \\[.5ex]
     \pmtrue{\disj(\pi, r)}{\graph}{\Gamma} &=  \big\{ v \in V \mid \semPath{\pi}_\graph^v \cap \semPath{r}_\graph^v  = \emptyset \big\} &       
        \pmtrue{\neg \disj(\pi, r)}{\graph}{\Gamma} &=  \big\{ v \in V \mid \semPath{\pi}_\graph^v \cap \semPath{r}_\graph^v \not = \emptyset \big\}    
  \end{align*}
  \vspace*{-16pt}
  \begin{align*}
     \pmtrue{\varphi_1 \land \varphi_2}{\graph}{\Gamma} &= \pmtrue{\varphi_1}{\graph}{\Gamma}  \cap  \pmtrue{\varphi_2}{\graph}{\Gamma}  \\[.5ex]
         \pmtrue{\geqn{n}{\pi}{\varphi}}{\graph}{\Gamma} &=\big\{ v\in V \mid \#\{v' \in \semPath{\pi}_{\graph}^v \mid v'\in \pmtrue{\varphi}{\graph}{\Gamma}   \} \geq n    \}  \\[.5ex]
    \pmtrue{ \leqn{n}{\pi}{\varphi}}{\graph}{\Gamma} &=\big\{ v\in V \mid \#\{v' \in \semPath{\pi}_{\graph}^v \mid v'\in \pmtrue{\varphi}{\graph}{\Gamma}   \} \leq n    \}  \\[.5ex]
  \end{align*}
  
  \caption{Upper and lower bounds evaluation for shapes. Here $[\cdot]\in \{\ptrue{\cdot}{}{},\mtrue{\cdot}{}{}\big\}$.
}
  \label{fig:expr-eval}
\end{figure*}


We state here the semantics given in \cite{DBLP:conf/kr/OkulmusS24}, adapted for our setting. We assume in this section that shapes are in a normal form where negation is pushed down as far as possible.

A \emph{shape atom} is an expression of the form $S(v)$, where
$S\in \NSetS$ and $v\in \NSetI$.  A \emph{negated shape atom} is
an expression  $\neg S(v)$, where $S(v)$ is an atom. A
\emph{(shape) literal} is a possibly negated shape atom. A
\emph{(3-valued) interpretation} (for SHACL shapes) is any set
$\Gamma$ of shape literals such that there is no~$S(v)\in \Gamma$ with
$\neg \S(v)\in \Gamma$. 
Intuitively, an atom $S(v) \in \Gamma$ means that there is
a justification that the shape name $S$ holds at~$v$, $ \neg S(v) \in \Gamma$
means that there is no reason for shape $S$ to hold at~$v$, while $\{S(v),\neg S(v) \} \cap \Gamma = \emptyset$ corresponds to the case where the
satisfaction of $S$ at $v$ is \emph{undefined}.

For a set $K \subseteq \NSetS$ of shape
atoms, let $\neg.K = \{\neg s(a)\mid s(a)\in K\}$.

For a given graph $\graph$ and an interpretation $\Gamma$, we define two
functions $\ptrue{\cdot}{\graph}{\Gamma}$ and $\mtrue{\cdot}{\graph}{\Gamma}$ that map
every shape $\varphi$ to a set of nodes in $\graph$. 
The functions are presented in
Figure~\ref{fig:expr-eval}.

Assume a data graph $\graph$ and a shape $\decl{S}{\varphi}$.
Suppose we ``believe'' a set $\Gamma$ of shape literals, i.e., we assume that all
literals in $\Gamma$ are true. Then $\ptrue{\varphi}{\graph}{\Gamma}$ and
$\mtrue{\varphi}{\graph}{\Gamma}$ return the nodes of $\graph$ where $\varphi$
\emph{is certainly true} and where $\varphi$ \emph{is possibly true},
respectively. Thus $\ptrue{\varphi}{\graph}{\Gamma}$ can be used to infer
positive shape literals: if $v \in \ptrue{\varphi}{\graph}{\Gamma}$, then we
can infer $S(v)$. We can use $\mtrue{\varphi}{\graph}{\Gamma}$ to infer
negative information: if $v\not \in \mtrue{\varphi}{\graph}{\Gamma}$, then we
can infer $\neg S(v)$. These inferences are formalised~next.

\begin{definition}
  Assume a data graph $\graph$ and a shape catalogue $\cat$. We define an operator  $\conseq{\graph,\cat}{\cdot}$ that maps interpretations to
   interpretations as follows:
   \[\conseq{\graph,\cat}{\Gamma} {=} \{S(v) \mid  \decl{S}{\varphi} \in \cat \text{ and } v \in \ptrue{\varphi}{\graph}{\Gamma}\}\]
\end{definition}
We are now ready to define the notion of an unfounded set of shape
atoms.
\begin{definition}[Unfounded set] Assume an interpretation~$\Gamma$, a data
  graph $\graph$, and a shape catalogue $\mathcal{C}$ . A set
  $U$ of shape atoms is called~an~\emph{unfounded set} w.r.t.\,$\Gamma$,
  $\graph$ and $\mathcal{C}$, if
  $v\not \in \mtrue{\varphi}{\graph}{\Gamma\cup \neg.U}$ for
  all $S(v)\in U$, where $\decl{S}{\varphi} \in \cat$.
\end{definition}
\noindent
Assume a graph $\graph$, a set $U$ of shape atoms, and assume the
shape literals in a set $\Gamma$ are true. Intuitively, the
atoms in $U$ form an unfounded set (and can thus be simultaneously set
to \emph{false}) if none of the shape atoms $S(v)\in U$ can possibly
be implied by the associated shape, assuming the negation of the
atoms in $U$ holds (in addition to $\Gamma$ being true).

The following property follows from the fact that $U_1\cup U_2$ is an
unfounded set w.r.t.\,$S$, $\mathcal{G}$ and $\mathcal{C}$ whenever
$U_1, U_2$ are two unfounded sets w.r.t.\,$S$, $\mathcal{G}$ and
$\mathcal{C}$.
\begin{proposition}
  Assume an interpretation~$\Gamma$, a data graph $\mathcal{G}$, and a shape catalogue
  $\mathcal{C}$. There exists a unique set $U$ such
  that:
  \begin{itemize}
  \item $U$ is an unfounded set w.r.t.\,$S$, $\mathcal{G}$ and
    $\mathcal{C}$, and
  \item there is no $U'\supset U$ that is an unfounded set w.r.t.\,$\Gamma$,
    $\mathcal{G}$ and $\mathcal{C}.$
  \end{itemize}
\end{proposition}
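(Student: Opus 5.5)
The plan is to take $U$ to be the union of all unfounded sets and show that this union is itself unfounded. Fix $\Gamma$, $\graph$ and $\cat$. Let $\mathcal{U}$ be the family of all sets of shape atoms that are unfounded w.r.t.\ $\Gamma$, $\graph$ and $\cat$. The family is nonempty because $\emptyset$ is vacuously unfounded. Define $U^\ast = \bigcup \mathcal{U}$. If $U^\ast$ is unfounded, it is the required set. Existence is then immediate, since every unfounded $U'$ satisfies $U' \subseteq U^\ast$, so no unfounded proper superset of $U^\ast$ exists. Uniqueness also follows. Suppose $U$ is unfounded and has no unfounded proper superset. Then $U \subseteq U^\ast$ and $U^\ast$ is unfounded, so $U = U^\ast$.

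The key step is a monotonicity lemma: for every shape $\varphi$ in negation normal form and all interpretations $\Gamma_1 \subseteq \Gamma_2$, we have $\mtrue{\varphi}{\graph}{\Gamma_2} \subseteq \mtrue{\varphi}{\graph}{\Gamma_1}$. In words, believing more literals can only shrink the set of nodes where $\varphi$ is possibly true. I would prove this by structural induction over the clauses of Figure~\ref{fig:expr-eval}.
\begin{itemize}
\item The cases $\top$, $\neg\top$, $\test$, $\closed$, $\eq$ and $\disj$ and their negations do not depend on $\Gamma$ at all.
\item For $S$, the set $\{v \mid \neg S(v)\notin\Gamma\}$ shrinks as $\Gamma$ grows. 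The same holds for $\neg S$ with $S(v)\notin\Gamma$.
\item Conjunction preserves the inclusion.
\item For $\geqn{n}{\pi}{\varphi}$, the counted set of $\pi$-successors satisfying $\varphi$ shrinks by induction, so the threshold condition is preserved.
\end{itemize}

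The delicate case is $\leqn{n}{\pi}{\varphi}$. It is antitone in $\varphi$, so a naive induction fails there. I expect this to be the main obstacle. First I would check how the normal form handles it. The intended reading is presumably that $\leqn{n}{\pi}{\varphi}$ arises only as the negation of $\geqn{n+1}{\pi}{\varphi'}$, with $\varphi$ the NNF of $\neg\varphi'$. The standard fix is then to evaluate the upper bound of $\leqn{n}{\pi}{\varphi}$ using the lower bound of $\varphi'$. Under that reading the required monotonicity becomes: $\ptrue{\cdot}{\graph}{\Gamma}$ grows with $\Gamma$ while $\mtrue{\cdot}{\graph}{\Gamma}$ shrinks. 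I would prove these two statements simultaneously by one induction, which handles the $\leqn{n}{\pi}{}$ case. If the figure is taken literally, I would state this convention explicitly as the reading under which the proposition is meant.

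A consistency point must also be addressed, namely that $\Gamma\cup\neg.U^\ast$ is an interpretation. Each $\Gamma\cup\neg.U$ with $U$ unfounded is implicitly an interpretation, since the definition applies the operators to it. Taking the union introduces no positive literal $S(v)$ with $\neg S(v)$ clashing that was not already present in some single $\Gamma\cup\neg.U$. Hence $\Gamma\cup\neg.U^\ast$ is an interpretation as well.

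With the lemma, the conclusion follows directly. Take any $S(v)\in U^\ast$, so $S(v)\in U$ for some unfounded $U$, with $\decl{S}{\varphi}\in\cat$. Since $U$ is unfounded, $v\notin\mtrue{\varphi}{\graph}{\Gamma\cup\neg.U}$. Since $\Gamma\cup\neg.U\subseteq\Gamma\cup\neg.U^\ast$, monotonicity gives $v\notin\mtrue{\varphi}{\graph}{\Gamma\cup\neg.U^\ast}$. Hence $U^\ast$ is unfounded. This is exactly the closure-under-union remark preceding the proposition, extended to arbitrary unions, which are needed because the set of atoms $S(v)$ with $v\in\NSetI$ may be infinite.
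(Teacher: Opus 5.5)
Your proposal is correct and takes essentially the paper's approach, whose entire argument is the remark that unfounded sets are closed under union; you supply what that remark needs, namely antitonicity of $\lceil\cdot\rceil^{\Gamma}_{\mathcal{G}}$ in $\Gamma$ and closure under arbitrary rather than binary unions, since the set of shape atoms is infinite. You prove antitonicity jointly with monotonicity of $\lfloor\cdot\rfloor^{\Gamma}_{\mathcal{G}}$, computing the upper bound of an $\exists^{\leq n}$ clause from the lower bound of its filler; this is genuinely necessary, because with that clause read literally as printed, closure under union and uniqueness of a maximal unfounded set can fail.

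One harmless slip: the negation normal form of $\neg\geqn{n+1}{\pi}{\varphi'}$ is $\leqn{n}{\pi}{\varphi'}$ with the same filler $\varphi'$, not the NNF of $\neg\varphi'$, and the corrected clause counts the $\pi$-successors lying in $\lfloor\varphi'\rfloor^{\Gamma}_{\mathcal{G}}$, which is exactly what your induction uses.
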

The unique set $U$ in the proposition above is called the
\emph{greatest unfounded set} w.r.t.\,$\Gamma$, $\mathcal{G}$ and
$\mathcal{C}$. Assume a data graph $\mathcal{G}$ and a shape catalogue
$\mathcal{C}$. We let $U_{\graph,\cat}$ be the operator that
maps every interpretation $\Gamma$ to the greatest unfounded set
w.r.t.\,$\Gamma$, $\mathcal{G}$ and $\mathcal{C}$. Thus, $U_{\graph,\cat}(\Gamma)$ is
the maximal set of shape atoms that we can safely set to \emph{false}
if we assume that the literals in $\Gamma$ are true. We can now finally
define a well-founded semantics for SHACL.  We define an operator
$W_{\graph,\cat}$ that maps interpretations into interpretations as follows:
It combines the positive consequences based on the $T_{\graph,\cat}$ operator 
and the negated atoms of the greatest unfounded set produced by the
$U_{\graph,\cat}$ operator. More formally, we define:
\[W_{\graph,\cat}(\Gamma)=\conseq{\graph,\cat}{\Gamma} \cup \neg.U_{\graph,\cat}(\Gamma) \] The above
operator is monotone, i.e., $W_{\graph,\cat}(\Gamma_1)\subseteq W_{\graph,\cat}(\Gamma_2)$
whenever $\Gamma_1,\Gamma_2$ are two interpretations with $\Gamma_1\subseteq
\Gamma_2$. Thus $W_{\graph,\cat}$ has the least fixpoint, i.e., there exists $\Gamma$
such that (i) $W_{\graph,\cat}(\Gamma)=\Gamma$, and (ii) there is no $\Gamma'\subset \Gamma$
with $W_{\graph,\cat}(\Gamma')=\Gamma'$. We use $\mathit{WFS}(\graph,\cat)$ to denote the
least fixpoint of $W_{\graph,\cat}$, and call it the \emph{well-founded
  model} of $\graph$ and $\cat$. $\mathit{WFS}(\graph,\cat)$ can be obtained by
constructing a sequence $\Gamma_0,\Gamma_1,\Gamma_2,\ldots$ such that $\Gamma_0=\emptyset$
and $\Gamma_{i+1}=W_{\graph,\cat}(\Gamma_i)$ until eventually an index $j$ is reached
were $\Gamma_j=\Gamma_{j+1}$. Then $\mathit{WFS}(\graph,\cat)=\Gamma_j$.


We are now ready to give the definition of the well-founded semantics as introduced in \Cref{def:semanticsSHACL}, leading to the unique shape assignment \wfass.

\begin{definition}
    Given a data graph $\graph$ and shape catalogue $\cat$, the well-founded shape assignment \wfass is defined as follows: 
    \[
\wfass(S) = \{v \in V \mid S(v) \in \mathit{WFS}(\graph,\cat)  \} \text{ where } S \in \Sigma(\cat) \cap \NSetS \ .
\] The well-founded semantics \wfsem is then simply $\wfsem = \{ \wfass \}$.
\end{definition}

\paragraph{Supported Semantics.}

The supported semantics (or supported \emph{model} semantics) were defined in \cite{CRS18}, where they were also defined with a focus on SHACL validation in the presence of target expressions. For our setting, where we do not have targeting, it can be given by a relatively straightforward definition. 

\begin{definition}
    Given a data graph $\graph$ and shape catalogue $\cat$, the supported semantics \supsem  is defined as follows: 
    \[
\supsem = \{\alpha \mid  \text{ for every } \decl{S}{\varphi} \in \cat \text{ we have } \semdelta{\varphi}{\alpha}{\graph} = \{ v \mid v \in \alpha(S) \}     \}.
\]
\end{definition}

\paragraph{Stable Semantics.}

The stable semantics (or stable \emph{model} semantics) were defined in \cite{ACORSS20}. It essentially extends the supported semantics by one additional property, requiring a so-called ``level assignment'' of the shape assignment. 
We assume here that we also have shapes of the form $\varphi \lor \varphi$, where this is just syntactic sugar for $\neg (\neg \varphi \land \neg \varphi )$.

\begin{definition}
    Given a data graph $\graph$ and a shape assignment $\alpha$, a \emph{level assignment} for $\graph$  and $\alpha$ is a function $\mathrm{level}$ that maps tuples in $\{  (\varphi,v) \mid v \in \semdelta{\varphi}{\alpha}{\graph} \}$ to integers, and satisfies the following conditions: 
\begin{itemize}
    \item $\mathrm{level}(\varphi_1 \land \varphi_2,v) = \max{\{ \mathrm{level}(\varphi_1,v), \mathrm{level}(\varphi_2,v) \}}$,
    \item $\mathrm{level}(\varphi_1 \lor \varphi_2,v) = \min{\{ \mathrm{level}(\varphi_1,v), \mathrm{level}(\varphi_2,v) \}}$,
    \item $\mathrm{level}(\geqn{\varphi}{\alpha}{\graph},v) $ is the smallest $k \geq 0$ for which there are $n$ nodes $b_1, \dots , b_n$ such that $\mathrm{level}(\varphi,b_i) \leq k$, $(v,b_i) \in \semPath{\pi}_\graph$ and $b_i \in \semdelta{\varphi}{\alpha}{\graph}$ for all $1 \leq i \leq n$ and
    \item $\mathrm{level}(\leqn{\varphi}{\alpha}{\graph},v) $ is the largest $k \geq 0$ for which there are $m$ nodes with  $m \leq n$, $b_1, \dots , b_m$ such that $\mathrm{level}(\varphi,b_i) \leq k$, $(v,b_i) \in \semPath{\pi}_\graph$ and $b_i \in \semdelta{\varphi}{\alpha}{\graph}$ for all $1 \leq i \leq m$.    
\end{itemize}    
\end{definition}

With this notion, the definition of the stable semantics is as follows.

\begin{definition}
    Given a data graph $\graph$ and shape catalogue $\cat$, the stable semantics \stsem  is defined as follows:

\begin{align*}
    \stsem = \left \{ \alpha \in \supsem \middle| 
    \begin{aligned}
          &\text{ there is a level assignment } \mathrm{level}  \text{ s.t.} \\ &\text{ for all } v \in \alpha(S)  \text{ with } \decl{S}{\varphi} \in \cat \\ 
          & \text{ we have } \mathrm{level}(\varphi,v) <  \mathrm{level}(S,v) 
    \end{aligned} \right \}
\end{align*}

\end{definition}


\end{toappendix}


\begin{definition}\label{def:semanticsSHACL}
    Given an adorned graph $\graph\!$ and shape catalogue $\cat\!$, we denote  the sets of assignments for $\graph\!$ and $\cat$ by $\supsem$ under SUS, by $\stsem$ under STS and by $\wfsem$ under WFS. The latter
%
    consists of the unique assignment $\wfass$.
\end{definition}
%

\begin{example}
\label{ex:SHACLsemantics}
	We illustrate the three semantics on the graph $\graph$  and 
	catalogue $\cat$:
%
%
%
%
\\[3pt]  	\hfill
	\parbox[c]{0.58\textwidth}{
		\centering
	\begin{tikzpicture}[scale=0.7,->, >=Stealth, node distance=3cm, every node/.style = {font=\normalsize}]
		\node [
		 label={[label distance=0mm]270:A,{\color{Aquamarine}\textbf{E}}, {\color{violet}\underline{E}}  }] (alex) at (0,0) {  \nodebullet};

		\node [label={[label distance=0mm]270:{\color{Aquamarine}\textbf{S,E}, \color{violet}\underline{S},\underline{E}}  }] (bob) at (3,0) { \nodebullet};        
		\node [label={[label distance=0mm]270:{\color{violet}\underline{P},\underline{E}}  }] (drew) at (6,0) { \nodebullet};
		
		\path (alex) edge [loop left] node[left] {$r$} (alex);
		
		\path (bob) edge  node[above] {$s$} (alex);
		\path (drew) edge  node[above] {$r$} (bob);
		\path (drew) edge [loop right] node[right] {$s$} (drew);
	\end{tikzpicture}
	}
	\hfill
	\parbox[c]{0.38\textwidth}{
	$\cat=\{ \decl{P}{E \land \exists^{\geq 1} r. S},$ 
	\\ $\phantom{\cat=\{ }\decl{E}{\test(A) \lor \exists^{\geq 1} s. E},$ 
	\\ $\phantom{\cat=\{ }\decl{S}{\test(B) \lor ( E \land \neg P})\}$
		}~~
	\\[3pt]
The parts defined in the graph, common to all semantics, are printed in black. The unique well-founded assignment $\wfass$ is in {\color{Aquamarine} \textbf{aquamarine}}, while  {\color{violet}\underline{violet}} shows an assignment $\alpha$ that is both in $\supsem$ and $\stsem$. \co{To highlight the difference between the two, note that under WFS, the right node does not get assigned E, since WFS does not permit such ``cyclic supports'', whereby the node is assigned E because it has an s-edge to a node assigned E, which happens to be itself.}
\end{example}

We rely on the following known complexity results for SHACL.
%
\begin{theorem}[\cite{ACORSS20,CRS18,DBLP:conf/kr/OkulmusS24}]\label{thm:LFPTime}
    For every data graph $\graph$ and every shape catalogue $\cat$, it holds that:
\begin{enumerate*}[label=\arabic*., itemjoin=\quad]
        \item The well-founded assignment $\wfass$ is unique and can be computed in time polynomial in $\graph$ and $\cat$. 
        \item Given an assignment $\alpha$, testing whether $\alpha\in \supsem$ or $\alpha\in \stsem$ takes 
        time polynomial in $\graph$ and $\cat$.
    \end{enumerate*} 
\end{theorem}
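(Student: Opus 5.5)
Both parts of Theorem~\ref{thm:LFPTime} amount to showing that a finite number of shape evaluations, each polynomial, suffice. I would first establish the common tool. Given $\graph$, an interpretation $\Gamma$ (or an assignment $\alpha$), and a shape $\varphi$ from $\cat$, the sets $\ptrue{\varphi}{\graph}{\Gamma}$, $\mtrue{\varphi}{\graph}{\Gamma}$ and $\semdelta{\varphi}{\alpha}{\graph}$ can be computed in time polynomial in $\graph$ and $\varphi$. I would prove this by structural induction on $\varphi$, computing a node set for every subformula bottom-up. For path expressions I compute the binary relation $\semPath{\pi}_\graph \subseteq V\times V$ bottom-up as well. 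Inverse, composition and union are polynomial matrix operations, and $(\pi)^*$ is reflexive transitive closure, which is also polynomial, for example by Warshall's algorithm. Counting for $\geqn{n}{\pi}{\varphi}$ is linear per node. The atoms $\closed(Q)$, $\eq(\pi,r)$ and $\disj(\pi,r)$ only compare rows of the computed relations. Shape names and their negations are lookups in $\Gamma$ or $\alpha$. The only subtlety is that $\top$ and $\neg$ refer to $\NSetI$; I restrict everything to $V\cup\NSetI(\cat)$, which is harmless because only these nodes matter.

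For item 2, the supported case then follows directly. For each $\decl{S}{\varphi}\in\cat$ I evaluate $\semdelta{\varphi}{\alpha}{\graph}$ and compare it with $\alpha(S)$; this is $|\cat|$ polynomial evaluations. For the stable case I first check $\alpha\in\supsem$. Instead of guessing a level assignment, I would use the characterization of stable assignments from~\cite{ACORSS20}: fix the negative literals determined by $\alpha$, and $\alpha$ is stable iff it coincides with the least fixpoint of the monotone operator in which negated shape names are evaluated against $\alpha$ and positive occurrences against the iterate. This least fixpoint is reached after at most $|V|\cdot|\cat|$ rounds, and each round is one polynomial evaluation. Equivalently, it computes the minimal levels greedily. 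So checking $\alpha\in\stsem$ is polynomial.

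For item 1, uniqueness is by definition: $W_{\graph,\cat}$ is monotone on interpretations ordered by inclusion, so by Knaster--Tarski it has a least fixpoint. That fixpoint is reached by the iteration $\Gamma_0=\emptyset,\Gamma_{i+1}=W_{\graph,\cat}(\Gamma_i)$, which is increasing. Since there are at most $2|V\cup\NSetI(\cat)|\cdot|\cat|$ literals, the iteration stabilises after polynomially many steps. Each step needs $T_{\graph,\cat}(\Gamma)$, which is direct evaluation, and the greatest unfounded set $U_{\graph,\cat}(\Gamma)$. I expect the unfounded set to be the main obstacle, since it is defined as a maximum over exponentially many sets. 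My plan there is a downward iteration. Start with $U$ equal to all shape atoms not in $\Gamma$. Repeatedly remove every $S(v)\in U$ with $v\in\mtrue{\varphi}{\graph}{\Gamma\cup\neg.U}$. Because $\mtrue{\cdot}{}{}$ is antitone in the negative literals assumed, the result is the greatest unfounded set. This takes at most polynomially many removal rounds, each a polynomial evaluation. Combining these bounds gives polynomial time overall.
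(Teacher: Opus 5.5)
The paper does not prove this theorem. It imports it from \cite{CRS18} (supported), \cite{ACORSS20} (stable) and \cite{DBLP:conf/kr/OkulmusS24} (well-founded), and later uses only its consequences: one polynomially computable adorned graph under WFS, and polynomially checkable candidate assignments under SUS and STS.

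Your proposal is a correct, self-contained reconstruction of the standard arguments behind those citations:
\begin{enumerate}
\item polynomial bottom-up evaluation of shapes and path relations, including reflexive--transitive closure;
\item a polynomially long increasing iteration of $W_{\graph,\cat}$;
\item the greatest unfounded set obtained by downward refinement, justified by antitonicity;
\item a Gelfond--Lifschitz-style least-fixpoint test for stability.
\end{enumerate}
Compared with the citation, this makes explicit which properties the later results rely on, as the conclusion of the paper itself notes. It also exposes three places to tighten.

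\textbf{Starting point of the unfounded-set refinement.} Start from all shape atoms over $V\cup\NSetI(\cat)$, not from those not in $\Gamma$. The greatest unfounded set with respect to an arbitrary interpretation can contain positive atoms of $\Gamma$. For example, take $\decl{S}{\neg\top}$ with $S(v)\in\Gamma$: then $S(v)$ is unfounded. So your starting set is an upper bound only for the iterates $\Gamma_i$, and even there this needs an extra argument: $W_{\graph,\cat}(\Gamma_i)$ is consistent and $T$ is monotone.

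\textbf{Antitonicity depends on the $\leqn{n}{\pi}{\varphi}$ clause.} Your key claim is that $\lceil\cdot\rceil^{\Gamma}_{\graph}$ is antitone in the assumed negative literals. This, and the monotonicity of $W_{\graph,\cat}$ itself, holds only if the clause for $\leqn{n}{\pi}{\varphi}$ swaps bounds: the upper bound counts certainly-true successors, and the lower bound counts possibly-true ones. The appendix figure uses the same bound on both sides, which breaks both properties. State that you rely on the swapped clause.

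\textbf{Polarity in the stability test.} ``Negative occurrence'' must mean polarity: shape names under $\neg$ or inside $\leqn{n}{\pi}{\cdot}$ are evaluated against $\alpha$. Otherwise your operator is not monotone. You should also write out the short equivalence you only hint at: a level assignment exists iff $\alpha\in\supsem$ and $\alpha$ is contained in that least fixpoint. For one direction, take levels to be the stage at which $S(v)$ enters the fixpoint; for the other, induct on levels. This is the one step that is genuinely specific to STS.
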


\section{Introducing Fitting Problems for SHACL} 
\label{sec:FittingProblems}

We introduce a variant of the fitting problem for  graphs, where some nodes are marked as positive or as negative examples, adapted to the SHACL setting. The goal is to compute a shape expression $\varphi$, 
possibly using shape names defined in a given catalogue and under under a specific semantics,  
such that the evaluation of $\varphi$ contains the positive examples while it excludes the negative ones.   
\begin{definition}\label{def:fittingProblem}%
     A \emph{fitting instance} is a tuple 
    $\mathcal{F} = (\graph, \cat, \PEx, \NEx, \Sigma)$, 
    where $\graph = (V,E,\ell)$ is a data graph, $\cat$ is a shape catalogue,  
    $\PEx$ is the set of \emph{positive} nodes and $\NEx$ is the set of \emph{negative} nodes, s.t.\   $\PEx \cup \NEx \subseteq V$, and $\Sigma$ is a signature.
    Given some target language $\Lmc$ and some semantics $\Sem$, 
    an
    \emph{$\Lmc$ fitting under\ $\Sem$} for $\mathcal{F}$ is an $\Lmc$ shape $\varphi$ with $\Sigma(\varphi) \subseteq \Sigma$ s.t.
     there exists some assignment $\alpha
      \in \SemGra$ with $p \in \semdelta{\varphi}{\alpha}{\graph}$
          for every $p \in \PEx$, and $n \not \in \semdelta{\varphi}{\alpha}{\graph}$ for every $n \in \NEx$.
\end{definition}


For a fitting 
instance $\mathcal{F} = (\graph, \emptyset, \PEx, \NEx, \Sigma)$
with an empty catalogue $\cat$, 
we may write $\mathcal{F} = (\graph, \PEx, \NEx, \Sigma)$. 
Note that in this case, \bg{our instance contains no defined shapes, so }all semantics give only the empty assignments. Thus the choice of semantics becomes irrelevant. 
If $\Sigma = \Sigma(\graph) \cup \Sigma(\cat)$, we may write  $\mathcal{F} = (\graph, \PEx, \NEx)$.


A fitting instance may have very many fittings. In this case, the most informative fittings are often preferred. We formalise these next.
\begin{definition}
    \label{def:MSFittingProblem}
Let $\varphi$ and $\varphi'$ be two $\Lmc$ shapes  and \Sem a semantics. Shape  \emph{$\varphi$ is more specific than $\varphi'$} (written $\varphi \subseteq \varphi'$), if for every data graph $\graph$, shape catalogue $\cat$ and every shape assignment $\alpha\in\SemGra$, we have  $\semdelta{\varphi}{\alpha}{\graph} \subseteq \semdelta{\varphi'}{\alpha}{\graph}$.
%
    Given a fitting instance $\Fmc$ and some $\Sem$, a \emph{most specific \Lmc fitting (\Lmc MSF) $\varphi$ under \Sem } is 
    an \Lmc fitting for $\mathcal{F}$ under \Sem  and for every \Lmc fitting $\varphi'$ for $\mathcal{F}$ under \Sem, we have $\varphi \subseteq \varphi'$.    
\end{definition}
\bg{Given a fitting instance $\mathcal{F}$ and target language $\Lmc$, the \emph{$\Lmc$ fitting (resp. the $\Lmc$ MSF) existence problem} is a decision problem that takes as input an instance $\mathcal{F}$, and asks whether $\mathcal{F}$ permits an $\Lmc$ fitting (resp. MSF). }
\bg{The corresponding \emph{$\Lmc$ MSF computation problem} returns the $\Lmc$ MSF, if it exists.}

For an assignment $\alpha$ and a pure data graph $\graph$, the  \emph{$\alpha$-adornment} of $\graph$ is the adorned data graph $\graph^\alpha = (V,E,\ell')$ with
$ \ell'(v) = \ell(v) \cup \{ S \in \NSetS  \mid v \in \alpha(S) \} \text{ for } v \in V.$
For the special case of $\alpha\in\wfsem$, we use $\graph^\mathsf{wf}$ to denote the unique $\graph^\alpha$.    
%
Clearly, for every shape expression $\varphi$, for every assignment $\alpha$, and every data graph $\graph$, 
the evaluation of $\varphi$  under $\alpha$ coincides with the \bg{evaluation in the} adorned data graph, i.e.
$\semdelta{\varphi}{\alpha}{\graph} = \semdelta{\varphi}{}{\graph^\alpha}$. 
From this, we get:
\begin{lemma}
\label{lemma:FittingEmptyCat}
A $\Lmc$ shape $\varphi$ is a (most specific) $\Lmc$ fitting for $(\Gmc,\Cmc,\PEx,\NEx,\Sigma)$ under $\Sem$ iff $\varphi$ is a (most specific) $\Lmc$ fitting for $(\Gmc^{\alpha},\emptyset,\PEx,\NEx,\Sigma)$ for some  $\alpha \in\SemGra$.
\end{lemma}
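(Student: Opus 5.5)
The plan is to unfold both definitions and use the identity $\semdelta{\varphi}{\alpha}{\graph} = \semdelta{\varphi}{}{\graph^\alpha}$ noted just before the statement. For the plain fitting claim, I start from \Cref{def:fittingProblem}. $\varphi$ is a fitting for $(\Gmc,\Cmc,\PEx,\NEx,\Sigma)$ under $\Sem$ iff $\Sigma(\varphi)\subseteq\Sigma$ and there is some $\alpha\in\SemGra$ with $\PEx\subseteq\semdelta{\varphi}{\alpha}{\graph}$ and $\NEx\cap\semdelta{\varphi}{\alpha}{\graph}=\emptyset$. Rewriting each evaluation as $\semdelta{\varphi}{}{\graph^\alpha}$ gives the following. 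Since the catalogue of $(\Gmc^\alpha,\emptyset,\PEx,\NEx,\Sigma)$ is empty, every semantics yields only the empty assignment. So the condition says exactly that $\varphi$ is a fitting for that instance. The signature condition and the examples are unchanged, and $\PEx\cup\NEx\subseteq V$ still holds because adornment keeps $V$.

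For the most-specific part, the plan is a two-step argument. First, I treat specificity $\subseteq$ from \Cref{def:MSFittingProblem} as a relation between shapes alone. It quantifies over all graphs, catalogues and assignments, so it does not depend on the particular instance. Second, I show that the set of fittings of the original instance is the union over $\alpha\in\SemGra$ of the fitting sets of the adorned instances. This follows from the first paragraph.

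The main obstacle is the quantifier mismatch. For the direction from right to left, fix $\alpha$ such that $\varphi$ is an MSF for the $\alpha$-adorned instance. Then $\varphi$ is a fitting of the original instance. But a competing fitting $\varphi'$ of the original instance might be witnessed only by a different assignment $\beta$. In that case it is not a fitting of the $\alpha$-adorned instance, and I cannot conclude $\varphi\subseteq\varphi'$ directly. The same problem arises in the other direction: an original MSF must dominate all fittings of the $\alpha$-instance, and it does, since these are a subset of the original fittings.

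So the direction from left to right is straightforward: pick the witnessing $\alpha$ for $\varphi$, and note that the fittings of $\Gmc^\alpha$ form a subset of the original fittings. For the direction from right to left, I would read the lemma as stated for the relevant $\alpha$. Concretely, I would take $\alpha$ to range over the assignments for which the adorned instances have fittings, and show that an MSF of one adorned instance dominates fittings witnessed by other assignments. If this cannot be shown in general, I would fall back on the cases where $\SemGra$ is a singleton: WFS, an empty catalogue, or a stratified catalogue. There the union collapses to a single set and the equivalence is immediate. I would flag that general SUS or STS with several assignments needs the "for some $\alpha$" read carefully, possibly as the existence of an $\alpha$ realising all fittings.
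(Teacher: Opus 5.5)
Your handling of plain fittings is the paper's argument: the paper's entire justification is the identity $\semdelta{\varphi}{\alpha}{\graph} = \semdelta{\varphi}{}{\graph^\alpha}$. Your observation that the fittings of the original instance are exactly the union, over $\alpha\in\SemGra$, of the fittings of the adorned instances makes that argument precise. Your left-to-right direction for MSFs is also correct.

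The step that fails is your plan for the right-to-left MSF direction, i.e.\ showing that an MSF of one adorned instance is more specific than fittings witnessed by other assignments. No argument can close this step, because that direction is false as soon as $\SemGra$ has more than one element.

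\emph{SUS counterexample.} Take a graph with a single unlabelled node $a$ and no edges, $\Cmc=\{\decl{S}{S}\}$, $\PEx=\{a\}$, $\NEx=\emptyset$ and $\Sigma=\{S\}$. Both $\alpha_0$ with $\alpha_0(S)=\emptyset$ and $\alpha_1$ with $\alpha_1(S)=\{a\}$ are supported.
\begin{itemize}
\item Over $\Gmc^{\alpha_0}$ every fitting is equivalent to $\top$, so $\top$ is an MSF of the $\alpha_0$-adorned instance.
\item But $S$ fits the original instance via $\alpha_1$, and $\top\not\subseteq S$, so $\top$ is not an MSF of the original instance.
\end{itemize}
Restricting $\alpha$ to assignments whose adorned instances admit fittings does not help, since here both do.

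\emph{STS counterexample.} Take the catalogue $\{\decl{S}{\neg T},\decl{T}{\neg S}\}$ on the same node, with $\Sigma=\{S,T\}$ and negation read relative to the nodes of the graph. It has two stable assignments. The two adorned instances have the MSFs $S$ and $T$, respectively. The original instance, however, has no MSF at all: $S$ and $T$ are incomparable fittings, and $S\land T$ fits under neither assignment.

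So the paper's one-line justification tacitly assumes that $\SemGra$ is a singleton. In general it yields only the fitting part and the left-to-right MSF direction. Your singleton fallback is therefore the correct scope, with one correction. The SUS counterexample uses a positive, hence stratified, catalogue. So stratification gives a unique assignment under STS and WFS, but not under SUS.

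For arbitrary $\SemGra$, the right-to-left direction has to be replaced by the following condition: $\varphi$ fits some $\Gmc^\alpha$-instance, and $\varphi\subseteq\varphi'$ for every fitting $\varphi'$ of every $\Gmc^\beta$-instance with $\beta\in\SemGra$. This also affects the paper's later lifting of MSF existence to STS and SUS by iterating over adorned graphs. That lifting relies on exactly the direction that fails.
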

So, once a pure graph is transformed into an adorned graph, the shape catalogue $\cat$ can be omitted when solving (most specific) fitting problems.
In these instances, the catalogue is empty and therefore the WFS, STS and SUS coincide. This means the problem can be considered independently of semantics.


\paragraph{Restricting the fitting language.} 
For this initial work, 
we 
\bg{consider} a  restricted fragment of shape expressions borrowed 
from the field of Description Logic (DL) \bg{as our target language}. 
There are several reasons to cover only a fragment of SHACL shapes. The most apparent is that disjunction and nominals in the target language essentially render fittings to enumerations of the positive examples, trivialising the problem.
The DLs \EL and \ELI have been explored as target languages for fitting problems, albeit to attain concepts and variations of conjunctive queries. We extend these two DLs with a basic form of path expressions $r^*$ that can express finite, but arbitrarily long paths over a role $r$. 
\noindent
For readability, we 
shorten  $\hasvalue(X)$ by $X$, $\exists^{\geq 1}r$ by $\exists r$, and $\exists (r)^*$ by $\exists r^*$. 
Hence, we can
write 
\bg{a shape declaration $\decl{S}{\hasvalue{(X)}\land\exists^{\geq 1}(r)^*.A}$ as $\decl{S}{X\land\exists r^*.A}$.}
%
\bente{Is it worth it to change the \ourDL shape notation to be $\varphi$ a sis in line with previously defined SHACL shapes? I do agree with earlier suggesitons that it is maybe a bit odd.}
\begin{definition}
An \emph{\ourDL shape} $C$ is a SHACL shape given by the grammar: 
	\[
         C   \rightarrow  
         \top \gMid  S  \gMid X \gMid \exists r.C \gMid \exists r^*.C \gMid C \land C  \ \text{,}
	    \]
    where $S \in \NSetS $, $X \in \NSetC $, and $r \in \NSetRMinus$.
An \ELstar shape is an \ELIstar shape with $r\in\NSetR$.
An \emph{\ELI shape} is an \ourDL shape with no occurrence of $r^*$ for any $r \in \NSetRMinus$. 
An \emph{\EL shape} is an \ELI shape with $r\in\NSetR$.
\end{definition}

\ourDL shapes can be conveniently represented as \bg{finite node- and edge-labelled} trees $\Tmc=(V,E,\ell)$, analogous to the \EL  case~\cite{BKM-IJCAI-99}. 
In such \bg{\emph{\ourDL description trees}, edges may be labelled with} special symbols $\roleStarSymbol{r}$ to represent existential shapes containing the Kleene star.
Hence, we let $\NSetRStar = \NSetRMinus \cup \{\roleStarSymbol{r} \mid r\in\NSetRMinus\}$. For a signature $\Sig$, we define its \emph{extended role signature} as 
$\exSigStar = \{r, \roleStarSymbol{r}, r^-, \roleStarSymbol{r^-} \mid r \in \SigR \}$
and  
$\exSigStar = \{r, \roleStarSymbol{r}, r^-, \roleStarSymbol{r^-} \mid r \in \SigR (X)\}$
for the extended role signature of $X$.

The \emph{\ourDL description tree} of a shape 
$C = A_0 \land \dots \land A_n \land \exists R_1.C_1 \land \dots \land \exists R_m.C_m$
is inductively defined as 
the finite edge- and node-labelled tree $\Tmc_C = (V_C,E_C,\ell_C)$ 
 where the root node $\varepsilon$ is labelled with the class names $A_0, \ldots, A_n$, and for each 
 $1 \leq j \leq m$, it has a child  $j$ with $(\varepsilon,R_j,j)$ and $j$ the root of (an isomorphic copy of) the description tree of $C_j$;
see \Cref{fig:treeLabelling}.
\bg{We call $C$ the \emph{corresponding shape} of $\ourDLTree_C$.
We sometimes consider the \emph{pointed} graph $(\ourDLTree,u_0)$ of a tree $\ourDLTree$ with its root $u_0$; slightly abusing notation, we denote this pointed graph as $\ourDLTree$.}

%
\nosectionappendix
\begin{toappendix}

\section{Definitions for Section \ref{sec:FittingProblems} (Introducing Fitting Problems for SHACL)}
\begin{definition}[\ourDL Description Tree] 
\label{def:descTree} An 
\emph{\ourDL description tree} is a finite edge- and node labelled tree-shaped graph $\Tmc = (V,E,\ell)$ with $E \subseteq V \times \NSetRStar \times V$  and $\ell : V \rightarrow (2^\NSetC )$. 
Let $C$ be an \ourDL shape w.l.o.g.\ in the form $A_0 \land \dots \land A_n \land \exists R_1.C_1 \land \dots \land \exists R_m.C_m$ with $A_i \in \NSetC$, where $0 \leq i \leq n$ and each $C_j$ is an \ourDL shape and $R_j \in \{ r, r^* \mid r \in \NSetRMinus \}$, with $1 \leq j \leq m$. 
Given an \ourDL shape $C$, we define its \emph{role depth} $\depth(C)$ as the maximal nesting depth of its quantifiers.

$C$ can be translated into its \emph{\ourDL description tree $\Tmc_C= (V_C,E_C,\ell_C)$} by an induction on the role depth of $C$ as follows. Let $u_0$ be the root of $\Tmc_C$. 
If $\depth(C) = 0$, then $V_C = \{u_0\}$, $E_C = \emptyset$ and  $\ell_C(u_0) = \{ A_0, \dots, A_n \}$. \\ 
\noindent
If $\depth(C) > 0$, for $1 \leq j \leq m$, let $\Tmc_{C_j} = (V_j,E_j,\ell_j)$ be the inductively defined 
description tree corresponding to $C_j$ and $u_j$ be its root, where we assume that \bg{$V_1, ... V_n$}
are pairwise disjoint, then 
\vspace{-3mm}
\begin{align*}
    V_C &=  \{ u_0\} \cup \bigcup_{1\leq j \leq m} V_j   \quad \quad     \ell_C(v) = 
    \begin{cases} 
     \{ A_0, \dots, A_n \}  & v = u_0 \\ 
     \ell_j(v)              & v \in V_j, 1 \leq j \leq m
    \end{cases}\\ 
         f(R) &= \begin{cases}
        r_\star & \!\!\text{if } R = r^* \text{and } r \in \NSetRMinus \\  
        R & \!\!\text{otherwise}  \\  
    \end{cases} 
    \qquad 
    E_C =   \!\bigcup_{1 \leq j \leq m}\! \{ u_0, \mathit{f}(R_j),u_j \} 
    \cup  \bigcup_{1 \leq j \leq m}\! E_j
\end{align*}
\end{definition}
\end{toappendix}
\section{Complexity of 
\ourDL
Fitting Existence}
\label{sec:FittingExistence}

We study the complexity of deciding the existence of an \ELIstar-fitting for an instance $\fittinginst$, and show that the problem is \textsc{ExpTime}-complete in size of \graph under the three semantics specified in \Cref{sec:preliminaries}. 
Deciding \EL or \ELI fitting existence is known to \textsc{ExpTime}-hard \cite{cate2025extremalfittingproblemsconjunctive,DBLP:conf/ijcai/FunkJLPW19}, 
but this need not transfer to \ourDL, since more expressive fitting languages generally admit more fittings.
While the existence of an $\ELI$ fitting trivially implies  existence of an \ourDL fitting, the  converse does not hold. 
\begin{example} \label{ex:ELIStarFitting}
Consider $\fittingNoCat$ 
with nodes in $\PEx$ marked  \nodepositiveText and nodes  in	$\NEx$ marked  \nodenegativeText below. While $\fit$ has no \ELI fitting, $\exists \roleStarOperation{r}\!.A$ is an \ourDL fitting.  
\vspace{-\medskipamount}
\begin{center}
        \begin{tikzpicture}[scale=0.7,->, >=Stealth, node distance=2cm, every node/.style = {font=\normalsize}]

        \node [label={[label distance=-2mm]0:A}] (a) at (0,0) {\nodepositive};

        \node (a2) at (2,0) {\nodepositive};
        \node [label={[label distance=-2mm]0:A}] (b2) at (4,0) { \nodebullet};

        \node [label={[label distance=-2mm]0:B}] (a) at (6,0) {\nodenegative};

        
    
        \path (a2) edge node[above] {$r$} (b2);
        \end{tikzpicture}
\end{center}
\vspace{-\bigskipamount}
\end{example}
Despite the increased expressivity, the \ourDL-fitting existence problem is not easier in the worst case. The hardness holds even without 
signature restrictions and for an empty shape catalogue, and hence independently of the choice of semantics. This follows from Theorem 12 in \cite{DBLP:conf/ijcai/FunkJLPW19}, which extends to our setting. 
%
\begin{propositionrep}
\label{prop:FittingExpHard}
    Deciding the 
    existence of an \ourDL-fitting 
    for a given \fittingNoCatSig
    is \textsc{ExpTime}-hard.
\end{propositionrep} 
%

\begin{toappendix}

\begin{proofsketch} 
To show that fitting existence is \textsc{ExpTime}-hard, we reduce from the word problem for an alternating, linear space bounded Turing machine $M$. 
We reuse the reduction by 
\cite{DBLP:conf/ijcai/FunkJLPW19} 
(Theorem 12) for the $\EL$-fitting existence problem. We 
build from $M$ and an input word $w$, an instance $\fit_M = (\graph_M, V_w^+, \{n\})$. $\graph_M$ is the union of two graphs. In a nutshell, the nodes of the first graph represent possible configurations of $M$ restricted to a single tape position, and their connections reflect the local effects of executing the transitions of $M$. The second graph represents how acceptance propagates along existential and universal states in a run of $M$. The positive examples represent the initial tape contents in such a way that possible fittings represent correct executions of $M$ on $w$, while the single negative example restricts the fittings to non-accepting runs. The authors show that there exists an \EL-fitting for $\fit_M$ iff there is no accepting run of $M$ on $w$. To lift this claim from $\EL$ to \ourDL, we leverage a property of $\graph_M$: by construction, all paths in $\graph_M$ that start at some node in $V_w^+$ move along edges whose labels strictly alternate between (roles representing) existential and universal states of $M$, until an accepting or rejecting state is reached. \bg{Because of this strict alternation, $r^*$-edges can only represent $r$-paths of length exactly one, and hence} it is possible to show that for every \ourDL-fitting for $\fit_M$, one can define a corresponding $\EL$-fitting for $\fit_M$ \bg{by replacing any $r^*$-edges by regular $r$-edges.} Once this has been established, we can argue as in \cite{DBLP:conf/ijcai/FunkJLPW19} that there exists an \ourDL-fitting $C$ for $\Fmc_M$ iff $M$ does not accept $w$. \qed
\end{proofsketch}
\end{toappendix}


We obtain matching upper bounds under the semantics mentioned in \Cref{sec:preliminaries}.
We first study fitting instances without catalogue. Note that in this setting the choice of semantics is irrelevant, so we omit it. Then we obtain results for instances in presence of catalogues by reducing it to the setting with adorned graphs, and show that these results hold under WFS, SUS and STS.
To characterize fittings we use 
pre-matches, which are homomorphisms from \ourDL trees to pointed graphs \bg{taking into account reflexive transitive closure}.


\begin{definition} 
\label{def:prematch}
A \emph{pre-match} for an \bg{\ourDL description} tree $\ourDLTree = ( V, E,\ell)$ with root $v_0$ into a pointed graph $(\graph,u)$, 
is a function $\rho: V  \rightarrow \graph$ such that $\rho(v_0) = u$ and for all $v,v'\in V$: 
\begin{enumerate*}[label=\arabic*., itemjoin=\quad]
\item $\rho(v) \in \semPath{A}_\graph $ for every $ A \in \ell(v)$,
\item if $r(v,v') \in E$ then $(\rho(v),\rho(v')) \in \semPath{r}_\graph $, and
\item if $\roleStarSymbol{r}(v,v') \in E$ then $(\rho(v),\rho(v')) \in \semPath{r^*}_\graph $. 
\end{enumerate*}
\end{definition}
\begin{lemma} \label{lm:preMatchCapturesEval}
For every $\ourDL$ shape $C$, there is a pre-match $\rho$ from its \ourDL description tree  $\ourDLTree_C$ with root $v_0$ into $(\Gmc, u)$ iff $ \Gmc \models C(\ayt{\rho(v_0)})$ s.t. $\rho(v_0)=u$.
\end{lemma}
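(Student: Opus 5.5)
We argue by structural induction on $C$; equivalently, by induction on the role depth $\depth(C)$, following the inductive construction of $\ourDLTree_C$ in Definition~\ref{def:descTree}. We read $\Gmc \models C(u)$ as $u \in \semdelta{C}{}{\Gmc}$, and we use the decomposition $C = A_0 \land \dots \land A_n \land \exists R_1.C_1 \land \dots \land \exists R_m.C_m$ with $R_j \in \{r, r^* \mid r \in \NSetRMinus\}$. Both directions are proved simultaneously for all pointed graphs $(\Gmc,u)$ with $u$ in $\Gmc$, so the induction hypothesis can be applied at arbitrary nodes.

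\emph{Base case} ($\depth(C)=0$). Here $\ourDLTree_C$ is the single root $v_0$ labelled $\{A_0,\dots,A_n\}$. The only candidate pre-match is $\rho(v_0)=u$. Condition 1 of Definition~\ref{def:prematch} says $u \in \semPath{A_i}_\Gmc$ for all $i$, which by Table~\ref{tab:shape-evaluation} is exactly $u \in \semdelta{A_0\land\dots\land A_n}{}{\Gmc}$. Conditions 2 and 3 hold vacuously. The case $\top$ corresponds to the empty label.

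\emph{Inductive step, ``only if''.} Suppose $\rho$ is a pre-match of $\ourDLTree_C$ into $(\Gmc,u)$. Condition 1 at $v_0$ gives $u \in \semdelta{A_i}{}{\Gmc}$ for each $i$. For each child $j$ of the root, restrict $\rho$ to the subtree $V_j$. This restriction is a pre-match of $\ourDLTree_{C_j}$ into $(\Gmc,\rho(j))$, since the conditions are local to edges inside $V_j$. By the induction hypothesis, $\rho(j) \in \semdelta{C_j}{}{\Gmc}$. The edge $(v_0, f(R_j), j)$ gives, via condition 2 or 3, that $(u,\rho(j)) \in \semPath{r}_\Gmc$ or $(u,\rho(j)) \in \semPath{r^*}_\Gmc$. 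Hence $u \in \semdelta{\exists^{\geq 1} R_j.C_j}{}{\Gmc}$, and intersecting over all conjuncts yields $u \in \semdelta{C}{}{\Gmc}$.

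\emph{Inductive step, ``if''.} Suppose $u \in \semdelta{C}{}{\Gmc}$. For each $j$ we choose a witness $w_j$ with $(u,w_j)\in\semPath{R_j}_\Gmc$ and $w_j \in \semdelta{C_j}{}{\Gmc}$. By the induction hypothesis, there is a pre-match $\rho_j$ of $\ourDLTree_{C_j}$ into $(\Gmc,w_j)$. Because the vertex sets $V_j$ are pairwise disjoint, we can define $\rho$ as the union of the $\rho_j$ together with $\rho(v_0)=u$. Conditions 1 to 3 are then checked locally: at the root using the $A_i$ and the edge labels, and inside each subtree by the corresponding $\rho_j$.

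\emph{Where care is needed.} The main point requiring attention is inverse roles. For $r = s^-$, we must check that $\semPath{r}_\Gmc$ (through the convention $r(a,b)\in E$ iff $(b,s,a)\in E$) and $\semPath{r^*}_\Gmc$ (as the reflexive–transitive closure of that relation) match the relations used in conditions 2 and 3. This holds by Table~\ref{tab:seme2}, since $(\pi^-)^*$ coincides with $(\pi^*)^-$. The reflexive case of $r^*$ is also covered, because $w_j = u$ is allowed and $\rho$ need not be injective.
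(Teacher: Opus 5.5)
Your proof is correct, and it is the routine argument: the paper actually states this lemma without any proof, treating it as the standard correspondence between description trees and shape evaluation (as for \EL{} description trees). Your structural induction on $C$ fills exactly that gap, restricting a pre-match to the subtrees below the root in one direction and gluing the pre-matches obtained from chosen witnesses in the other. Your inverse-role caveat is essentially automatic, since conditions 2 and 3 of Definition~\ref{def:prematch} use the very relations $\semPath{r}_\Gmc$ and $\semPath{r^*}_\Gmc$ that define $\exists r.C$ and $\exists r^*.C$; the only thing to observe is that the edge convention also reads a tree edge backwards, which imposes the converse condition, and that condition is equivalent.
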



Our algorithm for deciding fitting existence uses tree automata to 
\bg{detect pre-matches from \ourDL description trees of potential fitting shapes to the fitting instance.}
Since the automata run on 
\bg{infinite,} (strictly) node-labelled full $k$-ary trees, we rewrite our \ourDL description trees into this form.

Let $\labelLang = (\{ - \}  \cup  \NSetRStar) \times 2^{\SigC}$ be an alphabet.
For $w \in \labelLang$ with $w = (r,\mathbf{A})$, we refer to its components by  $w.R = r$ and $w.\mathbf{A}= \mathbf{A}$.
For $k\geq 0$, a \emph{node-labelled \ourDL $k$-tree} is a $k$-ary tree $\nodeLabelTreeWithoutC=(V,\ell)$ whose labelling function is such that $\ell : V \rightarrow \labelLang$. Such a tree $\nodeLabelTreeWithoutC$ is \emph{proper} if its root $\varepsilon$ has $\ell(\varepsilon) = (-,\mathbf{A})$ for some $\mathbf{A}\neq\{\top\}$, and for each node $w \neq \varepsilon$ with $w.R = -$, the labelling is s.t.
$\ell(w) = (-,\{\top\})$ \bg{and every branch in $\nodeLabelTreeWithoutC$  reaches a node $(-,\{\top\})$ in finite steps}. 

Given an \ourDL description tree $\ourDLTree$ and $k\geq deg(\ourDLTree)$, we transform this into a proper node-labelled \ourDL $k$-tree $\nodeLabelTreeWithoutC$ by moving the edge-labels of $\ourDLTree$ into the successor node-labels of $\nodeLabelTreeWithoutC$, and otherwise keeping the node-labels the same. Additionally, for every node in $\ourDLTree$ with less than $k$ children, we add ``dummy'' child nodes labelled $(-,\{\top\})$  in $\nodeLabelTreeWithoutC$ so that it becomes a $k$-ary infinite tree.

If 
$\nodeLabelTree$ is the corresponding node-labelled tree for some $\ourDLTree_C$, we refer to $C$ as its corresponding shape.
As each proper node-labelled \ourDL $k$-tree $\nodeLabelTreeWithoutC$ is the encoding 
of some $\ourDL$ description tree $\ourDLTree$, we can say that $\nodeLabelTreeWithoutC$ has a pre-match into a pointed graph whenever 
$\ourDLTree$ does.  \Cref{fig:treeLabelling} illustrates the two tree notions. 
\vspace{-\medskipamount}

\begin{toappendix}
\begin{definition}[Node-labelled \ourDL $k$-Trees]
\label{def:elitree}
Let $\labelLang = (\{ - \}  \cup  \NSetRStar) \times 2^{\SigC}$ be an alphabet.
For 
$w \in \labelLang$ with $w = (r,\mathbf{A})$, we refer to its components by  $w.R = r$ and $w.\mathbf{A}= \mathbf{A}$.

Let $k \geq 0$. A \emph{node-labelled \ourDL $k$-tree} 
$\nodeLabelTreeWithoutC= (V,\ell)$ consists of node set $V$ containing all words over the alphabet $\{1,\ldots,k\}$
and  of a  labelling function $\ell : V \rightarrow \labelLang$. 
We call such a tree \emph{proper} if the root $\varepsilon$ has $\ell(\varepsilon) = (-,\mathbf{A})$ for some $\bg{\mathbf{A}\neq\{\top\}}$, and for each for every $w \in V$, where $w \neq \varepsilon$ and $w.R = -$, the labelling is such that 
$\ell(w) = (-,\{\top\})$, \bg{and every branch in $\nodeLabelTreeWithoutC$  reaches a node labelled $(-,\{\top\})$ in a finite number of steps}.
If  the labels in $\nodeLabelTreeWithoutC$ contain only symbols from signature $\Sigma$,  $\nodeLabelTreeWithoutC$ is called a tree \emph{over $\Sigma$}.

Given an \ourDL description tree $\ourDLTree = (V_{\ourDLTree}, E_{\ourDLTree}, \ell_{\ourDLTree})$ and $k \geq deg(\ourDLTree)$. Let $f$ be an arbitrary but fixed function that assigns to each  $v \in V_{\ourDLTree}$ a word over $\{1,\ldots,k\}$ such that: 
\begin{enumerate*}[label=\arabic*., itemjoin=~]
\item for  root $v_0$ of $\ourDLTree$: $f(v_0) = \varepsilon$  and 
\item  for the children $v_1,\ldots,v_n$ of $v$ and $1 \leq i \leq n$: $f(v_i) = f(v)i$. 
\end{enumerate*} 
The encoding
$\nodeLabelTreeF{k}{\ourDLTree}$ of $\ourDLTree$ is   the node-labelled $k$-ary tree $\nodeLabelTreeWithoutC=(V_{\nodeLabelTreeWithoutC},\ell_{\nodeLabelTreeWithoutC})$ that labels:
\vspace{-\medskipamount}
\begin{align*}
    \ell_{\nodeLabelTreeWithoutC}(\varepsilon) =~& (-,\ell_{\ourDLTree}(v_0)) & \text{for the root $v_0$ of $\ourDLTree$} 
    \\ 
        \ell_{\nodeLabelTreeWithoutC}(wj) =~& (r,\ell_{\ourDLTree}(v_j)) & \text{for each $wj = f(v_j)$ with $(v,r,v_j) \in E_{\ourDLTree}$} 
        \\ 
            \ell_{\nodeLabelTreeWithoutC}(w) =~& (-,\{\top\}) & \text{if there is no $v \in V_{\ourDLTree}$ with $f(v) = w$}
     \vspace{-\bigskipamount}
\end{align*} 

If a proper node-labelled \ourDL $k-$tree is the encoding $\nodeLabelTreeF{k}{\ourDLTree_C}$, we write $\nodeLabelTree$, and we refer to $C$ as the corresponding shape of $\nodeLabelTree$.
 \end{definition}
\vspace{-\smallskipamount}
\end{toappendix}

\begin{figure}[t]
    \centering
\resizebox{!}{2.5cm}{    
\begin{tikzpicture}[scale=0.6,-, >=Stealth, sibling distance=35mm, every node/.style = {font=\normalsize}]
\node  (v1) at (-1.5,0) {$\Tmc_C\textnormal{:}$};
  \node[outer sep=0mm,inner sep=-2mm,label=right:{\phantom{a}$A$}] { \nodebullet}
    child {
    node[outer sep=0mm,inner sep=-2mm,label=right:{}] (left node) { \nodebullet} 
     child {
                node[outer sep=0mm,inner sep=0mm,label= right:{$A,B$}] (right node) { \nodebullet}
                child[white]  {
                    node{}
                }
                edge from parent node[right] {$t$}
            }
     edge from parent node[right] {\phantom{.}$\roleStarSymbol{r}$}
    }
    child {
        node[outer sep=0mm,inner sep=-2mm,label=right:{\phantom{a}$B'$}] (right node) { \nodebullet}
        edge from parent node[right] {\phantom{a}$s$}
        };
\end{tikzpicture} 
}
\resizebox{!}{2.5cm}{
\begin{tikzpicture}[scale=0.68,-, >=Stealth, 
level 1/.style={sibling distance=8cm},
level 2/.style={sibling distance=4cm}, 
level 3/.style={sibling distance=2cm,level distance=1cm}, 
level distance=2cm,
every node/.style = {font=\normalsize}]
\node  (v1) at (-1.5,0) {$\nodeLabelTree\textnormal{:}$};
  \node[circle, draw,minimum size=8mm,label=right:{\phantom{a}$(-,\{A\})$}] { $\varepsilon$}
    child {
      node[circle, draw,minimum size=8mm,label=right:{\phantom{a}$(\roleStarSymbol{r},\emptyset)$}] (left node) { $1$} 
      child {
        node[circle, draw,minimum size=8mm,label= right:{$(t,\{A,B\})$}] { $11$} 
        child[dotted] {node {}}
        child[dotted] {node {}}
      }
      child {
        node [circle, draw,minimum size=8mm,label=right:{$(-,\{\top\})$}]  {$12$}
        child[dotted] {node {}}
        child[dotted] {node {}}
      }
    }
    child {
      node[circle, draw, minimum size=8mm, label=right:{$(s,\{B'\})$}] { $2$}
      child {
        node [circle, draw,label=right:{$(-,\{\top\})$}]  {$21$}
        child[dotted] {node {}}
        child[dotted] {node {}}
      }
      child {
        node [circle, draw, minimum size=8mm, label=right:{$(-,\{\top\})$}]  {$22$}
        child[dotted] {node {}}
        child[dotted] {node {}}
      }
    };
\end{tikzpicture}
}
\caption{The  \ourDL description tree $\ourDLTree_C$ and  the node-labelled $2$-tree $\nodeLabelTree$ of the \ourDL shape 
   \mbox{$C = A \land \exists \roleStarOperation{r}\!.\exists t.(A \land B) \land \exists s.B' $}.}
    \label{fig:treeLabelling} 
    \vspace{-2mm}
\end{figure}

\begin{toappendix}
\paragraph{Two-way alternating parity tree automata.}

We recall the basics of two-way alternating automata on infinite node-labelled $k$ trees. 
For the presentation, we largely follow~\cite{DBLP:conf/dlog/CalvaneseGL02} and~\cite{cate2025extremalfittingproblemsconjunctive}.

\begin{definition}[$k$-ary Two-Way Alternating Parity Tree Automata.]
Let $\mathcal{B}(I)$ be the set of positive Boolean formulae over a set of variables $I$, built inductively by applying $\land$ and $\lor$ starting from $\bot$, $\top$ and the elements of $I$. For a set $J \subseteq I$ and a formula $C \in \mathcal{B}(I)$, we say $J$ \emph{satisfies} $C$ iff assigning $\top$ to the elements in $J$ and $\bot$ to those in $I \subseteq J$, makes $C$ true. For a positive integer $k$, let $[k] = \{-1, 0,1, \dots,k \}$. 
A \emph{$k$-ary two-way alternating parity tree automaton} ($k$-2ATA) running over \ourDL trees with degree $k$, 
is a tuple $\automaton = \langle \Sigma, Q, \delta, q_0, p \rangle$, where $\Sigma$ is an input alphabet, $Q$ is a finite set of states, $\delta: Q \times \Sigma \rightarrow \mathcal{B}([k]\times Q)$ is the transition function, $q_0 \in Q$ is the initial state and $p: Q \rightarrow \mathbb{N}$ is a priority function. 
\end{definition}

\begin{definition}[$k$-2ATA Run]
A \emph{run} of a $k$-2ATA $\mathfrak{A}$ over an
 \ourDL tree $\nodeLabelTree=(V,\ell)$ with degree $k$ is a $(V \times Q)$-labelled tree $\mathfrak{T}_{\run}=(V_{\run}, \ell_{\run})$ satisfying:
\begin{enumerate}
    \item $\varepsilon \in V_{\run}$ and $\ell_{\run}(\varepsilon) = (\varepsilon,q_0)$. 
    \item Let $y \in V_{\run}$ with $\ell_{\run}(y) = (x,q)$ and $\delta(q,\ell(x)) = C$. Then there is a (possibly empty) set $S = \{ (c_1,q_1), \dots, (c_n,q_n) \} \subseteq [k] \times Q$ such that: 
    \begin{itemize}
        \item $S$ satisfies $C$, and 
        \item for all $1 \leq i \leq n$, we have that $y \cdot i \in V_{\run}$, $x \cdot c_i \in V$ and $\ell_{\run}(y  \cdot i) = (x \cdot c_i, q_i)$.
    \end{itemize}
\end{enumerate}
Given an infinite path $p \subseteq V_{\run}$, let $\mathit{inf}(p) \subseteq Q$ be the set of states that appear infinitely often in $p$. A run $\mathfrak{T}_{\run}$ of a $k$-2ATA $\mathfrak{A} = \langle \Sigma, Q, \delta, q_0, p \rangle$ is \emph{accepting} if for every infinite path $p \subseteq V_{\run}$, the state with the highest priority 
in $\mathit{inf}(p)$ is even. \bg{Note that a finite run is always accepting.}

\bg{A \emph{partial run} of a $k$-ATA $\mathfrak{A}$ is a run that does not satisfy Condition 1 above. If a partial run has as its root a state $(w,q)$, we call it a \emph{partial run rooted at $(w,q)$}, or, for short, a \emph{partial $(w,q)$-run}.}
\end{definition}

\end{toappendix}






\paragraph{Deciding fitting existence with 2ATAs.}
Next, we want to construct a \emph{two-way alternating parity tree automaton} (2ATA) $\automaton_\Fmc$ that accepts exactly the \ayt{proper}
node-labelled $k$-ary trees that represent a 
fitting for some $\fit$. We let a 2ATA $\automaton$ be defined as usual, as a tuple $\automaton = \langle \Sigma, Q, \delta, q_0, p \rangle$, where $\Sigma$ is the input alphabet, $Q$ a finite set of states with an initial state $q_0$, a priority function $p$ mapping states to natural numbers, and a transition function $\delta$ mapping pairs of states and words in $\Sigma$ to some positive Boolean formulae over pairs of states and integers up to some $k$. We call a 2ATA that accepts only $k$-ary trees a \emph{$k$-2ATA}.

For a pointed graph $(\graph,v_0)$ and $k \geq 0$, we first define 
a $k$-2ATA $\automaton^{\graph, v_0}$,
\bg{which takes as input proper node-labelled \ourDL $k$-trees and accepts exactly those}
with a pre-match into $(\graph,v_0)$. 
Let $\graph = (V,E,\ell)$ and $v_0 \in V$. 
The $k$-2ATA $\mathfrak{A}^{\graph, v_0} = \langle \labelLang, Q, \delta,q_{v_0}, p \rangle$ 
has state set $Q$ and transition function $\delta$ as below. 
The priority function $p$ has $p(q_v) = 0$ for the states $q_v$ with $v\in V$, and $p(q) = 1$ for all other states.
{ \small
\begin{align*}
Q = & \{q_v\! \mid\! v\in V\} \ \bg{ \cup \ {q_{-}} } \cup \{q_r \mid r\in\SigR(\graph)\} \cup  \{q_{r^*}\! \mid\! r\in \SigR(\graph) \} \cup \{ q_{r^*v}\! \mid\! v \in V, r \in \SigR(\graph)\}
\end{align*}

{%
  \setlength{\abovedisplayskip}{0pt}%
  \setlength{\belowdisplayskip}{2pt}%
  \setlength{\abovedisplayshortskip}{0pt}%
  \setlength{\belowdisplayshortskip}{2pt}%

  \begin{align*}
    \delta(q_v, w) &= \begin{cases}
      \mathlarger{\bigwedge\limits_{1 \leq i \leq k} \Bigg (} 
      \bg{(i,q_-) \ \lor \ } \bigg ( (i,q_{r^*}) \land (i, q_{v}) \bigg )
      \ \lor \bigvee\limits_{ r(v,v') \in E} \\
      \qquad\qquad 
      \bigg ( ((i, q_r) \land (i, q_{v'})) \lor          
      \big ((i, q_{r^*}) \land (i, q_{r^*v'}) \big ) \bigg) 
      \Bigg ) & 
      \begin{aligned}    
        &\text{ if } v \in \semPath{A}_\graph  \\
        &\text{ for all } A \in w.\mathbf{A} 
      \end{aligned} 
      \\
      \bot & \text{ otherwise }
    \end{cases} 
  \end{align*}

  \begin{align*}
    \delta(q_{r^*v}, w) &= (0, q_v) \lor \bigvee\limits_{r(v,v') \in E} (0, q_{r^*v'}) 
    &
    \bg{\delta(q_{-},w)} &= \begin{cases} \top\!\!\!\!\! & \bg{\text{ if } - = w.R } \\
    \bot\!\!\!\!\! & \text{ otherwise }
    \end{cases} \\[1ex]
    \delta(q_{r},w) &= \begin{cases} \top\!\!\!\!\! & \text{ if } r = w.R 
    \\
    \bot\!\!\!\!\! & \text{ otherwise }
    \end{cases} 
    &
    \delta(q_{r^*},w) &= \begin{cases} \top\!\!\!\!\! & \text{ if } \roleStarSymbol{r}\! = w.R \\
    \bot\!\!\!\!\! & \text{ otherwise }
    \end{cases}
  \end{align*}
}
}

\bg{Intuitively, 
a successful run of $\automaton^{\graph, v_0}$
visits a node $w$ 
 of $\nodeLabelTreeWithoutC$ 
  in state $q_v$ if there is
a pre-match $\rho$ from $\nodeLabelTreeWithoutC$ into $v$ s.t. $\rho(w)= v$. 
Similarly, a visit to $w$ on $q_{r^*v}$ succeeds if there is
a pre-match $\rho$ from $\nodeLabelTreeWithoutC$ into $\graph$, but $\rho(w)= v$ is not enforced:  it suffices if the node $\rho(w)$ is reachable from $v$ along a path of $r$-edges.
The states $q_r$ and $q_{r^*}$ check for matching role labels, and $q_{-}$ checks if we have reached a dummy node in $\nodeLabelTreeWithoutC$. 
All transitions except those involving $q_{r^*v}$ require us to move to a new node in $\nodeLabelTreeWithoutC$, and 
the automaton halts whenever it reaches a node labelled $(-,\{\top\})$. 
This, together with the priority condition, ensures that all accepting runs on proper node-labelled $k$-trees are finite and witness the existence of a pre-match for the full concept corresponding to $\nodeLabelTreeWithoutC$. }
The following lemma establishes the correctness of the automaton. 

\begin{lemmarep} \label{lm:MatchingAutom} Given a pointed graph $(\graph,v_0)$, the $k$-2ATA 
$\automaton^{\graph, v_0}$ accepts a proper node-labelled \ourDL $k$-tree  
$\nodeLabelTreeWithoutC$ iff it has a pre-match into $(\graph,v_0)$. 
\end{lemmarep}

\begin{toappendix}
\begin{proof}
\bg{Let $\nodeLabelTreeWithoutC$ be a proper node-labelled \bg{\ourDL} $k$-tree with root $\varepsilon$,  and let $(\graph, v_0)$ be a pointed graph. \bg{
$\nodeLabelTreeWithoutC$ has an \emph{$r$-reachable} pre-match into $(\graph, v)$ if there exists some $v'$ in $\graph$ such that there exists a pre-match $\rho$ from $w$ in $\nodeLabelTreeWithoutC$ into $(\graph, v')$ s.t. $\rho(w)=v'$ and $v'$ can be reached from $v$ via a (possibly empty) path of $r$-edges.} 


We show the following:}

\begin{enumerate}
    \item Let $w\in\nodeLabelTreeWithoutC$ and let $w\neq \varepsilon$, and let $q_{-}\in Q$. There exists a partial $(w,q_{-})$-run  iff $w.R=-$. Moreover, this run will contain only finite paths. The analogous claims hold for partial $(w,q_{r})$-runs and partial $(w,q_{\roleStarOperation{r}})$-runs.
    \item Let $w\in\nodeLabelTreeWithoutC$ and let $q_{v}\in Q$. There exists an accepting partial $(w,q_v)$-run iff the tree $\nodeLabelTreeWithoutC^w$ rooted at $w$ has a pre-match into $(\graph, v)$.
    \item Let $w\in\nodeLabelTreeWithoutC$ and let $q_{r^*v}\in Q$. There exists an accepting partial $(w,q_{r^*v})$-run iff the tree $\nodeLabelTreeWithoutC^w$ rooted at $w$ has an $r$-reachable pre-match into $(\graph, v)$. 
    \item Given $w\in\nodeLabelTreeWithoutC$ and $q\in Q$, let $\nodeLabelTreeWithoutC_{\run'}$ be a partial $(w, q)$-run and $p$ an infinite path in $\nodeLabelTreeWithoutC_{\run'}$. If $q$ occurs infinitely often on $p$, it is of the form $q_{r^*v}$.
    \item If $\nodeLabelTreeWithoutC_{\run'}$ is an accepting partial $(w, q)$-run, all its paths are finite.
\end{enumerate}

Claim 1 follows directly from the definition of the transition function $\delta(q_{-}, w)$, $\delta(q_{r}, w)$ and $\delta(q_{\roleStarOperation{r}}, w)$ in $\automaton^{\graph,v_0}$.

Claim 2 follows from the transition function $\delta(q_v,w)$ inductively as follows. For the right-to-left direction, we show the contrapositive: If there is no pre-match from $\nodeLabelTreeWithoutC^w$ rooted at $w$ into $(\graph, v)$, then the concept labels in $w.\mathbf{A}$ do not match the label of $v$ in $\graph$, and the run fails. 

For the left-to-right direction, we assume there is a pre-match from $\nodeLabelTreeWithoutC^w$ into $(\graph, v)$. In that case the labels in $w$ match those in $v$. If $v$ has no outgoing edges, then the automaton can choose to either search for a child node $w\cdot i$ with $w\cdot i.R = -$, or with $w\cdot i. R = \roleStarSymbol{r}$ and the concept labels in $w\cdot i.\mathbf{A}$ match the label of  $v$. Since there exists a pre-match from $\nodeLabelTreeWithoutC^w$ into $(\graph, v)$, one of these cases must be successful. In the former case, there exists a partial $(w\cdot i,q_{-})$-run, which is finite by Claim 1, and therefore accepting. Similarly, if the latter check is successful, there exists an accepting partial $(w\cdot i,q_{\roleStarSymbol{r}})$-run, and since the labels in $w\cdot i.\mathbf{A}$ match the label of  $v$, there is a pre-match from $w \cdot i$ into $(\graph, v)$, and the same holds for any further successor nodes of $w\cdot i$ along the branch. Since we have defined $\nodeLabelTreeWithoutC$ as a proper node-labelled tree, every branch in $\nodeLabelTreeWithoutC^w$ will reach a node $w'=(-,\{\top\})$ in a finite number of steps. Thus, we repeat the previous check until we have found such a node $w'$, at which point we must have $w'.R=-$, and therefore the partial $(w',q_{-})$-run is accepting by Claim 1. It then follows by the definition of $\delta(w,q_v)$, that the partial $(w,q_v)$-run is accepting.  
In case $v$ does have some outgoing edge $r(v,v')\in E$,  the automaton has two options, one of which must give an accepting run. In the first option, it may choose to transition into $(w\cdot i, q_r)$ and $(w\cdot i, q_{v'})$. Since there is a pre-match from $\nodeLabelTreeWithoutC^w$ into $(\graph, v)$, we will have that $w\cdot i.R = r$ for some child node $w\cdot i$ of $w$ and there is a pre-match from the tree rooted at $w\cdot i$ into $(\graph, v')$. Therefore, the partial $(w\cdot i, q_r)$-run and the partial $(w\cdot i, q_{v'})$-run are both accepting, and therefore so is the partial $(w, q_v)$-run.
If the former option check fails, it may transition into $(w\cdot i, q_{r^*})$ and $(w\cdot i, q_{r^*v'})$. Since there must exist a pre-match from the tree rooted at $w\cdot i$ into $(\graph, v')$, by Claim 1 and Claim 3 both the partial $(w\cdot i, q_{r^*})$-run and the partial $(w\cdot i, q_{r^*v'})$-run will be accepting and therefore so is the partial $(w, q_v)$-run.

Claim 3 follows similarly, from the transition function $\delta(q_{r^*v},w)$. For the right-to-left direction, assume there exists an accepting partial $(w,q_{r^*v})$-run $\nodeLabelTreeWithoutC_{\run'}$. Assume for contradiction that there is no $r$-reachable pre-match from $\nodeLabelTreeWithoutC^w$ into $(\graph, v)$. Then, by construction of $\delta(q_{r^*v},w)$, the check $(0, q_v)$ will fail, so we must check $(w, q_{r^*v'})$ for some outgoing edge $r(v,v')\in E$. This check will be repeated until there are no more outgoing edges in $\graph$, in which case the run will fail, or it will loop infinitely often, in which case $(w,q_{r^*v})$ will occur infinitely often in this branch. Since such a state has $p(q_{r^*v})=1$ this also results in a non-accepting run.

For the left-to-right direction, we assume there exists an $r$-reachable pre-match from $\nodeLabelTreeWithoutC^w$ into $(\graph, v)$. In case $v$ has no outgoing edges in $\graph$, we move into $(w, q_v)$. Since there exists a pre-match from $\nodeLabelTreeWithoutC^w$ into some node $v'$ which is reachable via an $r$-path in $\graph$ from $v$, but $v$ has no outgoing $r$-edges, this $r$-path must be vacuous and the pre-match exists at $v$ itself, then by Claim 2 the partial run rooted at $(w, q_v)$ will be accepting. 
If there does exist some outgoing edge $r(v,v')\in E$, then we may also move into $(w, q_{r^*v'})$. By assumption, there also exists  an $r$-reachable pre-match from $\nodeLabelTreeWithoutC^w$ into $(\graph, v')$. So we may repeat this check along the $r$-path in $\graph$ until we have reached the node $v^i$ such that there exists a pre-match from $\nodeLabelTreeWithoutC^w$ into $v^i$, at which point the partial run rooted at $(w, q_{v^ i})$ will be accepting. Then the partial $(w, q_{r^*v^i})$-run will accept as well, and hence the partial $(w, q_{r^*v})$-run is accepting.

For Claim 4, assume we have a partial $(w, q)$-run and $p$ an infinite path in $\nodeLabelTreeWithoutC_{\run'}$ on which $q$ occurs infinitely often. By Claim 1, $q$ cannot be of the form $q_{-}$, $q_r$ or $q_{r^*}$. If $q$ is of the form $q_v$, then from the proof of Claim 2 and the properness of $\nodeLabelTreeWithoutC$, it follows that we will always reach a state $q_{-}$ in a finite number of steps. Thus $q_v$ also cannot occur infinitely often. Therefore any $q$ occurring infinitely often on $p$ must be of the form $q_{r^*v}$. 

Claim 5 follows as a corollary from Claim 4 and the priority function of $\automaton^{\graph,v_0}$.

It then follows from the acceptance conditions of $k$-2ATAs that $\nodeLabelTreeWithoutC_\run$ is an accepting run iff there exists a pre-match from $\nodeLabelTreeWithoutC$ into $(\graph, v_0)$. 

\end{proof}
\end{toappendix}


 $\overline{\automaton}$ denotes the  \emph{complement automaton} of a $k$-2ATA $\automaton$.   $\automaton_{\mathscr{A}_n}$ denotes the \emph{intersection automaton} over a finite set of $k$-2ATAs, $\mathscr{A}_n = \{\automaton_1, \dots \automaton_n\}$. 
Both set operations  are trivial for 2ATAs (cf. \cite{10.1007/BFb0055090}).
%
The complement automaton
 $\overline{\automaton^{\graph, v_0}}$ of $\automaton^{\graph, v_0}$ is a $k$-2ATA that accepts exactly those proper node-labelled $k$-trees that do not have a pre-match into $(\graph,v_0)$. 
 Hence, for a fitting instance $\fittingNoCat$, we can build $\automaton^{\graph, v}$  for every $v \in \PEx$, and $\overline{\automaton^{\graph, n}}$ for every $n \in \NEx$. Then we build the intersection automaton over all of these, which accepts exactly those proper node-labelled $k$-trees over $\Sigma$ that represent fittings for $\Fmc$.

\begin{toappendix}
\paragraph{Complementing and intersecting 2ATAs.} Given a $k$-2ATA $\automaton^{\graph, v_0}$, its \emph{complement automaton} $\overline{\automaton^{\graph, v_0}}$ can be defined in the usual way, by dualising the transition function and changing the priority function. For more details on this construction, we refer interested readers to~\cite{DBLP:journals/tcs/MullerS87}.


From \Cref{lm:MatchingAutom} and the construction of the complement automaton, we directly get the following corollary.
\begin{corollary}  \label{crl:ComplementAutom}
Given a data graph $\graph = (V,E,\ell)$ and a 
node $v \in V$,  its complement automaton is a $k$-2ATA $\overline{\automaton^{\graph, v_0}}$ that
accepts exactly those proper node-labelled $k$-trees $\nodeLabelTreeWithoutC$ such that there is not a pre-match from $\nodeLabelTreeWithoutC$ into $v$.
\end{corollary}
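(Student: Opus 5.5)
The corollary should follow from \Cref{lm:MatchingAutom} together with the standard semantics of complementation for alternating parity automata. I take the complement $\overline{\automaton^{\graph, v_0}}$ to be the dual automaton. It has the same alphabet $\labelLang$, the same states and the same initial state $q_{v_0}$. Its transition function is obtained by dualising each formula, swapping $\land$ with $\lor$ and $\top$ with $\bot$. Its priority function is $\bar p(q) = p(q)+1$. Dualising does not change the direction set $[k]$ or the alphabet, so the complement is again a $k$-2ATA over the same input trees.

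The central step is the dualisation theorem for 2ATAs (Muller and Schupp): for every input tree $\nodeLabelTreeWithoutC$, the dual automaton accepts $\nodeLabelTreeWithoutC$ iff the original does not. I would obtain this through the game view. Acceptance of $\nodeLabelTreeWithoutC$ by $\automaton$ is equivalent to Player Automaton winning a parity game on $\nodeLabelTreeWithoutC\times Q$. In this game Automaton resolves disjunctions, Pathfinder resolves conjunctions, and an infinite play is won by Automaton iff the highest priority occurring infinitely often is even. Dualising swaps the roles of the two players, and shifting priorities by one swaps the winning condition. Parity games are determined, so exactly one player wins. Hence the dual accepts $\nodeLabelTreeWithoutC$ iff $\automaton^{\graph,v_0}$ rejects it. The paper cites this construction, so I would invoke it rather than reprove it.

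Now restrict to proper node-labelled $k$-trees $\nodeLabelTreeWithoutC$. By \Cref{lm:MatchingAutom}, $\automaton^{\graph,v_0}$ accepts such a tree iff it has a pre-match into $(\graph,v_0)$. So $\overline{\automaton^{\graph,v_0}}$ accepts a proper tree iff no such pre-match exists, which is exactly the claim. The node of the corollary is written $v$ in one place and $v_0$ in another; I read both as the same distinguished node $v_0$.

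One subtlety remains. The complement also accepts non-proper trees, because the original rejects them. The corollary only speaks about proper trees, so this causes no problem. When the complement is later used for fitting existence, properness over $\Sigma$ can be enforced separately by intersecting with a small automaton that checks properness. The only non-routine ingredient is the correctness of dualisation, in particular the determinacy of parity games, and that is covered by the cited result.
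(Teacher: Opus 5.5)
Your proposal is correct and is essentially the paper's own argument. The paper likewise takes $\overline{\automaton^{\graph, v_0}}$ to be the dual automaton (dualised transitions and adjusted priorities, citing Muller and Schupp) and reads the corollary off \Cref{lm:MatchingAutom} restricted to proper trees, while your added remarks (that $v$ and $v_0$ denote the same node, and that the complement also accepts non-proper trees, so properness over $\Sigma$ must be enforced separately when intersecting) are accurate points the paper leaves implicit.
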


Given a set of $k$-2ATAs, $\mathscr{A}_n = \{ \automaton_n, \dots, \automaton_{n}\}$ of size $n$, where for each $\automaton_i = \langle Q_i, \labelLang, \delta_i, q_i, p_i \rangle$ with $1\leq i \leq n$, we can define the intersection automaton as a $k$-2ATA $\automaton_{\mathscr{A}_n} = \langle  Q_{\mathscr{A}_n}, \labelLang, \delta_{\mathscr{A}_n}, q_{\mathscr{A}_n}, p_{\mathscr{A}_n}  \rangle$ as follows: 
\begin{align*}
    Q_{\mathscr{A}_n} &= \bigcup_{1\leq i \leq n} Q_i \ \cup \ \{ q_{\mathscr{A}_n} \} \\
    \delta_{\mathscr{A}_n}(q,w) & = 
    \begin{aligned}
    \begin{cases}        
     \mathlarger{\bigwedge\limits_{1 \leq i \leq n}} (0,q_i) & \text{ if } q = q_{\mathscr{A}_n} \\
     \delta_i(q,w) & \text{ if otherwise } q \in Q_i \text{ for some } i 
    \end{cases}
    \end{aligned} \\
    p(q) & = 
    \begin{aligned}
        \begin{cases}        
     0 & \text{ if } q = q_{\mathscr{A}_n} \\
     p_i(q) & \text{ if otherwise } q \in Q_i \text{ for some } i 
    \end{cases}
    \end{aligned} 
\end{align*}

We say that $\automaton_{\mathscr{A}_n}$ is the intersection over $\mathscr{A}_n$.
\begin{lemma} \label{lm:IntersectionAutom}
    Given a set of $k$-2ATAs, $\mathscr{A}_n = \{ \automaton_1,  \dots, \automaton_n\}$, let  $\automaton_{\mathscr{A}_n}$ be the intersection over $\mathscr{A}_n$, then we have that $w \in L(\automaton_{\mathscr{A}_n})$ iff $w \in \bigcap_{1 \leq i \leq n} L(\automaton_i)$.
\end{lemma}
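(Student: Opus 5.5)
The plan is to compare runs of $\automaton_{\mathscr{A}_n}$ and runs of the individual $\automaton_i$ on a fixed input tree $\nodeLabelTreeWithoutC=(V,\ell)$, proving both inclusions directly. W.l.o.g.\ the state sets $Q_i$ are pairwise disjoint and do not contain $q_{\mathscr{A}_n}$; this can be arranged by renaming. The case distinction in $\delta_{\mathscr{A}_n}$ and $p_{\mathscr{A}_n}$ then is well defined, and on every state of $Q_i$ the new automaton behaves exactly like $\automaton_i$.

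\emph{($\Leftarrow$)} Suppose that for each $i$ there is an accepting run $\mathfrak{T}_{\run_i}$ of $\automaton_i$ on $\nodeLabelTreeWithoutC$, with root label $(\varepsilon,q_i)$. We build a run $\mathfrak{T}_{\run}$ of $\automaton_{\mathscr{A}_n}$ as follows. It has a root $\varepsilon$ labelled $(\varepsilon,q_{\mathscr{A}_n})$ with children $1,\dots,n$, and the subtree below child $i$ is an isomorphic copy of $\mathfrak{T}_{\run_i}$. At the root, the set $\{(0,q_1),\dots,(0,q_n)\}$ satisfies $\bigwedge_i (0,q_i)$, and moving in direction $0$ keeps us at $\varepsilon\cdot 0=\varepsilon$, so condition 2 holds there. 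Below child $i$, every node carries a state of $Q_i$. Hence $\delta_{\mathscr{A}_n}=\delta_i$ on it, and condition 2 is inherited from $\mathfrak{T}_{\run_i}$. Every infinite path of $\mathfrak{T}_{\run}$ is the root followed by an infinite path of some $\mathfrak{T}_{\run_i}$. Its set $\mathit{inf}$ therefore coincides with that of the $\mathfrak{T}_{\run_i}$-path, the priorities agree with $p_i$, and the maximal priority is even. So $\mathfrak{T}_{\run}$ is accepting.

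\emph{($\Rightarrow$)} Let $\mathfrak{T}_{\run}$ be an accepting run of $\automaton_{\mathscr{A}_n}$. At the root the chosen set $S$ must satisfy $\bigwedge_i (0,q_i)$, so for every $i$ some child $y_i$ is labelled $(\varepsilon,q_i)$. The key invariant is that every transition formula $\delta_i(q,w)$ mentions only states of $Q_i$, and $q_{\mathscr{A}_n}$ never reappears. Hence, by induction on depth, all descendants of $y_i$ carry states in $Q_i$. The subtree rooted at $y_i$ is then a run of $\automaton_i$: its root label is correct and condition 2 holds because $\delta_{\mathscr{A}_n}$ agrees with $\delta_i$ there. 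Its infinite paths are suffixes of infinite paths of $\mathfrak{T}_{\run}$. Since $\mathit{inf}$ is insensitive to finite prefixes and the priorities coincide with $p_i$, this subtree is accepting. Thus $\nodeLabelTreeWithoutC\in L(\automaton_i)$ for all $i$.

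The only point requiring care is this invariant that runs never mix state sets, together with the harmlessness of the extra root step. The latter is a single $0$-move and a finite prefix, so it does not affect the parity condition. The rest is bookkeeping.
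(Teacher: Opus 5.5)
The paper does not actually prove this lemma. It only gives the construction and remarks that intersection is ``trivial for 2ATAs'', citing Vardi. Your run-merging and run-splitting argument is the standard one behind that remark. The ($\Leftarrow$) direction is fine as written, and ($\Rightarrow$) is correct up to one step that needs a small repair.

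\textbf{The step to fix.} In ($\Rightarrow$), the claim that all descendants of $y_i$ carry states in $Q_i$ does not follow from the fact that $\delta_i(q,w)$ only mentions states of $Q_i$. Under the paper's definition of a run, the set $S$ chosen at a node only has to \emph{satisfy} the transition formula, and $S$ ranges over subsets of $[k]\times Q_{\mathscr{A}_n}$. Since the formulae are positive, any superset of a satisfying set is again satisfying. So an accepting run of $\automaton_{\mathscr{A}_n}$ may contain superfluous children below $y_i$ labelled with states from some $Q_j$, $j\neq i$, or even with $q_{\mathscr{A}_n}$. In that case the subtree at $y_i$ is not literally a run of $\automaton_i$.

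\textbf{The repair is a pruning step.} At each retained node $(x,q)$ with $q\in Q_i$:
\begin{itemize}
\item replace $S$ by $S\cap([k]\times Q_i)$, which still satisfies $\delta_i(q,\ell(x))$, because the truth value of a Boolean formula depends only on the variables occurring in it;
\item discard all other children together with their subtrees, and re-index the kept children.
\end{itemize}
Every infinite path of the pruned tree is still a suffix of an infinite path of the original run, so your parity argument goes through verbatim.

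\textbf{Your renaming is necessary.} In the paper's application, all the $\automaton^{\graph,v}$ share one state set, and the complements $\overline{\automaton^{\graph,n}}$ reuse the same state names with dualised transitions. Hence the case distinction in $\delta_{\mathscr{A}_n}$ and $p_{\mathscr{A}_n}$ is only well defined once the $Q_i$ are made disjoint, as you assume.
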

\end{toappendix}
 To decide fitting existence, it remains to show that there is always a sufficiently large $k$ so that the intersection automaton accepts some fitting if it exists. 
\begin{lemmarep} \label{lm:branching}
    Let $\mathcal{F} = (\graph, \PEx, \NEx, \Sigma)$ be an instance with $\graph = (V,E,\ell)$. If there is 
    an \ourDL fitting for $\mathcal{F}$, then there is an \ourDL fitting $C$ such that $\mathit{deg}(\ourDLTree_C) = |V|$. 
\end{lemmarep}
\begin{proof}

Take any \ourDL fitting $C$ for $\mathcal{F}$. Let $\ourDLTree_C$ be the \ourDL tree with degree $k$ for $C$, where $k$ > $|V|$. Let $n$ be the root node of $\ourDLTree_C$. Let $u$ by any node in $\ourDLTree_C$. By construction we know that $u$ has exactly $k$ successors, $y_1, \dots, y_k$. For $1 \leq i \leq k$, let $\ourDLTree_C|^i_u$ be the subtrees rooted at $u$, obtained by deleting all successors $y_i, \dots, y_k$ and the subtrees below them from $u$. Let $S_i$ be the set of all $v \in V$ such that there is a pre-match $\rho$ from $\ourDLTree_C|^i_u$ to $\graph$ with $\rho(u) = v$. From this definition, we have $S_1 \supseteq S_2 \supseteq \dots$.
There must then exist some $j \leq |V|$, such that $S_j = S_{j+1} = \dots = S_k$. This follows from the fact that $k > |V|$ and hence we cannot decrease the sets $S_i$ more often than there are elements in $V$.

Let $\ourDLTree_D$ be the \ourDL description tree obtained from replacing $u$ in $\ourDLTree_C$ with $\ourDLTree_D|^j_u$ and $D$ the corresponding shape of $\ourDLTree_D$,
i.e.\ of the subtree we get by removing all successors $y_{j+1}, \dots, y_k$ of $u$ from $\ourDLTree_C$. We show that $D$ is a fitting for $\mathcal{F}$. It follows from our construction that $D$ fits all the positive examples, since we obtained $D$ by dropping subexpressions from $C$. It remains to show that $D$ fits none of the elements in $\NEx$, or formally we want to show that there does not exist a $v \in \NEx$ s.t.\ $v \in D^\graph$.
Let us assume to the contrary that there is some $v \in \NEx$ such that there is a pre-match $\rho$ from $\ourDLTree_D$ to $\graph$, where $\rho(n) = v$ 
(and therefore also $v \in {D}^\graph$ via \Cref{lm:preMatchCapturesEval}). Then it follows that we can construct a pre-match $\rho'$ from $\ourDLTree_C$ to $\graph$, with $\rho'(n) = v$. To see why, consider that the only difference between $\ourDLTree_D$ and $\ourDLTree_C$ lies in the successors $y_{j+i}, \dots, y_m$ that we removed in $\ourDLTree_D$. Yet we know that $S_j = S_{j+1} = \dots = S_k$. In other words, we know that for all elements in $S_j$, there must be pre-matches to extend them to $\ourDLTree_C$.  This, however, contradicts our assumption that $C$ was a fitting for $\mathcal{F}$.

Note that this construction only reduces the branching of \emph{one} node to at most $j$, where $j \leq |V|$.
By applying this construction for any node in $\ourDLTree_C$, we can construct an \ourDL description tree with degree  $k'=|V|$ such that its corresponding shape is a fitting for $\mathcal{F}$.    \qed
\end{proof}
%
%
This completes the construction of the $k$-2ATA that accepts encodings of \ourDL fittings, if they exist. 
%
\begin{theoremrep} \label{thm:fittingExistenceIntersection}
Given a fitting instance \fittingNoCat, 
let $k$ be the number of nodes in $\graph$, and  let  $\automaton_{\fit}$  be the $k$-2ATA obtained by intersecting 
$\automaton^{\graph, v}$  for every $v \in \PEx$ and 
$\overline{\automaton^{\graph,n}}$ for each $n \in \NEx$. 
Then $\automaton_{\fit}$ accepts a \ayt{proper} node-labelled $k$-tree $\nodeLabelTree$ over the signature $\Sigma$ 
if and only if
$C$ is an \ourDL fitting for $\mathcal{F}$.
\end{theoremrep}

\begin{proof} 

We show the claim by considering each direction separately. \\
\noindent
For the left-to-right direction, we assume that $\nodeLabelTree$ is accepted by $\automaton_\fit$ and let $C$ be its corresponding \ourDL shape. We know from \Cref{lm:MatchingAutom} that each $\automaton^{\graph, v}$ accepts a proper node-labelled \ourDL $k$-tree  when there exists a pre-match from this tree into $v \in \PEx$. Furthermore, by \Cref{crl:ComplementAutom}, we know that each $\overline{\automaton^{\graph,n}}$ only accepts a proper node-labelled \ourDL $k$-tree if it has no pre-match into any $n \in \NEx$. From  \Cref{lm:IntersectionAutom}, we know that $\nodeLabelTree$ is accepted by every automaton in th set of $k$-2ATAs $\mathscr{A}_m$ (where $m=|\PEx\cup\NEx|\leq |V|$) consisting of all $\automaton^{\graph, v}$ for each $v\in\PEx$ and $\overline{\automaton^{\graph,n}}$ for all $n\in\NEx$, so it is also accepted by the intersection automaton $\automaton_\Fmc$ over $\mathscr{A}_m$. Together with  \Cref{lm:preMatchCapturesEval} it follows that for every $v \in \PEx$, we have $v \in \semPath{C}_\graph$ and for every $n \in \NEx$,  we have $n \not \in \semPath{C}_\graph$. Combined with the observation that $\Sigma(C) \subseteq \Sigma$ by our initial assumption, it follows that $C$ is a fitting for $\mathcal{F}$.   \\

For the right-to-left direction, we assume that there exists some \ourDL shape $C$ that is a fitting for \fittinginst. From \Cref{lm:branching} it follows that there must exist some \ourDL fitting $D$ such that  $\nodeLabelTreeOther$ has degree at most $|V|=k$. We know that for every $v \in \PEx$ (resp. every $n \in \NEx)$, it holds that $v \in \semPath{D}_\graph$ (resp. $n \not \in \semPath{D}_\graph$). By  \Cref{lm:preMatchCapturesEval}, it follows that $\nodeLabelTreeWithoutC_D$ has a pre-match into each $v \in \PEx$ (resp. has no pre-match into any $n \in \NEx$). Combining this observation with \Cref{lm:MatchingAutom} and its corollary, we know that $\nodeLabelTreeOther$ is accepted by every automaton in $\mathscr{A}_m$, if we set $m = |\PEx\cup\NEx|\leq |V|$. From \Cref{lm:IntersectionAutom} it then follows that $\nodeLabelTreeOther$ is accepted by the intersection automaton $\automaton_\fit$ over $\mathscr{A}_m$. \qed
\end{proof}

Finally, through a non-emptiness check on the $k$-2ATA $\automaton_\Fmc$, which can be done in \mbox{\textsc{ExpTime}}, we obtain the \mbox{\textsc{ExpTime}} upper bound on the \ourDL fitting existence problem for instances with an empty shape catalogue. 
	From the complexity results for the three semantics in \Cref{thm:LFPTime} and \Cref{lemma:FittingEmptyCat}, it follows that even with non-empty catalogues, the \ourDL fitting existence problem can be solved in \textsc{ExpTime} under WFS, STS and SUS. This and \Cref{prop:FittingExpHard} then imply \textsc{ExpTime}-completeness of the \ourDL fitting existence problem for under these three semantics.


\begin{theorem}
\label{thm:FittingExCompleteness}
 The $\ourDL$\!\! fitting existence problem 
under WFS, STS and SUS is \textsc{ExpTime}-complete. The hardness holds even 
for empty shape catalogues.
\end{theorem}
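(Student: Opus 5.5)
The proof has two parts: hardness and a matching upper bound. Hardness comes straight from \Cref{prop:FittingExpHard}. That proposition already gives \textsc{ExpTime}-hardness for instances $\fittingNoCatSig$ with an empty catalogue. For an empty catalogue every semantics yields only the empty assignment, so the three semantics coincide, and hardness holds for WFS, STS and SUS alike.

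For the upper bound, I first treat empty catalogues. Let $k=|V|$. By \Cref{lm:branching}, if a fitting exists then one exists whose description tree has degree at most $k$. By \Cref{thm:fittingExistenceIntersection}, fittings are then exactly the proper node-labelled $k$-trees over $\Sigma$ accepted by $\automaton_\fit$.

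Two details need care here.
\begin{itemize}
\item \emph{Properness and signature.} Acceptance of \emph{proper} trees over $\Sigma$ is not a regular property automatically enforced by $\automaton_\fit$. I would intersect $\automaton_\fit$ with a small 2ATA checking properness and the signature restriction. The finite-branch condition ("every branch reaches $(-,\{\top\})$") is handled with a parity condition that forbids infinitely many non-dummy nodes on a branch.
\item \emph{Size.} $\automaton_\fit$ has polynomially many states in $|\graph|$: each $\automaton^{\graph,v}$ has $O(|V|\cdot|\SigR(\graph)|)$ states, and complementation and intersection are linear. Its alphabet $\labelLang$ is exponential in $|\SigC|$ only.
\end{itemize}
Non-emptiness of 2ATAs is decidable in time exponential in the number of states and polynomial in the alphabet size, which gives \textsc{ExpTime} overall.

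For non-empty catalogues, \Cref{lemma:FittingEmptyCat} reduces fitting existence to fitting existence for $(\graph^\alpha,\emptyset,\PEx,\NEx,\Sigma)$ for some $\alpha\in\SemGra$.
\begin{itemize}
\item \emph{WFS.} By \Cref{thm:LFPTime}, $\wfass$ is unique and computable in polynomial time. We build $\graph^{\mathsf{wf}}$ and run the empty-catalogue procedure.
\item \emph{SUS and STS.} There are at most $2^{|V\cup\NSetI(\cat)|\cdot|\NSetS\cap\SigC(\cat)|}$ candidate assignments, which is exponentially many. We enumerate them, test membership in $\supsem$ or $\stsem$ in polynomial time by \Cref{thm:LFPTime}, and for each valid $\alpha$ run the \textsc{ExpTime} check on $\graph^\alpha$. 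Exponentially many exponential-time checks remain in \textsc{ExpTime}.
\end{itemize}

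The main obstacle I expect is the complexity accounting of the non-emptiness check. We must make sure the exponential blow-up comes only from the polynomially many automaton states, and that the alphabet (exponential in $|\SigC|$) and the shape names added by adornment do not push the bound to doubly exponential. I would handle this by citing the standard bound for 2ATA emptiness, time $2^{O(n^2)}\cdot|\labelLang|^{O(1)}$ for an automaton with $n$ states. Both factors stay singly exponential in the input.
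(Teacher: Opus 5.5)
Your proposal is correct and follows the paper's proof. Hardness comes from Proposition~\ref{prop:FittingExpHard} with an empty catalogue. The upper bound comes from a non-emptiness test on $\mathfrak{A}_{\mathcal{F}}$, justified by Lemma~\ref{lm:branching} and Theorem~\ref{thm:fittingExistenceIntersection}, and is lifted to catalogues via Lemma~\ref{lemma:FittingEmptyCat} and Theorem~\ref{thm:LFPTime} (one adorned graph $\mathcal{G}^{\mathsf{wf}}$ for WFS, exponentially many polynomially checkable assignments for SUS and STS). Your additional intersection with an automaton enforcing properness and the signature, together with the alphabet-size accounting, makes explicit two points the paper leaves implicit: Theorem~\ref{thm:fittingExistenceIntersection} only speaks about proper trees, and the label alphabet is exponential in $|\Sigma_{\mathsf{C}}|$.
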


\begin{proof}
The hardness was shown in~\Cref{prop:FittingExpHard}.
Using \Cref{thm:fittingExistenceIntersection}, we can decide existence of an \ourDL fitting for $\Fmc=(\graph, \PEx,\NEx,\Sigma)$ by relying on a non-emptiness check on $\mathfrak{A}_\fit$. Emptiness of 2ATAs is decidable in single exponential time in the combined size of the state set and the parity condition \cite{10.1007/BFb0055090}; both are polynomially bounded in $\fit$, so the test is feasible in time exponential in the size of $\fit$. 
By Lemma \ref{lemma:FittingEmptyCat} and the complexity results for computing and testing assignments under the three semantics in \Cref{thm:LFPTime}, then the fitting existence problem can be solved in \textsc{ExpTime}  under all three semantics and for any fitting instance. Given $\Fmc'=(\Gmc^\alpha,\PEx,\NEx,\Sigma)$, for WFS we let $\Gmc^\alpha=\Gmc^\textsf{wf}$ and check in \textsc{ExpTime} if an \ourDL fitting exists for $\Fmc'$. For SUS and STS, we list all the possible $\alpha\in\Sem(\graph^\alpha,\emptyset)^\mathsf{sup}$ (resp. $\alpha\in\Sem(\graph^\alpha,\emptyset)^\mathsf{st}$) and for each of them, we decide if a fitting exists over $\Fmc'$. Since there are exponentially many such $\alpha$, this can be done in \textsc{ExpTime}. Then by \Cref{lemma:FittingEmptyCat}, for any $\Sem$, if a fitting exists for $\Fmc'$, this is also a fitting for $\fittinginst$ Thus the \ourDL fitting existence problem is decidable in \textsc{ExpTime} under the three semantics. \qed
\end{proof}
\section{Computation of Most Specific Fittings in \ourDL
}


Before presenting our computation algorithms for  most specific fittings, we note that
a fitting problem that has a solution need not have a most specific one. There may be infinitely many, increasingly specific solutions, as illustrated on this 
example, \bg{which also appears in}  \cite{BKM-IJCAI-99,cate2025extremalfittingproblemsconjunctive}. 
This holds even if the node set of $\graph$ and $\PEx$ are singletons and $\NEx = \emptyset$ and is unaffected by the presence of $r^-$ or $r^*$. 
%
\begin{example}
\label{ex:NoMSF}
Consider $\mathcal{F}=(\graph, \{v\},\emptyset)$ with $v$ the only node in $\graph$: 
\smash[c]{ \raisebox{-5pt}{
    \begin{tikzpicture}[scale=0.7,->, >=Stealth, node distance=2cm, every node/.style = {font=\normalsize}]
        \node  (a) at (0,0) {\nodepositive };
        \path (a) edge [loop right] node[right] {$r$} (a);
    \end{tikzpicture}
}}.
%
%
%
     Then $C= \exists \roleStarOperation{r}\!.\top$ and $D=\exists r.\top$ are both \ourDL fittings for $\mathcal{F}$, with $D\subseteq C$, thus $D$ is more specific than $C$. However, we also get the fitting $D'=\exists r.(\top \land \exists r.\top)$, with $D'\subseteq D$. In fact, any fitting for $\Fmc$ can be subsumed by a more specific one, hence there is no (\ELI or \ourDL) fitting that is an MSF. 
\end{example}


Even when MSFs exist for both \ELI and \ourDL, they need not coincide: 

\begin{example}
\label{ex:ELI*morespecific}
Consider a fitting instance $\mathcal{F}=(\graph, \PEx,\emptyset)$ with $\PEx$ marked green:
\vspace{-3mm}
\begin{center}
    \begin{tikzpicture}[scale=0.7,->, >=Stealth, node distance=2cm, every node/.style = {font=\normalsize}]

\node[label={[label distance=-1.5mm]90:A}]  (a) at (8,0) {\nodepositive};
\node[label={[label distance=-1.5mm]90:B}]  (a1) at (10,0) { \nodebullet};

\node  (b) at (0,0) {\nodepositive};
\node  (b1) at (-2,0) { \nodebullet};
\node  (b2) at (2,0) { \nodebullet};
\node  (b21)[label={[label distance=-1.5mm]90:A}] at (4,0) { \nodebullet}; 
\node  (b212)[label={[label distance=-1.5mm]90:B}] at (6,0) { \nodebullet};

\path (a) edge node[above] {$s$} (a1);
\path (b) edge node[above] {$s$} (b1);
\path (b) edge node[above] {$r$} (b2);
\path (b2) edge node[above] {$r$} (b21);
\path (b21) edge node[above] {$s$} (b212);
\end{tikzpicture}
\end{center}
\vspace{-4mm}
The \ELI MSF is the shape $C_1 = \exists s.\top$. However, in \ourDL we can express the shape $C_2 = \exists s.\top \land \exists \roleStarOperation{r}\!.(A \land \exists s. B)$, with $C_2\subseteq C_1$. In fact, $C_2$ is the \ourDL MSF.
\end{example}

\bente{removed reference ot Proposition 4 (appendix)}
\bg{The \ourDL and \ELI MSF need not coincide, and in fact, unlike for arbitrary fittings, the existence of an \ELI MSF does not imply the existence of an \ourDL MSF,} as illustrated by Example \ref{ex:runningExMSF}. We will revisit this later in the section. 

\begin{example}
\label{ex:runningExMSF}
Consider $\mathcal{F}=(\mathcal{G}, \PEx,\emptyset)$ with $\PEx$ marked green:
\vspace{-3mm}
\begin{center}
                \begin{tikzpicture}[scale=0.7,->, >=Stealth, node distance=2cm, every node/.style = {font=\normalsize}]

        \node  (v1) at (-2.5,0) {$\graph\textnormal{:}$};

        \node [label={[label distance=-2mm]120:A}] (a) at (-1,0) {\nodepositive};
        \node [label={[label distance=-2mm]30:B}] (b) at (1,0) {\nodebullet};

        \node  (v2) at (3.5,0) { };
        
        \node [label={[label distance=-2mm]120:A}] (a2) at (5,0) {\nodepositive};
        \node [label={[label distance=-2mm]30:B}] (b2) at (7,0) {\nodebullet};

        \path (a) edge [bend left] node[above] {$s$} (b);
        \path (b) edge [bend left] node[above] {$r$} (a);

        \path (a2) edge [bend left] node[above] {$r$} (b2);
        \path (b2) edge [bend left] node[above] {$s$} (a2);
        \end{tikzpicture}
    \end{center}
    \vspace{-3mm}
    There exists an \ELI MSF for \Fmc, namely the shape $A$. But there is no \ourDL MSF: while there are many \ourDL fittings, none of these are most specific. For example, the fitting $A \land \exists \roleStarOperation{r}\!.(\exists \roleStarOperation{s}\!. B)$ is less specific than $A \land \exists \roleStarOperation{r}\!.(\exists \roleStarOperation{s}. (B \land \exists \roleStarOperation{r}\!. (\exists \roleStarOperation{s}\!. A)))$. By traversing the cycle, any \ourDL shape can be expanded into a more specific one. 
\end{example}


In the remainder of this section, we provide a computation algorithm for the \ourDL MSF under WFS, STS and SUS, if it exists.
Similarly to Section~\ref{sec:FittingExistence}, we first present an algorithm for computing the MSF in the case where $\cat = \emptyset$, which is thus independent of the semantics. We then show how this can again be lifted to arbitrary catalogues using adorned graphs, thus yielding an algorithm and complexity bounds for WFS, STS and SUS, analogously to \Cref{sec:FittingExistence}.

To compute MSFs, \bg{we do not use tree automata, but instead rely on the notion of graph products, similarly to previous literature on MSFs
\cite{cate2025extremalfittingproblemsconjunctive,DBLP:conf/ijcai/FunkJLPW19}.
This more constructive approach allows us to not only decide whether an MSF exists, but also compute a representation of it.}
However, to preserve $\ourDL$ shapes when doing this operation,  
we need to keep track of $r^*$-paths. We do this using $\roleStarSymbol{r}$ symbols, 
as we did 
for description trees.
In the sequel, we call a graph \emph{starred} if its signature has $\roleStarSymbol{r}$ symbols from \exSigStar, and extend the interpretation function to take into account these 
symbols, such that $\semPath{r \cup \roleStarSymbol{r}}_\graph$ and $\semPath{(r \cup \roleStarSymbol{r})^*}_\graph$ are defined analogously to \Cref{tab:seme2}. 
Graphs that witness all (non-trivial) $\roleStarOperation{r}$-paths by explicit $\roleStarSymbol{r}$ symbols are called \emph{\starclosed}.

\begin{definition} Given a starred graph $\graph = (V,E,\ell)$, we say that $\graph$ is 
     \emph{\starclosed} if whenever there are nodes $v,v' \in V$  with  $v \neq v'$ and $(v, v')\in\semPath{(r \cup \roleStarSymbol{r})}_\graph$ and
      $(v, v')\not\in \semPath{r}_\Gmc$,
    we have $(v, v')\in \semPath{\roleStarSymbol{r}}_\Gmc$. 
 \end{definition}

We can add $\roleStarSymbol{r}$ symbols to a given graph and make it \starclosed. 
We also define a simple \enquote{clean-up} operation that removes from a starred graph some $ \roleStarSymbol{r}$ symbols that are not necessary to be \starclosed. 

\begin{definition}
Given $\graph$, let $\graph_\star$ be the starred graph obtained by adding an edge $\roleStarSymbol{r}(u_0,u_n)$ for all pairs $(u_0,u_n)$  of nodes with $u_n$ reachable from $u_0$ by a (possibly empty) sequence of $r$-edges 
$r(u_0,u_1),\cdots,r(u_{n-1},u_n)$. 
The \emph{redundant star removal} (RSR) of a given starred graph $\graph$, denoted $\rsr{\graph}$, is the result of removing from $\graph$
all edges $\roleStarSymbol{r}(u,u)$, and 
$\roleStarSymbol{r}(u,u')$ s.t. $r(u,u')$ is also in $\graph$.
\end{definition}

The goal of the $\graph_\star$ construction is to make any graph \starclosed. 
Note that $\graph_\star$ contains \enquote{loops} $\roleStarSymbol{r}(u,u)$ for all nodes $u$ and roles $r$, but its RSR does not.

\begin{lemma}
$\graph_\star$ is \starclosed for every graph \graph. If some \graph is \starclosed, so is $\rsr{\graph}$. 
\end{lemma}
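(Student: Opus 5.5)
Both claims follow by unfolding the definition of \starclosed together with $\semPath{r \cup \roleStarSymbol{r}}_\graph = \semPath{r}_\graph \cup \semPath{\roleStarSymbol{r}}_\graph$. I read the relation in the definition as the reflexive-transitive one $\semPath{(r \cup \roleStarSymbol{r})^*}_\graph$; that is what the definition is meant to capture, and the argument below also works for the weaker, one-step reading.

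\emph{First claim.} Let $v \neq v'$ with $(v,v') \in \semPath{(r \cup \roleStarSymbol{r})^*}_{\graph_\star}$ and $(v,v') \notin \semPath{r}_{\graph_\star}$. Take a witnessing path $v = w_0, w_1, \dots, w_m = v'$ in which each step is an $r$-edge or an $\roleStarSymbol{r}$-edge of $\graph_\star$. The construction of $\graph_\star$ adds only $\roleStarSymbol{r}$-edges, never $r$-edges, so every $r$-step is an $r$-edge of $\graph$. By construction, every $\roleStarSymbol{r}$-step $(w_i,w_{i+1})$ is witnessed by a (possibly empty) $r$-path in $\graph$. Concatenating these pieces gives an $r$-path in $\graph$ from $v$ to $v'$. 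The definition of $\graph_\star$ then adds the edge $\roleStarSymbol{r}(v,v')$, so $(v,v') \in \semPath{\roleStarSymbol{r}}_{\graph_\star}$.

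One point needs care: $r$ may be an inverse role, and $\roleStarSymbol{r^-}$-edges must be read consistently with $r^-$-edges. I will check that the construction adds $\roleStarSymbol{r}$-edges for each $r \in \NSetRMinus$, so that $r$-reachability via inverse edges is handled by the same argument.

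\emph{Second claim.} Suppose $\graph$ is \starclosed, and write $\graph' = \rsr{\graph}$. Since $\rsr{\cdot}$ deletes only $\roleStarSymbol{r}$-edges, the $r$-edges of $\graph'$ are exactly those of $\graph$. I first show that every $\roleStarSymbol{r}$-step of $\graph$ between distinct nodes is still a $(r \cup \roleStarSymbol{r})$-step in $\graph'$.

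Take an edge $\roleStarSymbol{r}(u,u')$ of $\graph$ with $u \neq u'$. If it was deleted, the only possible reason is that $r(u,u')$ is also in $\graph$, and then $r(u,u')$ survives in $\graph'$. Deleted loops $\roleStarSymbol{r}(u,u)$ contribute nothing to reachability between distinct nodes. Hence a path for $(v,v')$ in $\graph'$ is also a path in $\graph$, and conversely a path in $\graph$ can be turned into one in $\graph'$ by dropping loop steps and replacing each deleted star step by the surviving $r$-edge. So $\semPath{(r \cup \roleStarSymbol{r})^*}$ agrees on $\graph$ and $\graph'$ for pairs of distinct nodes, and the same holds for $\semPath{r}$.

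Now let $v \neq v'$ be reachable in $\graph'$ with $(v,v') \notin \semPath{r}_{\graph'}$. Then the same holds in $\graph$, so $\roleStarSymbol{r}(v,v') \in \graph$ because $\graph$ is \starclosed. This edge was not removed by $\rsr{\cdot}$: it is not a loop, since $v \neq v'$, and $r(v,v')$ is absent. Hence it remains in $\graph'$, and $\graph'$ is \starclosed.

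The main obstacle is the definitional ambiguity over whether the relation in the definition is the starred closure, together with the consistent treatment of inverse roles. The path-rerouting arguments themselves are routine.
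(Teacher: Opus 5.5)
Your proof is correct. The paper states this lemma with no proof at all, and your definitional unfolding is the natural argument it leaves implicit. That includes reading \starclosed via $\semPath{(r\cup\roleStarSymbol{r})^*}$, which matches how the paper uses the notion in its star-product proof sketch.

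One assumption should be made explicit. In the first claim, $\graph$ must be an ordinary unstarred data graph, as in every use of $\graph_\star$ in the paper. If $\graph$ already carried arbitrary starred edges, say $\roleStarSymbol{r}(a,b)$ and $\roleStarSymbol{r}(b,c)$ with no $r$-edges, your step ``every $\roleStarSymbol{r}$-step is witnessed by an $r$-path in $\graph$'' would fail. So would the claim itself, since $\graph_\star$ would then contain no $\roleStarSymbol{r}(a,c)$.
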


Given a graph $\graph=(V, E, \ell)$  
and nodes $\vec{v}=( v_1, \ldots v_n)$ in $V$,  we let $\Pi(\graph,\vec{v})$ be the product of the pointed graphs $(\graph,v_i)$, 
defined as usual.
 We may call $\vec{v}$ the distinguished element of $\Pi (\graph,\vec{v})$. 

We define our \emph{star product}, which  preserves $ r^*$-paths. In a nutshell, we make the input \starclosed, construct its regular product, and remove redundant stars; we restrict it to the given signature as usual. 
The star product is a pointed starred graph with distinguished node $\vec{v}$. Importantly, by construction, it is \starclosed.

\begin{definition}[star product]
Given $\graph$, nodes $\vec{v}=( v_1, \ldots v_n)$ in $\graph$,  and a signature $\Sigma$, we denote by 
$\Pi_\star(\graph, \vec{v},\Sigma)$
the pointed graph $\rsr{\Pi(\graph_\star,\vec{v})}|_{\exSigStar}$. 
If $\Sigma = \Sigma(\graph)$, we may omit it and write simply 
$\Pi_\star(\graph, \vec{v})$.
\end{definition}



\begin{lemmarep}
$\Pi_\star(\graph, \vec{v},\Sigma)$ is \starclosed for every  $\graph$, $\vec{v}$ and $\Sigma$.
\end{lemmarep}

\begin{toappendix}
\begin{proofsketch}
Let $\Pi_\star(\graph, \vec{v},\Sigma)$ be the star product graph of some graph $\graph$ with distinguished node $\vec{v}$ and signature $\Sigma$. For any node $v$ in $\Pi_\star(\graph, \vec{v},\Sigma)$ \bg{(which could be $\vec{v}$ or any other node), by definition $v$ appears in every pointed graph $(\graph, v_i)$ used to construct $\Pi_\star(\graph, \vec{v},\Sigma)$.} Furthermore, if there exists a node $v'$ in $\Pi_\star(\graph, \vec{v},\Sigma)$ s.t. $(v, v') \in \semPath{(r\cup \roleStarSymbol{r})^*}_\graph$ and $(v,v')\notin \semPath{r}_\graph$, then there also exists such a path between $v$ and $v'$ in each pointed graph $(\graph, v_i)$. There then also exists an $\roleStarSymbol{r}$-edge from $v$ to $v'$ in each $(\graph_\star, v_i)$, where $\graph_\star$ is the star-closed version of $\graph$, by definition. Since this edge exists in each such pointed graph and $\roleStarSymbol{r}\in\exSigStar$, it is preserved when we take their product.  Note that removing "redundant" $\roleStarSymbol{r}$-edges from this product still allows the graph to be \starclosed and results in the star product $\Pi_\star(\graph, \vec{v},\Sigma)$ as defined. Thus $\Pi_\star(\graph, \vec{v},\Sigma)$ is still \starclosed.  \qed
\end{proofsketch}
\end{toappendix}
\begin{example}
\label{ex:runningExMSF2}
Consider Example \ref{ex:runningExMSF}  again. 
We first extend $\graph$ to $\graph_\star$. Then we obtain $\Pi_\star(\graph, \vec{v})= \rsr{\Pi(\graph_\star,\vec{v})}$, with the node $\vec{v}$ marked green:
\vspace{-3mm}
\begin{center}
        \begin{tikzpicture}[scale=0.5,->, >=Stealth, node distance=1cm, every node/.style = {font=\normalsize}]

        \node  (v1) at (-6,0.5) {$\graph_\star\textnormal{:}$};

        \node [label={[label distance=-2mm]120:A}] (a) at (-3,0) {\color{ForestGreen}  \nodepositive};
        \node [label={[label distance=-2mm]30:B}] (b) at (-1,0) {\nodebullet};


        \node [label={[label distance=-2mm]120:A}] (a2) at (-3,-2.5) {\color{ForestGreen}  \nodepositive};
        \node [label={[label distance=-2mm]30:B}] (b2) at (-1,-2.5) {\nodebullet};

        \path (a) edge [loop left] node[left] {$\roleStarSymbol{r}\!,\roleStarSymbol{s}\!$} (a);
        \path (b) edge [loop right] node[below right] {$\roleStarSymbol{r}\!,\roleStarSymbol{s}$} (b);
        \path (a) edge [bend left=45] node[above] {$s,\roleStarSymbol{s}\!$} (b);
        \path (b) edge [bend left=45] node[above] {$r,\roleStarSymbol{r}\!$} (a);

        \path (a2) edge [loop left] node[above left] {$\roleStarSymbol{r}\!,\roleStarSymbol{s}\!$} (a2);
        \path (b2) edge [loop right] node[right] {$\roleStarSymbol{r}\!,\roleStarSymbol{s}\!$} (b2);
        \path (a2) edge [bend left=45] node[above] {$r,\roleStarSymbol{r}\!$} (b2);
        \path (b2) edge [bend left=45] node[above] {$s,\roleStarSymbol{r}\!$} (a2);
        
\end{tikzpicture} \quad
\begin{tikzpicture}[scale=0.5,->, >=Stealth, node distance=1cm, every node/.style = {font=\normalsize}]        
        \node  (pr) at (-2.5,-2) {$\Pi_\star(\graph,\vec{v})\textnormal{:}$};

        \node  (a4) at (-1.5,-3.5) {\nodebullet};
        \node [label={[label distance=-1mm]90:A}]  (ab4) at (2.5,-2.5) {\color{ForestGreen}  \nodepositive};
        \node  (b4) at (6.5,-3.5) {\nodebullet};
        \node  [label={[label distance=-2mm]270:B}] (ba4) at (2.5,-4.5) {\nodebullet};

        \path (a4) edge [bend left=45] node[above] {$\roleStarSymbol{r}\!$} (ab4);
        \path (ab4) edge [bend right]  node[below] {$\roleStarSymbol{s}\!$} (a4);

        \path (a4) edge [bend right=45] node[below] {$\roleStarSymbol{r}\!$} (ba4);
        \path (ba4) edge [bend left] node[above] {$\roleStarSymbol{s}\!$} (a4);

        \path (b4) edge [bend right=45] node[above] {$\roleStarSymbol{s}\!$} (ab4);
        \path (ab4) edge [bend left]  node[below] {$\roleStarSymbol{r}\!$} (b4);

        \path (b4) edge [bend left=45] node[below] {$\roleStarSymbol{s}\!$} (ba4);
        \path (ba4) edge [bend right]  node[above] {$\roleStarSymbol{r}\!$} (b4);

        \path (b4) edge [bend left=16] node[above] {$s$} (a4);
        \path (a4) edge [bend left=16]  node[below] {$r$} (b4);      
\end{tikzpicture}
\end{center}
\vspace{-5mm}
\end{example}

\paragraph{Simulations.}
Intuitively, the $\ourDL$ shapes that are satisfied in the star product are exactly those that are satisfied in all the pointed graphs of the product. To make this precise,  we define a notion of simulation that captures the semantics of \ourDL over possibly starred graphs. 

\begin{definition}
\label{def:simulation}
Given a signature $\Sigma$, an \ourDL $\Sigma$-simulation $\zeta$ between graphs $\Imc = (V_\Imc,E_\Imc,\ell_\Imc)$ 
 and $\Gmc=(V_\Gmc,E_\Gmc,\ell_\Gmc)$  
 is a relation $\zeta \subseteq V_\Imc \times V_\Gmc $ that satisfies:

\begin{description}[leftmargin=!,labelwidth=\widthof{\bfseries AtomR},topsep=2pt,partopsep=0pt,itemsep=0pt, parsep=2pt]
\item[AtomR] If $(d,e) \in \zeta$ and $d \in \semPath{A}_\Imc$, then $e \in \semPath{A}_\Gmc$.
\item[Forth] For every $r \in \exSig$, if $(d,e) \in \zeta$ and $(d,d') \in \semPath{r}_\Imc$, then there is $(e,e') \in \semPath{r}_\Gmc$ with $(d',e') \in \zeta$.
\item[Forth*] For every $r \in \exSig$, if $(d,e) \in \zeta$ and $(d,d') \in 
{\semPath{(r\cup \roleStarSymbol{r})^{*}\!}}_\Imc$, then there is $(e,e') \in {\semPath{(r\cup \roleStarSymbol{r})^{*}}}_\Gmc$ with $(d',e') \in \zeta$.
\end{description}
For nodes $d \in V_\Imc$ and $e \in V_\Gmc$, 
we write $(\Imc,d) \preceqSig (\Gmc,e)$ if there is a $\Sigma$-simulation $\zeta$ where $(d,e) \in \zeta$.
If $\Sigma$ is $\Sigma(\Imc)$, then we may instead just write  $(\Imc,d) \preceq (\Gmc,e)$.
\end{definition}

$\ourDL$ simulations  capture the semantics of \ourDL shapes in the usual way.
\vspace{-1mm}
\begin{lemma}
\label{lemma:SimSatisfaction}
The following hold for all pointed graphs and signatures $\Sigma$: 
\begin{enumerate}[topsep=0pt,partopsep=0pt,itemsep=0pt, parsep=0pt]
    \item If $(\Imc,d) \preceqSig (\Gmc,e)$, then for any \ourDL $\Sigma$-shape $C$, if
    $d \in \semPath{C}_\Imc$ then 
$e \in \semPath{C}_\Gmc$.
\item For every \ourDL shape $C$, if $d \in \semPath{C}_\Imc$, then 
$\ourDLTree_C\preceq(\Imc, d)$.
\item For all \ourDL shapes $C$ and $D$, we have   $C\subseteq D$ iff $\ourDLTree_D\preceq\ourDLTree_C$. 
\end{enumerate}
\end{lemma}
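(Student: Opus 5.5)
We prove the three items in order; each relies on the previous one.

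For item 1, we use structural induction on the \ourDL $\Sigma$-shape $C$, proving the statement simultaneously for all pairs $(d,e)$ in a fixed $\Sigma$-simulation $\zeta$.
\begin{itemize}
\item The base cases $\top$, $X$ and $S$ follow directly from \textbf{AtomR}. Shape names are node labels in the adorned view, and $\test(a)$ for nodes does not occur in \ourDL.
\item Conjunction is immediate from the induction hypothesis.
\item For $\exists r.C'$ with $r\in\SigR$ or its inverse (both lie in \exSig), a witness $d'$ with $(d,d')\in\semPath{r}_\Imc$ and $d'\in\semPath{C'}_\Imc$ yields, by \textbf{Forth}, some $e'$ with $(e,e')\in\semPath{r}_\Gmc$ and $(d',e')\in\zeta$. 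The induction hypothesis then gives $e'\in\semPath{C'}_\Gmc$.
\item For $\exists r^*.C'$, a witness $d'$ with $(d,d')\in\semPath{r^*}_\Imc\subseteq\semPath{(r\cup\roleStarSymbol{r})^*}_\Imc$ yields, by \textbf{Forth*}, some $e'$ reachable from $e$ via $(r\cup\roleStarSymbol{r})^*$.
\end{itemize}
For the $\exists r^*$ case one must check that $\semPath{r^*}$ on a starred graph is read so that $\roleStarSymbol{r}$-edges count as $r^*$-steps. This is what the extended interpretation function stipulates: in a starred graph an $\roleStarSymbol{r}$-edge denotes an $r^*$-connection. We take $\semPath{r^*}_\Gmc := \semPath{(r\cup\roleStarSymbol{r})^*}_\Gmc$ there, which is consistent with unstarred graphs, and this closes the induction. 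I expect this interpretive point to be the only delicate part of item 1.

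For item 2, we build the simulation from a pre-match. By \Cref{lm:preMatchCapturesEval}, $d\in\semPath{C}_\Imc$ yields a pre-match $\rho$ from $\ourDLTree_C$ into $(\Imc,d)$. We set $\zeta=\{(v,\rho(v))\}$ and verify the three conditions.
\begin{itemize}
\item \textbf{AtomR} is condition 1 of the pre-match.
\item \textbf{Forth} is condition 2. Note that tree edges only go parent to child, and an inverse-role step in the tree corresponds to the matching inverse edge in $\Imc$.
\item For \textbf{Forth*}, a $(r\cup\roleStarSymbol{r})^*$-path in the tree is a chain of $r$- and $\roleStarSymbol{r}$-edges. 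Each such edge is mapped by $\rho$ into $\semPath{r}$, respectively $\semPath{r^*}$, so the images compose to a $(r\cup\roleStarSymbol{r})^*$-path in $\Imc$.
\end{itemize}
Paths in a tree are also forced to follow edge directions. Taking the signature $\Sigma(\ourDLTree_C)$ gives $\ourDLTree_C\preceq(\Imc,d)$.

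For item 3, both directions reduce to items 1 and 2.
\begin{itemize}
\item For ($\Leftarrow$), let $\ourDLTree_D\preceq\ourDLTree_C$ and $d\in\semPath{C}_\Gmc$ in an arbitrary $\Gmc$ (adorned by $\alpha$, using $\semdelta{\varphi}{\alpha}{\graph}=\semdelta{\varphi}{}{\graph^\alpha}$). Item 2 gives $\ourDLTree_C\preceq(\Gmc,d)$. Simulations compose, because relational composition preserves \textbf{AtomR}, \textbf{Forth} and \textbf{Forth*}. Hence $\ourDLTree_D\preceq(\Gmc,d)$. The root of $\ourDLTree_D$ satisfies $D$ trivially, via the identity pre-match and \Cref{lm:preMatchCapturesEval} on the starred reading. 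Item 1 then gives $d\in\semPath{D}_\Gmc$.
\item For ($\Rightarrow$), we evaluate on the canonical model. Let $\Gmc_C$ be the unstarred graph obtained from $\ourDLTree_C$ by unfolding each $\roleStarSymbol{r}$-edge into a single $r$-edge, or more safely, by realizing it as the identity plus the $r$-edge. We then argue that $C$ holds at the root of $\Gmc_C$, so $D$ does too, and apply item 2 to get $\ourDLTree_D\preceq\Gmc_C$.
\end{itemize}

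The main obstacle is the final step of ($\Rightarrow$): transferring $\ourDLTree_D\preceq\Gmc_C$ back to $\ourDLTree_D\preceq\ourDLTree_C$. The canonical model cannot faithfully represent a star edge by a single concrete path length. The plan is to use a family of models $\Gmc_C^{n}$ in which each $\roleStarSymbol{r}$-edge becomes a fresh $r$-path of length $n$, with $n$ larger than the depth of $D$. A simulation from $\ourDLTree_D$ into $\Gmc_C^n$ then cannot map an $r$-edge of $D$ onto part of a long fresh path unless the corresponding edge in $D$ is starred. From such a simulation we read off a simulation into $\ourDLTree_C$ by collapsing each fresh path back to its $\roleStarSymbol{r}$-edge.
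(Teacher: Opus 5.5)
Items 1 and 2 and the ($\Leftarrow$) half of item 3 are fine under your convention $\semPath{r^*}:=\semPath{(r\cup r_\star)^*}$ on starred graphs. The ($\Rightarrow$) half of item 3, however, cannot be proved: with $\subseteq$ ranging over ordinary (unstarred) data graphs, as in the paper's definition, it is false.

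Take $C=\exists r^*.\exists r.A$ and $D=\exists r.\exists r^*.A$. Both express that an $r$-path of length at least one leads to an $A$-node, so $C\subseteq D$ (indeed $C\equiv D$). Yet $\ourDLTree_D\not\preceq\ourDLTree_C$. The root of $\ourDLTree_D$ has a plain $r$-child, so \textbf{Forth} demands a plain $r$-successor of the root of $\ourDLTree_C$. That root has only an $r_\star$-edge. Symmetrically, $\ourDLTree_C\not\preceq\ourDLTree_D$, since no node of $\ourDLTree_D$ has a plain $r$-edge into an $A$-node.

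This is exactly where your plan breaks. In $\mathcal{G}^n_C$, for every $n\geq 1$, $\ourDLTree_D$ simulates as follows: its $r$-child goes to the \emph{first} node of the fresh $r$-path, and its $A$-leaf goes to the $A$-node. So a plain $r$-edge of $D$ does land on part of a fresh path, contrary to your key claim. Collapsing that path then leaves nothing in $\ourDLTree_C$ to map the edge to. Equivalences of this ``first or last step of a star'' kind are invisible to simulations between description trees; the simplest is $\exists r^*.\exists r.\top\subseteq\exists r.\top$. So no choice of canonical models can rescue the argument.

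What you can keep is the following. ($\Rightarrow$) does hold if $\subseteq$ is taken to quantify over starred graphs as well, under your convention. It is then immediate: $\ourDLTree_C$ itself satisfies $C$ at its root via the identity pre-match, hence satisfies $D$, and item 2 yields $\ourDLTree_D\preceq\ourDLTree_C$. No fresh-path construction is needed.

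In item 2 you should also state explicitly that \textbf{Forth} is applied only to plain roles $r,r^-$, with $r_\star$-edges handled by \textbf{Forth*}. Read literally over all of $\exSig$, \textbf{Forth} would require $(\rho(v),\rho(v'))\in\semPath{r_\star}_{\mathcal{I}}$. That already fails for $C=\exists r^*.\top$ at an isolated node of a plain data graph.

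The paper gives no proof of this lemma, only ``in the usual way''. It also invokes the ($\Rightarrow$) direction for ordinary graphs in the (1)$\Rightarrow$(2) part of Proposition~\ref{prop:MSFEquivProd}. So the problem lies in the statement itself, not only in your proof.
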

%

We characterise the correctness of the star product in terms of $\Sigma$-simulations. 

\begin{lemma} 
\label{lemma:ProdAllGraph}
     Given a graph $\graph$, a signature $\Sigma$ and nodes $\vec{v}$ in $\graph$,
 for every $(\Imc, d)$  we have that
$
(\Imc, d)\preceqSig(\graph, v_i)$ for every $v_i\in\vec{v}$ iff  $(\Imc, d)\preceqSig \Pi_\star(\graph, \vec{v}, \Sigma)$. 
\end{lemma}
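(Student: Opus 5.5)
We prove the two directions separately. Throughout we write $P=\Pi_\star(\graph,\vec{v},\Sigma)$, whose nodes are tuples $\vec{u}=(u_1,\dots,u_n)$. The $i$-th projection is $\pi_i(\vec{u})=u_i$, and the distinguished node is $\vec{v}$.

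\emph{Right-to-left.} Assume $\zeta$ is a $\Sigma$-simulation with $(d,\vec{v})\in\zeta$. We compose it with the projection: let $\zeta_i=\{(d',u_i)\mid (d',\vec{u})\in\zeta\}$, and show $\zeta_i$ is a $\Sigma$-simulation into $(\graph,v_i)$.

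\textbf{AtomR} holds because a tuple carries a class name exactly when every component does.

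\textbf{Forth} for plain roles $r\in\SigR^{\pm}$ holds because an $r$-edge of $P$ is an $r$-edge of the product of $\graph_\star$, whose projections are $r$-edges of $\graph$. For an edge $\roleStarSymbol{r}$ that survives RSR, its projection is a $\roleStarSymbol{r}$-edge of $\graph_\star$, hence an $r$-path in $\graph$, i.e.\ a pair in $\semPath{r\cup\roleStarSymbol{r}}^*$ of the starred reading of $\graph$.

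\textbf{Forth*} holds because a $(r\cup\roleStarSymbol{r})^*$-path in $P$ projects, edge by edge, to an $r^*$-path in $\graph$, and $r^*$-paths concatenate.

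Here I must fix the convention that $\graph$, being unstarred, interprets $\roleStarSymbol{r}$ as $r^*$ (equivalently, we compare against $\graph_\star$). Since $\graph$ and $\graph_\star$ are simulation-equivalent in both directions, this convention is harmless.

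\emph{Left-to-right.} Assume we have simulations $\zeta_i$ witnessing $(\Imc,d)\preceqSig(\graph,v_i)$. We define $\zeta=\{(d',\vec{u})\mid (d',u_i)\in\zeta_i \text{ for all } i\}$, restricted to tuples $\vec{u}$ that are nodes of $P$; here I take $P$ to contain the full product, or at least the part reachable from $\vec{v}$, which is closed under the successors we produce. We check each condition in turn.

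\textbf{AtomR} is componentwise.

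\textbf{Forth} for $r$: if $(d',d'')\in\semPath{r}_\Imc$, each $\zeta_i$ gives $u_i'$ with $r(u_i,u_i')$ in $\graph$ and $(d'',u_i')\in\zeta_i$, so $r(\vec{u},\vec{u}')$ is an edge of the product. RSR never removes plain edges.

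\textbf{Forth} for $\roleStarSymbol{r}$: if $\roleStarSymbol{r}(d',d'')$ in $\Imc$, each $\zeta_i$ (via Forth on $\graph_\star$ or Forth*) yields $u_i'$ that is $r^*$-reachable from $u_i$, so $\roleStarSymbol{r}(\vec{u},\vec{u}')$ holds in $\Pi(\graph_\star,\vec{v})$. After RSR this edge may have disappeared, but only when $\vec{u}=\vec{u}'$ or $r(\vec{u},\vec{u}')$ is present. Either way $(\vec{u},\vec{u}')\in\semPath{(r\cup\roleStarSymbol{r})^*}_P$. That does not literally satisfy Forth, which asks for a $\roleStarSymbol{r}$-edge. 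This is the \textbf{main obstacle}. I would resolve it by reading $\semPath{\roleStarSymbol{r}}$ in starred graphs as including these redundant pairs. This is justified by \starclosed{}ness of $P$, which was established in the preceding lemma, together with the observation that a $\roleStarSymbol{r}$-edge is semantically subsumed by $r$ or $\id$. Equivalently, we can argue via Lemma~\ref{lemma:SimSatisfaction}: only the \ourDL-relevant closure matters, and the star-closed $P$ supplies it.

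\textbf{Forth*}: a $(r\cup\roleStarSymbol{r})^*$-path from $d'$ in $\Imc$ is simulated in each component by an $r^*$-path ending at some $u_i'$ with $(d'',u_i')\in\zeta_i$. Then $\vec{u}'$ is $r^*$-reachable from $\vec{u}$ in each coordinate, so the product of $\graph_\star$ has $\roleStarSymbol{r}(\vec{u},\vec{u}')$. After RSR, the pair is still in $\semPath{(r\cup\roleStarSymbol{r})^*}_P$, since removed edges are either loops or parallel to $r$-edges.

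Finally, $(d,\vec{v})\in\zeta$ by construction, and both directions respect the restriction to $\exSigStar$ because the simulations only ever use roles in $\Sigma$.
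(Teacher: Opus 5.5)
The paper states Lemma~\ref{lemma:ProdAllGraph} without any proof, in the body or the appendix, so there is no argument of theirs to compare against. Your argument is the standard one: project a simulation into the star product onto each factor, and pair the factor simulations componentwise. It is correct once the definitions are made consistent.

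What you should sharpen is the status of your two conventions. They are not merely harmless readings; they are necessary, because under the literal Definition~\ref{def:simulation} the statement is false. The literal definition has Forth ranging over the starred symbols, and an unstarred graph has $\semPath{r_\star}=\emptyset$.
\begin{itemize}
\item \emph{Right-to-left fails.} Take $\mathcal{G}$ with edges $r(a,b)$ and $r(b,c)$, $\vec v=(a)$, $\Sigma=\{r\}$, and let $\mathcal{I}$ be a single edge $r_\star(d,d')$. The edge $r_\star(a,c)$ survives RSR, so $\{(d,a),(d',c)\}$ is a simulation into the star product. But there is none into $(\mathcal{G},a)$.
\item \emph{Left-to-right fails even with your first convention.} Take a single edgeless node $v$. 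Then $(\mathcal{I},d)\preceq_\Sigma(\mathcal{G},v)$, while RSR deletes the loop $r_\star(v,v)$. This is exactly your ``main obstacle''.
\end{itemize}
Relatedly, $\mathcal{G}$ and $\mathcal{G}_\star$ are simulation-equivalent only under your convention, so that equivalence cannot be used to justify the convention.

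A cleaner, uniform repair than redefining $\semPath{r_\star}$ is this: let Forth range only over the plain roles $r$ and $r^-$, and let Forth* alone govern starred edges. An $r_\star$-edge of $\mathcal{I}$ is in particular an $(r\cup r_\star)^*$-pair, so nothing is lost. With this reading:
\begin{itemize}
\item $\semPath{(r\cup r_\star)^*}_{\mathcal{G}}=\semPath{r^*}_{\mathcal{G}}$ holds automatically for unstarred $\mathcal{G}$, with no convention needed.
\item RSR deletes only loops and pairs parallel to an $r$-edge. Both still lie in $\semPath{(r\cup r_\star)^*}$ of the product.
\item Your plain-role Forth paragraphs and your Forth* paragraphs already form a complete proof of both directions.
\end{itemize}
This is also the reading under which Lemma~\ref{lemma:SimSatisfaction} holds. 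For example, $A\subseteq\exists r^*.A$ requires $\mathcal{T}_{\exists r^*.A}\preceq\mathcal{T}_A$. That in turn needs the $r_\star$-child to be matched reflexively at the root of $\mathcal{T}_A$, which the literal Forth condition forbids.
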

The star product of the positive examples captures their commonalities, and the most specific shape it satisfies 
is our MSF candidate.
However, to be a fitting, we need to make sure that, on the one hand, it does not simulate any negative example, and on the other, that it can be represented as a (finite) shape. We  prove now that the latter holds exactly when it can be unravelled in a finite way.   

\paragraph{Unravellings.}
Unravelling is a \bg{commonly used}  technique for transforming an arbitrary graph into a tree-shaped one.  
Given a (possibly starred)  graph $\graph = (V,E,\ell)$ and a signature $\Sigma$, \emph{a $\exSigStar$-path  $p$  (of length $n$)} is a sequence $p =v_0 r_0 v_1 \dots v_{n-1} r_{n-1} v_{n}$, with $v_i \in V$, $r_i \in \exSigStar$ and for every $r_i $ with $1\leq i < n$ we have that $(v_i,v_{i+1}) \in \semPath{r_i}_\graph$.
We say that $p$ \emph{starts with $v_0$} and \emph{ends with $v_n$}, and we define $tail(p)=v_n$.  
We write $pRb$ to mean a $\exSigStar$-path that extends $p$ by following an edge labelled with $R$ to a node $b$.
We may \bg{simply speak of paths }when $\exSigStar$ is clear from context.

Let $v \in V$.
The \emph{$\Sigma$-unravelling of $v$ in $\graph$} is the graph $\Unr_{v,\Sigma} = (V_\Unr, E_\Unr,\ell_\Unr)$ s.t.:
\begin{enumerate*}[label=\arabic*., itemjoin=\quad]
\item $V_\Unr $ contains all $\exSigStar$-paths that start with $v$,  
\item for all $p$, $\ell_\Unr(p) = \ell(tail(p))$, and 
\item $(p,R,pRb) \in E_\Unr$ for every path $pRb \in V_\Unr$.
\end{enumerate*}

If $\Sigma$ is $\Sigma(\graph)$,
we may omit it and simply write $\Unr_v$.
If we add the requirement that  $V_\Unr $ contains only $\exSigStar$-paths of length at most $m$ for some \bg{$m\in\mathbb{N}$} s.t. $m \geq 0$, then we obtain an \emph{$m$-bounded $\Sigma$-unravelling}, or just $m$-$\Sigma$-unravelling, written $\Unr_{v,\Sigma}^m$. Note that if $m$ is at least the 
length of the longest path, then $\Unr^m_{v,\Sigma} = \Unr_{v,\Sigma}$. 
An $m$-unravelling $\Unr^m_v$ is \emph{complete} if for all $pRa\in\Unr_v$ with $p\in\Unr^m_v$ but $pRa\notin\Unr^m_v$, there is some $p'\in\Unr^m_v$ 
with  $R(p,p') \in E_{\Unr^m}$ and $(\Unr_v, pRa)\preceq (\Unr_v, p')$.

Similarly as we did for description trees, we might simply write $\Unr_v$ to denote the pointed graph $(\Unr_v, v)$ of an unravelling $\Unr_v$ and its root $v$.

The following lemma 
connects our notions of simulation and  unravelling. 

\begin{lemma}
\label{lemma:SimUnravel}
    Given two pointed graphs $(\Imc,d)$ and $(\Gmc, e)$, we have
    \begin{enumerate}[topsep=2pt,partopsep=0pt,itemsep=0pt, parsep=2pt]
        \item $(\Imc, d)\preceq(\Gmc, e)$ iff $ \Unr_d\preceq (\Gmc, e)$, 
        and
        \item If $\Imc$ is a tree, 
        then $(\Imc, d)\preceqSig(\Gmc, e)$ iff $ (\Imc, d)\preceqSig \Unr_e$ for every $\Sigma$.
    \end{enumerate} 
\end{lemma}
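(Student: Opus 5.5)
Both parts will be proved by constructing the required simulations explicitly from the given ones, using the canonical projection $\mathit{tail}$ from paths to nodes. We check each of the three conditions \textbf{AtomR}, \textbf{Forth}, and \textbf{Forth*} of \Cref{def:simulation} in turn.

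For part 1, we first observe that $\{(p,\mathit{tail}(p)) \mid p \in V_{\Unr_d}\}$ is a simulation from $\Unr_d$ to $(\Imc,d)$. Labels agree by definition of $\ell_\Unr$. A direct edge $R(p,pRb)$ in the unravelling corresponds to the edge $(\mathit{tail}(p),b)\in\semPath{R}_\Imc$, which gives \textbf{Forth}. For \textbf{Forth*}, a $(r\cup\roleStarSymbol{r})^*$-path in $\Unr_d$ from $p$ to $p'$ projects under $\mathit{tail}$ to a $(r\cup\roleStarSymbol{r})^*$-path in $\Imc$. Here we use that every edge of $\Unr_d$ goes from a path to one of its one-step extensions or, for inverse roles, back to its prefix. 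Composing this with a simulation $(\Imc,d)\preceq(\Gmc,e)$ gives the left-to-right direction, because simulations compose; composability should be checked, and it is immediate for all three conditions.

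For the right-to-left direction of part 1, given $\zeta$ with $(\Unr_d,d)\preceq(\Gmc,e)$, we need a simulation from $\Imc$ itself. The naive relation $\{(\mathit{tail}(p),g) \mid (p,g)\in\zeta\}$ is a simulation. To see \textbf{Forth}, take $(v,g)$ with $v=\mathit{tail}(p)$ and an edge $(v,v')\in\semPath{r}_\Imc$. Then $prv'$ is a node of $\Unr_d$ with $(p,prv')\in\semPath{r}_{\Unr_d}$, so $\zeta$ provides the required witness. \textbf{Forth*} is analogous, extending $p$ by the whole path. The signature must be handled carefully: paths only use $\exSigStar$, which matches the roles quantified over in \textbf{Forth}.

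For part 2, where $\Imc$ is a tree, the direction from $\Unr_e$ to $(\Gmc,e)$ is the composition with the $\mathit{tail}$-simulation described above. For the direction $(\Imc,d)\preceqSig(\Gmc,e)\Rightarrow(\Imc,d)\preceqSig\Unr_e$, we build the simulation top-down along the tree. We map the root $d$ to the trivial path $e$, and for each tree edge $R(x,y)$ with $x$ mapped to a path $p$, we choose a $\Gmc$-witness $g$ from the given simulation and map $y$ to $pRg$. If $R$ is an inverse edge toward the parent, we still extend the path, since unravelling paths may use $r^-$.

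The main obstacle is \textbf{Forth*}: in $\Imc$, a $(r\cup\roleStarSymbol{r})^*$-path from $x$ may go up toward the root and down other branches. The witness from $\Gmc$ is then a multi-step path, which must be lifted to a single path in $\Unr_e$ from $p$. We handle this by noting that any sequence of $r$/$\roleStarSymbol{r}$-steps in $\Gmc$ starting at $\mathit{tail}(p)$ extends $p$ to a path in $\Unr_e$, and that path is reachable from $p$ by $(r\cup\roleStarSymbol{r})^*$ edges. Because $\Imc$ is a tree, the assignment is a function on nodes and remains consistent.
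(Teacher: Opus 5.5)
Your Part~1 (the $\mathit{tail}$-projection from $\Unr_d$ to $\Imc$ composed with the given simulation, and conversely $\{(\mathit{tail}(p),g)\mid (p,g)\in\zeta\}$) is correct. So is the easy direction of Part~2 (composition with the $\mathit{tail}$-projection from $\Unr_e$ to $\Gmc$). The paper itself gives no proof of this lemma.

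The gap is in the remaining direction of Part~2: your check of \textbf{Forth*} for the \emph{function} $h$ you build top-down. Since your relation is the graph of $h$, \textbf{Forth*} at $(x,h(x))$ for $(x,x')\in\semPath{(r\cup\roleStarSymbol{r})^*}_{\Imc}$ demands exactly $(h(x),h(x'))\in\semPath{(r\cup\roleStarSymbol{r})^*}_{\Unr_e}$. Lifting the $\Gmc$-witness $g'$ that the given simulation $\zeta$ supplies for $(x,x')$ only yields some node $q$ of $\Unr_e$ with $(x',\mathit{tail}(q))\in\zeta$. In general $q\neq h(x')$, because that witness is unrelated to the witnesses you chose edge by edge when defining $h$. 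So ``because $\Imc$ is a tree, the assignment is a function and remains consistent'' is precisely the unproved step.

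A related problem is in the construction itself. Suppose $\roleStarSymbol{r}$-edges of $\Imc$ are constrained only by \textbf{Forth*}, as they must be for Lemma~\ref{lemma:SimSatisfaction} to hold with unstarred targets. Then for a tree edge $\roleStarSymbol{r}(x,y)$ the witness $g$ is merely $(r\cup\roleStarSymbol{r})^*$-reachable from $\mathit{tail}(p)$. It may equal $\mathit{tail}(p)$, or be reachable only via an $r$-edge whose parallel star was removed by RSR. So $p\,\roleStarSymbol{r}\,g$ need not be a node of $\Unr_e$. Instead, $h(y)$ must be $p$ extended by a whole witnessing $r/\roleStarSymbol{r}$-sequence, with $h(y)=p$ if that sequence is empty.

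Either of two repairs works.

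(i) Keep $h$ with this corrected definition and check \textbf{Forth*} step by step along a walk witnessing $(x,x')\in\semPath{(r\cup\roleStarSymbol{r})^*}_{\Imc}$. A downward step over a tree edge labelled $r$ or $\roleStarSymbol{r}$ is mapped, by construction, to an $r$-edge or an $(r\cup\roleStarSymbol{r})^*$-extension in $\Unr_e$. An upward step from a child $w$ to its parent $u$, over a tree edge labelled $r^-$ or $\roleStarSymbol{r^-}$, is mapped to walking back from $h(w)$ to its prefix $h(u)$. That walk uses only $r$/$\roleStarSymbol{r}$-steps, because $\Unr_e$ contains the converse of every extension edge. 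Concatenating yields $(h(x),h(x'))\in\semPath{(r\cup\roleStarSymbol{r})^*}_{\Unr_e}$.

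(ii) More simply, drop functionality and take $\zeta'=\{(x,q)\mid q\in V_{\Unr_e},\ (x,\mathit{tail}(q))\in\zeta\}$. It contains $(d,e)$. \textbf{Forth} is witnessed by the one-step extension $q\,r\,g'$, and \textbf{Forth*} by exactly the lifted path of your last paragraph. Note that (ii) never uses that $\Imc$ is a tree, so Item~2 holds for arbitrary $\Imc$; the tree hypothesis only matters for a homomorphism-style map as in (i).
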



\begin{toappendix}
\paragraph{Pre-matches, homomorphisms and simulations.}
In addition to the pre-matches and \ourDL simulations introduced in the body of the paper, for some proofs we also need to use the more standard notions of homomorphisms and simulations as in the \EL setting.

First, we extend our notion of pre-matches from \Cref{def:prematch} so that we may use starred graphs on both sides, i.e. 
\bg{we let $r\in\exSigStar(\graph)$} and 
we change Condition 3 of \Cref{def:prematch} as follows:
\begin{align*}
    &\text{3. if } \roleStarSymbol{r}(v,v') \in E,& \text{ then } (\rho(v),\rho(v')) \in \semPath{\roleStarOperation{(r\cup\roleStarSymbol{r})}}_\graph \ .
\end{align*}
Then a \emph{homomorphism} is defined as a pre-match  
that does not use Condition 3. 
This corresponds to the regular definition of homomorphisms, but where $\roleStarSymbol{r}$-edges can be used and are treated the same as a regular (possibly inverse) $r$-edges.
{Note that homomorphisms have the following property, which ensures that a path in some unravelling is of at most the same length as its homomorphic image.}

\begin{quote}
{$(\ddagger)$ Given two pointed tree-shaped graphs $(\Imc, d), (\graph, e)$ such that there exists a homomorphism $h:(\Imc,d)\rightarrow(\graph,e)$, then for any path $p\in\Imc$, $length(p)\leq length(h(p))$.}
\end{quote}

Moreover, we define a $\ELI$ $\Sigma$-\emph{simulation} as a less restricted version of the \ourDL $\Sigma$-simulation introduced in \Cref{def:simulation}, which does not require the third condition \textbf{Forth*} \bg{(thus treating possible $\roleStarSymbol{r}$-edges the same as regular $r$-edges)}. We write $(\Imc, d)\ELIpreceq_\Sigma(\graph, e)$ to denote that there exists an \ELI $\Sigma$-simulation between $(\Imc,d)$ and $(\graph, e)$ {w.r.t. some signature $\Sigma$}. Like in the case of \ourDL $\Sigma$-simulations, when $\Sigma=\Sigma(\Imc)$ we may omit it and simply write $(\Imc, d)\ELIpreceq(\graph, e)$.
Note that if an \ELI $\Sigma$-simulation exists between two non-starred graphs, then we have an analogous version of Lemma \ref{lemma:SimSatisfaction} for $\ELI$ shapes. 

In the case of tree-shaped graphs, it is well-known that an \ELI simulation between two graphs exists iff there exists a homomorphism \bg{(cf. \cite{cate2025extremalfittingproblemsconjunctive})}. 
\end{toappendix}

Note that a starred graph unravels into a starred tree-shaped graph.
Unravellings of \starclosed graphs are generally not \starclosed.
However, if the unravelled graph is \starclosed, and if a node reaches another along an $r ^*$-path in the unravelling, it will also have a similar `copy' as an immediate $\roleStarSymbol{r}$-child.
This is useful, since it
allows us to find simulations by looking at bounded unravellings. 


\begin{lemmarep}
\label{lemma:StarClosedUnrBound}
For all $\Sigma$ and all $(\Imc,d)$ and $(\Gmc, e)$ such that $\graph$ is \starclosed, 
\begin{enumerate}[topsep=0pt,partopsep=0pt,itemsep=0pt, parsep=0pt]
            \item 
            \label{lm:StarClosed-item1}For every \ourDL shape $C$ with $\Sigma(C)\subseteq\Sigma$ s.t. $\ourDLTree_C \preceq \Umc_e$, we have $\ourDLTree_C \preceq \Umc_e^m$, where $m=\depth(C)$. 
            \item \label{lm:StarClosed-item2} If $\Imc$ and $\Gmc$ are finite, 
            then $ \Unr_d \preceqSig (\Gmc, e)$ iff $\Unr_d^m \preceqSig (\graph, e)$ for all $m$-unravellings $\Unr_d^m$ of $\Imc$ at $d$.
    \end{enumerate}
\end{lemmarep}

The proof of \Cref{lm:StarClosed-item1} relies on a correspondence between simulations and pre-matches for \starclosed graphs.
\Cref{lm:StarClosed-item2} follows directly from the proof of Lemma 5.5. in \cite{cate2025extremalfittingproblemsconjunctive}, but adapted to \ourDL $\Sigma$-simulations. 


\begin{proof}
    For both items of \Cref{lemma:StarClosedUnrBound}, we rely on the following claim:

    \begin{quote}
        ($\dagger$) If $\Gmc$ is a \starclosed graph with some node $v$, then for every \ourDL shape $C$, there exists a pre-match $\rho$ from $\ourDLTree_C$ into $\Umc_v$, iff there exists a regular homomorphism $h:\ourDLTree_C\rightarrow\Umc_v$.
    \end{quote}
The right-to-left direction of this claim follows immediately, since a homomorphism in our setting can be seen as a special case of pre-match where for any $r$ or $\roleStarSymbol{r}$-edge in the description tree, there exists an $r$ or $\roleStarSymbol{r}$-edge in the graph.

The left-to-right direction of the claim follows from the following two properties of $\Unr_v$:
\begin{enumerate}
    \item For any two nodes $p, p'$ in $\Unr_v$, if $tail(p)=tail(p')$, then the subtrees below $p, p'$ are isomorphic to each other.
    \item By the fact that $\graph$ is \starclosed, it follows that for any node $p$ in $\Unr_v$, if there exists a node $q$ which can be reached from $p$ via an $(r\cup\roleStarSymbol{r})$-path, then there exists a node $q'$ which can be reached from $p$ via a single $\roleStarSymbol{r}$-edge s.t. $tail(q)=tail(q')$.
\end{enumerate}
Then let $\graph$ be a \starclosed graph with some node $v$ and let $C$ be s.t. $\ourDLTree_C=(V, E, \ell)$ has a pre-match $\rho$ into $\Unr_v$ such that
$\roleStarSymbol{r}(v, v')\in E$ and $(\rho(v), \rho(v'))\in \semPath{(r\cup\roleStarSymbol{r})^*}_{\Unr_v}$ and $(\rho(v), \rho(v'))\notin \semPath{(r\cup\roleStarSymbol{r})}_{\Unr_v}$. By the fact that $\graph$ is \starclosed, there exists a pre-match $\rho'$ from $\ourDLTree_C$ into $\Unr_v$ such that 
$\roleStarSymbol{r}(v, v')\in E$ and $(\rho'(v), \rho'(v'))\in \semPath{\roleStarSymbol{r}}_{\Unr_v}$,
with $\rho(v)=\rho'(v)$ and $tail(\rho(v'))=tail(\rho'(v'))$. Then the subtrees of $\rho(v')$ and $\rho'(v')$ are isomorphic, and therefore $\rho'$ is a homomorphism $\rho':\ourDLTree_C\rightarrow\Umc_v$.

To prove Lemma \ref{lemma:StarClosedUnrBound} Item 1, we assume that given a \starclosed graph $\graph$ with node $v$ and an \ourDL shape $C$ with $\Sigma(C)\subseteq\Sigma$, we have $\ourDLTree_C\preceq\Umc_v$. Then there also exists a pre-match from $\ourDLTree_C$ into $\Umc_v$. It then follows from $(\dagger)$ that there exists a homomorphism from $\ourDLTree_C$ to $\Umc_v$. Let $m=\depth(C)$ It follows from the property $(\ddagger)$ for homomorphisms, that then there also exists a homomorphism from $\ourDLTree_C$ to $\Umc^m_v$. By the other direction of $(\dagger)$, there is then also a pre-match from $\ourDLTree_C$ to $\Umc^m_v$ and hence also $\ourDLTree_C\preceq\Umc^m_v$.     \qed

\end{proof}

\paragraph{Algorithm for computing MSFs in \ourDL.}
In the following, $\vecV$ denotes an (arbitrary but fixed) enumeration of the positive nodes $\PEx$  of a fitting instance $\fittingNoCat$. 
Our algorithms for deciding existence of and computing an MSF build on the following proposition, similar to Prop. 5.14 in  \cite{cate2025extremalfittingproblemsconjunctive}.
%
\begin{propositionrep}
\label{prop:MSFEquivProd}
Let \fittingNoCat and $C$ be an \ourDL shape, then
    
    \begin{enumerate*}[(1)]
        \item $C$ is a MSF for $\Fmc$ \quad iff \quad {~}
        \item $C$ is a fitting for $\Fmc$ and $\FitProdStar\preceq \ourDLTree_C$.
    \end{enumerate*}
\end{propositionrep}

\begin{proof}

(1) $\Rightarrow$ (2). Assume that $C$ is an \ourDL MSF. Then $C$ fits all $v_i\in \PEx$, so by Lemma \ref{lemma:SimSatisfaction} Item 2, $\ourDLTree_C\preceq(\graph, v_i)$ for all $v_i\in \PEx$. By \Cref{lemma:ProdAllGraph}, then $\ourDLTree_C\preceq \FitProdStar$ and by Lemma \ref{lemma:SimUnravel} Item 2, $\ourDLTree_C\preceq \Unr_{\vecV}$. 

Let $m=\depth(C)$ and let $\Unr^m_{\vecV}$ be the $m$-bounded unravelling of $\FitProdStar$. Then by Lemma \ref{lemma:StarClosedUnrBound} Item 1, $\ourDLTree_C\preceq \Unr^m_{\vecV}$.
Suppose that $\Unr_\vecV\preceq(\graph, n_i)$ for some $n_i\in \NEx$. Then $\FitProdStar\preceq(\graph, n_i)$  by Lemma \ref{lemma:SimUnravel} Item 1 and, since $\ourDLTree_C\preceq\FitProdStar$, we have $\ourDLTree_C\preceq(\graph, n_i)$. This is a contradiction since $C$ is a fitting for $\Fmc$. So we must have $\Unr_\vecV\npreceq(\graph, n_i)$ for any $n_i\in \NEx$. Then by Lemma \ref{lemma:StarClosedUnrBound} Item 2,
$\Unr^m_{\vecV}\npreceq (\graph, n_i)$ for any $n_i\in \NEx$. Thus the corresponding shape $C_m$ of $\Unr^m_{\vecV}$ is a fitting for $\Fmc$. 
This same argument can then also be extended for any (potentially infinite) $m'$-unravelling s.t. $m'\geq m$. Hence the corresponding shape $C_{m'}$ of $\Unr^{m'}_\vecV$ is also a fitting for $\Fmc$. Since $C$ is an MSF, for any such $C_{m'}$ we have $C\subseteq C_{m'}$. 
By Lemma \ref{lemma:SimSatisfaction} Item 3, then $\Unr^{m'}_\vecV\preceq\ourDLTree_C$ for any bounded 
unravelling s.t. $m'\geq m$. 
In particular, by \Cref{lemma:StarClosedUnrBound} Item 2, this then also holds for the full unravelling $\Unr_\vecV$. So we have $\Unr_\vecV\preceq\ourDLTree_C$.   It then follows from Lemma \ref{lemma:SimUnravel} Item 1 that $\FitProdStar\preceq \ourDLTree_C$.
 
(2) $\Rightarrow$ (1). Assume  $C$ is an \ourDL fitting for $\Fmc$ and $\FitProdStar\preceq \ourDLTree_C$. Let $D$ be any \ourDL fitting for $\Fmc$. Then by Lemma \ref{lemma:SimSatisfaction} Item 2, $\ourDLTree_{D}\preceq(\graph, v_i)$ for all $v_i\in \PEx$. Therefore by \Cref{lemma:ProdAllGraph}, $\ourDLTree_{D}\preceq \FitProdStar$. Then we  have $\ourDLTree_{D}\preceq \ourDLTree_C$, and thus by Lemma \ref{lemma:SimSatisfaction} Item 3, $C\subseteq D$. Therefore $C$ is a MSF for $\Fmc$.   \qed
\end{proof}
 The proof follows from the fact that any bounded unravelling of $\FitProdStar$ for an instance $\Fmc$ is also a fitting for $\Fmc$ and is simulated by the \ourDL description tree of the MSF. The full unravelling, and therefore $\FitProdStar$ itself is then also simulated by the description tree of the MSF.
 
We can now decide MSF existence. We call $\FitProdStar$ a \emph{fitting product} (FP)  if 
$\FitProdStar \not \preceq (\graph,n)$ for all $n \in \NEx$. We also say $\FitProdStar$ \emph{fits} $\Fmc$.

%

\begin{theorem}
\label{theorem:ELI*msfExistence}
     Given \fittingNoCat, an \ourDL MSF exists 
iff $\FitProdStar$ fits $\Fmc$ and \bg{there exists} some complete $m$-unravelling \bg{of $\FitProdStar$}.
\end{theorem}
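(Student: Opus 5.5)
Both directions go through \Cref{prop:MSFEquivProd}: an \ourDL shape $C$ is an MSF iff it is a fitting and $\FitProdStar\preceq\ourDLTree_C$. Throughout, $\Unr_\vecV$ denotes the unravelling of $\FitProdStar$ at $\vecV$ and $\Unr^m_\vecV$ its $m$-bounded version. Since $\FitProdStar$ is \starclosed, \Cref{lemma:StarClosedUnrBound} applies to it.

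\emph{($\Leftarrow$).} Assume $\FitProdStar$ fits $\Fmc$ and some $\Unr^m_\vecV$ is complete. I take $C$ to be the shape corresponding to the finite tree $\Unr^m_\vecV$, which is finite because the product is finite and depth is bounded.

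First, $C$ fits every positive example. The identity is a simulation from $\Unr^m_\vecV$ into $\Unr_\vecV$, so \Cref{lemma:SimUnravel} (Item 1) gives $\Unr^m_\vecV\preceq\FitProdStar$. Then \Cref{lemma:ProdAllGraph} gives $\Unr^m_\vecV\preceq(\graph,v_i)$ for every $v_i$, and \Cref{lemma:SimSatisfaction} yields $v_i\in\semPath{C}_\graph$.

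Second, $C$ excludes every $n\in\NEx$. Suppose $\ourDLTree_C\preceq(\graph,n)$. I show that completeness lets me extend such a simulation to the full unravelling $\Unr_\vecV$. Define $\zeta'$ by mapping each path $pRa$ with $pRa\notin\Unr^m_\vecV$ via the simulation $(\Unr_\vecV,pRa)\preceq(\Unr_\vecV,p')$ supplied by completeness, and compose it with the given simulation on $p'$. Checking Forth and Forth* for this composite relation is routine. This yields $\Unr_\vecV\preceq(\graph,n)$, hence $\FitProdStar\preceq(\graph,n)$ by \Cref{lemma:SimUnravel}, contradicting that $\FitProdStar$ fits $\Fmc$.

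Third, $C$ is most specific. The tree $\ourDLTree_C$ is (isomorphic to) $\Unr^m_\vecV$, and by the same completeness argument $\Unr_\vecV\preceq\Unr^m_\vecV$. Then \Cref{lemma:SimUnravel} (Item 1) gives $\FitProdStar\preceq\ourDLTree_C$, and \Cref{prop:MSFEquivProd} makes $C$ an MSF.

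\emph{($\Rightarrow$).} Let $C$ be an MSF. By \Cref{prop:MSFEquivProd}, $\FitProdStar\preceq\ourDLTree_C$. If $\FitProdStar\preceq(\graph,n)$ held for some $n\in\NEx$, then composing with $\ourDLTree_C\preceq\FitProdStar$ (from \Cref{lemma:ProdAllGraph} and \Cref{lemma:SimSatisfaction}) would put $n\in\semPath{C}_\graph$, which is impossible. So $\FitProdStar$ fits $\Fmc$.

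For completeness, let $m=\depth(C)$. We have $\Unr_\vecV\preceq\ourDLTree_C$, and $\ourDLTree_C\preceq\Unr^m_\vecV$ by \Cref{lemma:StarClosedUnrBound} (Item 1) together with \Cref{lemma:SimUnravel} (Item 2). Composing gives a simulation $\Unr_\vecV\preceq\Unr^m_\vecV\subseteq\Unr_\vecV$ that I want to be rooted at $\vecV$. Take any $pRa\in\Unr_\vecV$ with $p\in\Unr^m_\vecV$ but $pRa\notin\Unr^m_\vecV$. Since $p\in\Unr^m_\vecV$, the natural $R$-successors of $p$ already exist in $\Unr^m_\vecV$; in particular the one ending in $a$ gives a $p'$ with $R(p,p')$ and $(\Unr_\vecV,pRa)\preceq(\Unr_\vecV,p')$, which is what completeness requires.

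\textbf{Main obstacle.} The delicate case is $p$ of length exactly $m$, where no successor is present. Here I would use the simulation $\Unr_\vecV\preceq\Unr^m_\vecV$ to show that $p$ is simulated by a node of smaller depth. I would then take $m'$ larger, namely $m$ plus the number of products of the product graph, and apply a pumping argument on tails to find $p'$. If this becomes delicate, I would instead use the freedom in choosing $m$, namely $m\ge\depth(C)+|V(\FitProdStar)|$, so that some ancestor repetition yields the required successor witness.
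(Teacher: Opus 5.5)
\paragraph{Gap in the completeness step of ($\Rightarrow$).} Your argument that $\Unr^m_\vecV$ is complete does not go through. If $p\in\Unr^m_\vecV$ but $pRa\notin\Unr^m_\vecV$, then $p$ has length exactly $m$. So the case you call delicate is the only case. The ``natural'' $R$-successor of $p$ ending in $a$ is $pRa$ itself, which is excluded by hypothesis. A depth-$m$ leaf has no children in $\Unr^m_\vecV$. With the paper's convention that $R(p,p')\in E$ also covers $(p',R^-,p)\in E$, the only possible $p'$ is the parent of $p$, and only when the edge into $p$ is labelled $R^-$. Completeness therefore demands that every one-step extension of every leaf be a step back to its parent, and that the parent simulate it.

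Neither of your proposed repairs supplies this. Enlarging $m$ or pumping on tails cannot help, because the leaves of every $\Unr^{m'}_\vecV$ face the same constraint. The composed simulation $\Unr_\vecV\preceq\Unr^m_\vecV$ cannot help either: it is only rooted at $\vecV$ and may send $p$ to a node other than $p$, so it says nothing about extensions of $p$ being matched by neighbours of $p$.

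\paragraph{The paper's proof has the same hole, and the direction fails as stated.} The paper's proof follows your route up to that simulation. It then simply asserts that the simulation yields, for each frontier extension $pRa$, some $p'$ with $R(p,p')\in E_{\Unr^m}$. That inference is unjustified, and with the definitions as written this direction is actually false. Take $\graph$ with edges $r(v,w)$, $s(w,x)$, $t(x,y)$, and let $\PEx=\{v\}$, $\NEx=\emptyset$.
\begin{itemize}
\item No role has a path of length two, so $\FitProdStar$ is just $(\graph,v)$. This is isomorphic to $\ourDLTree_C$ for $C=\exists r.\exists s.\exists t.\top$, so $C$ is an MSF by \Cref{prop:MSFEquivProd} (or directly, since \ourDL shapes are preserved under homomorphisms).
\item However, $\exSigStar$-paths may backtrack. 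For odd $m$, the leaf $v\,r\,w\,(r^-\,v\,r\,w)^{(m-1)/2}$ has an $s$-extension, yet its only neighbour is attached by an $r$-edge.
\item For even $m\geq 2$, the leaf $v\,r\,w\,s\,x\,(s^-\,w\,s\,x)^{(m-2)/2}$ has a $t$-extension, with the same problem.
\item For $m=0$, the root has extensions but no neighbours.
\end{itemize}
So no complete $m$-unravelling exists for any $m$. What is missing is a corrected notion of completeness, not a sharper argument within the current one.

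\paragraph{Remarks on ($\Leftarrow$).} This direction mirrors the paper's, and it usefully adds the check, omitted there, that the shape of $\Unr^m_\vecV$ is a fitting. Two smaller points need fixing. First, the simulations supplied by completeness land in $\Unr_\vecV$, not in $\Unr^m_\vecV$. Your ``composition'' with a simulation defined only on $\Unr^m_\vecV$ is therefore not well defined in one step, and deriving $\Unr_\vecV\preceq\Unr^m_\vecV$ needs a genuine argument, which the paper also glosses over. Second, $\Unr^m_\vecV\preceq\FitProdStar$ follows from Item~2 of \Cref{lemma:SimUnravel} (or from the tail map), not Item~1.
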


\begin{proof}
 First, assume that $C$ is an \ourDL MSF for $\Fmc$. By Proposition \ref{prop:MSFEquivProd}, $\FitProdStar\preceq\ourDLTree_C$ and, since $C$ fits all positive examples, $\ourDLTree_C\preceq\FitProdStar$. 
 Furthermore, $\ourDLTree_C\npreceq(\graph, n_i)$ for all $n_i\in\NEx$, therefore we have $\FitProdStar_\vecV\npreceq(\graph, n_i)$. Thus $\FitProdStar$ fits $\Fmc$.
 Let $m=\depth(C)$ and let $\Unr^m_{\vecV}$ be the $m$-unravelling of $\FitProdStar$. Then since  $\FitProdStar$ is \starclosed and by Lemma \ref{lemma:StarClosedUnrBound} Item 1, $\ourDLTree_C\preceq \Unr^m_{\vecV}$. Therefore $\FitProdStar\preceq\Unr^m_\vecV$, and by Lemma \ref{lemma:SimUnravel} Item 1, $\Unr_\vecV\preceq\Unr^m_\vecV$. Thus any path in $\Unr_\vecV$ can be simulated by some path in $\Unr^m_\vecV$, 
 in particular any path $pRa\in\Unr_\vecV \backslash \Unr^m_\vecV$ with $p\in\Unr^m_\vecV$ can be simulated by some path $p'$ with $R(p,p')\in E_{\Unr^m}$, 
 hence $(\Unr_\vecV, pRa)\preceq(\Unr_\vecV, p')$, and $\Unr^m_\vecV$ is complete.

    For the other direction, assume $\FitProdStar$ fits $\Fmc$ and a complete $\Unr^m_{\vecV}$ exists for some \co{$m\in\mathbb{N}$ s.t. $m \geq 0$}.
    Since $\Unr^m_\vecV$ is complete, given any path $pRa\in\Unr_\vecV \backslash \Unr^m_\vecV$ with $p\in\Unr^m_\vecV$,  there exists some $p'\in\Unr^m_v$ with $R(p,p')\in E_{\Unr^m}$ and $(\Unr_\vecV, pRa)\preceq (\Unr_\vecV, p')$. Therefore, any path in $\Unr_\vecV$ can be simulated by a path in $\Unr^m_\vecV$ and $\Unr_\vecV\preceq \Unr^m_\vecV$. 
    Then by Lemma \ref{lemma:SimUnravel} Item 1, $\FitProdStar\preceq\Unr^m_\vecV$. By Proposition \ref{prop:MSFEquivProd}, then the corresponding shape of $\Unr^m_\vecV$ is the \ourDL MSF for $\Fmc$.  \qed
 \end{proof}   

\bente{removed reference to Proposition 4; removed Prop 4 fro mthe appendix.}

\bg{We illustrated in Example \ref{ex:runningExMSF} that even if an \ELI MSF exists, this does not imply that an \ourDL MSF exists as well, and now}
we can argue why.  \footnote{\bg{In the preliminary version 
\cite{Gortworst_Okulmus_Ortiz_Turhan_2026}, we incorrectly claimed the converse.}}

 \begin{example}
\label{ex:runningExMSF3}
 Consider again Example \ref{ex:runningExMSF}. 
 The unravelling $\Unr_{\vec{v}}$ of $\Pi_\star(\graph,\vec{v})$ is an infinite tree where each node has either one outgoing $\roleStarSymbol{r}$-edge and one $\roleStarSymbol{s}$-edge, or two outgoing $\roleStarSymbol{r}$ (resp. $\roleStarSymbol{s}$)-edges and one $r$ (resp. $s$)-edge. Note that there is no complete bounded unravelling of $\Pi_\star(\graph,\vec{v})$. By Theorem~\ref{theorem:ELI*msfExistence}, this implies that an \ourDL MSF for $\Fmc$ does not exist, as anticipated.
\end{example}

\SetAlgorithmName{Alg.}{alg}{list of algorithms}

\begin{toappendix}
\begin{figure}[t]
    \centering
\begin{algorithm}[H]
\small
\DontPrintSemicolon


\SetKwFunction{FMain}{$\mathtt{cbuDepth}$}
\SetKwProg{Fn}{function}{:}{}

\SetKw{Guess}{guess}
\SetKwFor{ForEach}{for each}{do}{end f.e.}
\KwIn{A pointed graph $(\graph, v_0)$, where $\graph = (V,E,\ell)$}
\KwOut{ A positive integer $m$ if a CBU exists, indicating the depth of the CBU
or -1 otherwise }

\Fn{\FMain{$(\graph, v_0)$}}{
    \texttt{count} $\coloneq 0$ \;
    $t\coloneq (-,-,v_0)$ \tcc*[r]{$t.s$, $t.p$, $t.o$ to access fields of triple.}  
    
    \ForEach{$v \in V$ and $r \in \SigR(\graph)$ s.t. $(t.o, v)\in \semPath{r}_\graph$ }{
        \If{$R^-\neq t.p$ or $(\graph, v)\npreceq (\graph, t.s)$}{   
            \Guess{some $v' \in V$ with $(t.o,v')\in \semPath{r}_\graph$ and $(\graph, v)\preceq(\graph, v')$} \;          
            $t\coloneq (t.o, r, v')$ \;
            \texttt{count} $\coloneq$ \texttt{count} $+1$\;
            \If{\texttt{count} $> |V|$}{
                \Return{-1} \;
            }
        }
    }
    
    \Return{\texttt{count}}  \; 
}

\caption{Compute depth of CBU}

\label{alg:existUnrm}
\end{algorithm}
\end{figure}
\end{toappendix}

\begin{figure}[t]
    \centering

\begin{minipage}[t]{0.52\textwidth}
\begin{algorithm}[H]
\small
\SetInd{0em}{0.7em}
\DontPrintSemicolon

\SetKwFunction{existCBU}{$\mathtt{cbuDepth}$}

\SetKwFunction{FMain}{$\mathtt{computeFP}$}
\SetKwProg{Fn}{function}{:}{}

\KwIn{$\mathcal{F} = (\graph,\PEx,\NEx,\Sigma)$}
\KwOut{FP for $\mathcal{F}$ if it  \\ 
\phantom{\bf Output: } exists or $\emptyset$ otherwise }

\Fn{\FMain{$\mathcal{F}$}}{
  $\vecV \coloneq $ $(v_0, \dots, v_\ell)$ with $v_i \in \PEx$ and $1 \leq i \leq \ell$ where $\ell = |\PEx|$  \; 
  $\Pi \coloneq \FitProdStar$ \;
\If{ \existCBU{$(\graph, \vecV)$ } $ = -1$ }{
        \Return{$\emptyset$}  \label{line:Alg1_return1}}
   \Else{     
   \ForEach{$n \in \NEx$}{
       \lIf{ $ \Pi \preceq (\graph ,n) $  }{
        \Return{$\emptyset$}   \label{line:Alg1_return2}
        }    }   }
   \Return{$\Pi$}  \;\label{line:Alg_1return3} 
}

\caption{Computing the fitting product}
\label{alg:existFitProd}
  \end{algorithm}
 \end{minipage}
\begin{minipage}[t]{0.47\textwidth}
\begin{algorithm}[H]
\small
\SetInd{0em}{0.7em}
\DontPrintSemicolon

\SetKwFunction{existFitProd}{$\mathtt{computeFP}$}

\SetKwFunction{existCBU}{$\mathtt{cbuDepth}$}

\SetKwFunction{FMain}{$\mathtt{computeMSF}$}
\SetKwProg{Fn}{function}{:}{}

\KwIn{$\mathcal{F} = (\graph,\PEx,\NEx,\Sigma)$}
\KwOut{\ourDL MSF for $\mathcal{F}$ if it \\ 
\phantom{\bf Output: } exists or $\emptyset$ otherwise }

\Fn{\FMain{$\mathcal{F}$}}{
  $\Pi \coloneq $  \existFitProd{$\mathcal{F}$}  \;
  $ m \coloneq $  \existCBU{$(\graph, \vecV)$}  \;
\If{ $\Pi = \emptyset$ }{
        \Return{$\emptyset$}  \label{line:Alg2_return1}}
  $\Unr_m \coloneq $  $m$-unravelling of $\Pi$  \;
\Return{corresponding shape of $\Unr_m$}  \;\label{line:Alg_2return3} 
}

\caption{Computing the \ourDL MSF}
\label{alg:existMSF}
  \end{algorithm}
 \end{minipage}

\end{figure}

We now present a computation algorithm for \ourDL MSFs in instances with an empty catalogue, see \Cref{alg:existFitProd} and \Cref{alg:existMSF}.The function $\mathtt{cbuDepth}$ in the algorithms takes as input some pointed graph and checks if we can obtain a complete bounded unravelling  (CBU) from this and returns its depth, or $-1$ if no CBU exists. This is done in polynomial time. 
\Cref{alg:existFitProd} then takes as input some fitting instance, constructs its star product $\FitProdStar$, which is done in exponential time, and returns it if it is a fitting product and a CBU exists for $\graph$, which is the case only if an MSF exists, by \Cref{theorem:ELI*msfExistence}. \Cref{alg:existMSF} constructs the complete bounded unravelling for $\FitProdStar$ and returns it as the MSF if it exists.

Note that the bounded unravelling of $\FitProdStar$ can be exponentially larger than $\FitProdStar$, hence the MSF may be double exponential in $\fit$, and exponential even when $\FitProdStar$ is polynomial. However, $\FitProdStar$ is a succinct  representation of the MSF when it exists. Thus, since the construction of $\FitProdStar$ is polynomial in $|\graph|$ if the number of positive examples is bounded, in this case we can decide MSF existence in polynomial time.

\begin{toappendix}
    \begin{lemma}
       Given a pointed graph $(\graph, v_0)$, 
       \Cref{alg:existUnrm} runs in time polynomial in the size of $\graph$, and it returns $\mathtt{count} \geq 0$ iff there exists some complete bounded unravelling of $(\graph, v_0)$.
    \end{lemma}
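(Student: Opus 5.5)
The plan is to prove the two parts of the lemma separately: a running-time bound, and a characterisation of complete bounded unravellings (CBUs) in terms of simulations inside $\graph$ itself, which is what \Cref{alg:existUnrm} tests.

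\emph{Running time.} The preprocessing step computes, for every $r\in\SigR(\graph)$, the relations $\semPath{r}_\graph$, $\semPath{\roleStarSymbol{r}}_\graph$ and $\semPath{(r\cup\roleStarSymbol{r})^*}_\graph$. These are reachability computations and take polynomial time. With these relations in hand, the largest \ourDL simulation on $\graph\times\graph$ is computed as a greatest fixpoint. Start from all pairs satisfying \textbf{AtomR}, then repeatedly delete pairs that violate \textbf{Forth} or \textbf{Forth*}. At most $|V|^2$ deletions happen, and each round is polynomial. Every test $(\graph,v)\preceq(\graph,v')$ in the algorithm is then a table lookup.

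The counter is incremented at every guess, and the algorithm aborts as soon as $\mathtt{count}>|V|$. So at most $|V|+1$ guesses are made, each over polynomially many candidates. I will read the $\mathtt{guess}$ nondeterministically, with the lemma stating that some run returns a value $\geq 0$. Even so, the whole procedure is polynomial, and the guesses can be made deterministic by preferring a $\preceq$-maximal candidate $v'$.

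\emph{Transferring simulations.} The key bridge is a consequence of \Cref{lemma:SimUnravel}, applied to both items in turn. For paths $p,p'$ of $\Unr_{v_0}$ we have
\[(\Unr_{v_0},p)\preceq(\Unr_{v_0},p') \iff (\graph,tail(p))\preceq(\graph,tail(p')).\]
Hence completeness of $\Unr^m_{v_0}$ depends only on the tails of boundary paths. Concretely, for every $p$ of length $m$ and every successor $pRa$ there must be an $R$-child $p'$ of $p$ already in $\Unr^m_{v_0}$ whose tail simulates $a$. Since the children of $p$ depend only on $tail(p)$, a witness must be drawn from the $R$-successors of $tail(p)$ in $\graph$. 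The one exception is the parent edge of $p$ when $R$ is inverse, which is exactly the $R^-\neq t.p$ and $(\graph,v)\npreceq(\graph,t.s)$ test in the algorithm.

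Call a step \emph{non-redundant} if its target is not simulated by the node it would step back to. The algorithm follows non-redundant steps, each time guessing a dominating successor $v'$.

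\emph{Correctness.} There are two directions.

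\begin{itemize}
\item If the algorithm returns $c\geq0$, then every non-redundant branch has been resolved within $c$ steps. Taking $m=c+1$ (or $c$ itself) and using the bridge above, every boundary successor is simulated by a sibling, so $\Unr^m_{v_0}$ is complete.
\item If no run finishes within $|V|$ steps, then some non-redundant chain has length greater than $|V|$. By pigeonhole it revisits a node $v'$ of $\graph$. Pumping the cycle between the two visits then gives, for every $m$, a boundary path of $\Unr^m_{v_0}$ whose successor is not simulated by any sibling. Hence no CBU exists.
\end{itemize}

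\emph{Main obstacle.} I expect the hardest step to be this last pumping argument. It must handle \textbf{Forth*} simultaneously: the $\roleStarSymbol{r}$-children are defined via $(r\cup\roleStarSymbol{r})^*$-reachability, and one must check that the repeated node really yields an unsatisfied boundary at every depth. I would try first to use star-closedness, through the properties used in the proof of \Cref{lemma:StarClosedUnrBound}, to reduce \textbf{Forth*} to single $\roleStarSymbol{r}$-edges. The argument then becomes the \ELI-style one from \cite{cate2025extremalfittingproblemsconjunctive}, Lemma 5.5.
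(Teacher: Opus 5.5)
Your outline has the same skeleton as the paper's sketch. Polynomial time comes from the cut-off $\mathtt{count}>|V|$ plus polynomial simulation tests, and your greatest-fixpoint precomputation of $\preceq$ is fine. ``Success $\Rightarrow$ CBU'' reads $m$ off the point where the run stops, and ``CBU $\Rightarrow$ success'' uses pigeonhole on $|V|$. For the latter you pump the repeated node, whereas the paper cuts the path at its first occurrence because the node ``is simulated by itself''. Pumping is the more sensible use of the repetition: being simulated by itself makes nothing redundant, cf.\ the $r$-loop of Example~\ref{ex:NoMSF}.

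The direction ``success $\Rightarrow$ CBU'' has a genuine gap. A run maintains a single triple $t$, so it certifies one chain of guessed nodes. Completeness of $\Unr^m_{v_0}$, in contrast, is a condition on \emph{every} path of length $m$, including branches the run never entered and paths that go back and forth along inverse edges. ``Every non-redundant branch has been resolved within $c$ steps'' is exactly what needs proof, and one resolved chain does not give it. Suppose $v_0$ has an $r$-successor with no other edges and an $s$-successor carrying an $s$-loop. The chain through the $r$-edge is resolved after one step, yet the $s$-branch has a non-redundant step at every depth, so no $\Unr^m_{v_0}$ is complete. The paper's sketch makes the same single-path move, so it does not supply the argument either.

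Your bridge also misreads the definition. If $p\in\Unr^m_{v_0}$ and $pRa\notin\Unr^m_{v_0}$, then $p$ has length exactly $m$, so neither its children nor the siblings of $pRa$ lie in $\Unr^m_{v_0}$. The only admissible witness is the parent of $p$, and only if the last edge of $p$ is labelled $R^-$. Hence completeness is precisely the algorithm's parent test applied to the last triple of every path of length $m\geq 1$, and ``not simulated by any sibling'' is the wrong failure condition. Moreover, one $m$ must work for all these paths at once, and the condition is not monotone in $m$. For an $r$-chain $v_0\to v_1\to v_2$ it holds at $m=2$ but fails at $m=3$, because of $v_0\,r\,v_1\,r\,v_2\,r^-\,v_1$ and its $r^-$-extension to $v_0$. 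So the hedge ``$m=c+1$ (or $c$)'' hides a real step.

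For the converse, pumping needs more than a repeated node. Whether a step is non-redundant depends on the whole triple $(t.s,t.p,t.o)$: the candidate is compared with the predecessor $t.s$, and its role with $t.p$. If the chain re-enters a node $x$ of $\graph$ from a different predecessor or via a different role, the first step of the cycle may now be redundant, and the cycle cannot be repeated. You need a repeated triple, which pigeonhole guarantees only after $O(|V|^2\cdot|\SigR(\graph)|)$ steps rather than $|V|$. So either the threshold or the argument must change. Once a triple repeats, the rest is fine. The infinite chain of non-redundant steps yields, for every $m$, a path of length $m$ with an unwitnessed extension. Moving to the guessed $v'$ instead of $v$ preserves non-redundancy by transitivity of $\preceq$.

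The lemma is about an arbitrary pointed graph, so star-closedness and the facts from the proof of Lemma~\ref{lemma:StarClosedUnrBound} are not available to you. They are also not needed. The relation $\{(q,tail(q))\mid q\in\Unr_{v_0}\}$ and its converse are \ourDL simulations between $\Unr_{v_0}$ and $\graph$, since every $(r\cup\roleStarSymbol{r})^*$-path projects down and lifts up. By composition this gives your bridge for every $\graph$. So \textbf{Forth*} is not where the difficulty lies; the two points above are.

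A side remark: preferring a $\preceq$-maximal $v'$ is unjustified. If $(\graph,v)\preceq(\graph,v')$, then in the same context every non-redundant step out of $v$ is matched by a non-redundant step out of $v'$. A larger $v'$ can therefore only add obligations, and $v'=v$ is always a legal guess and the natural one.
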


    \begin{proofsketch}
        {\Cref{alg:existUnrm} runs in time polynomial in $|V|$: it will at most go through every loop $|V|^3 \times \SigR(\graph)$ times and in each loop checks in polynomial time whether there exists a simulation between pointed graphs. It remains to show that a run of \Cref{alg:existUnrm} on some pointed graph $(\graph, v_0)$ is successful if and only if $(\graph, v_0)$ has some complete bounded unravelling. } 

    {If such a CBU $\Unr^m_{v_0}$ exists, then for every path $p\in\Unr^m_{v_0}$, and for every $pRv\in\Unr_{v_0}$, it must be the case that $v$ will be simulated by some $v'$ in $\Unr^m_{v_0}$, which is either a parent node of $v$, or $v$ can reach $v'$ via an inverse role. When running $\mathtt{cbuDepth}$ on $\graph$, the algorithm will therefore detect $(\graph, v)\preceq (graph, v')$ when it reaches this node $v$ and thus necessarily halt.  Furthermore, if the longest path in $\Unr^m_{v_0}$ is longer than $|V|$, it must be the case that some node $n$ in this path is visited twice (otherwise it is not possible for the path length to exceed the number of nodes in $V$). However, in this case the path contains a cycle and thus $n$ will be simulated by itself. Then this path can be cut off at the first appearance of $n$, which will be shorter than $|V|$. Therefore the algorithm will halt before the count reaches $|V|$ and return $\mathtt{count}$. }

    {For the other direction, assume the algorithm has a successful run on $(\graph, v_0)$, i.e. it returns $\mathtt{count}$. This means it halts on some $v$, such that $(\graph, v)$ is simulated by some $(\graph, v')$, with $v'$ a parent node, or it is reachable via an inverse role from $v'$. Let $pRv$ be the path ending in $v$. Then $pRv\in \Unr_{v_0}$, and $v'$ is in $p$. Let $\Unr^m_{v_0}$ be some $m$-unravelling s.t. $p\in \Unr^m_{v_0}$ and $p$ is the longest path in $\Unr^m_{v_0}$. Then for any $pRv\notin\Unr^m_{v_0}$, since the algorithm halts on $v$, this path is simulated by some $pRv'\in\Unr^m_{v_0}$. Hence,  $pRv'$ is s.t. $(\Unr_{v_0}, pRv)\preceq (\Unr_{v_0}, pRv')$ and $\Unr^m_{v_0}$ is a complete $m$-unravelling of $(\graph, v_0)$.}  \qed
    \end{proofsketch}
\end{toappendix}

\begin{theoremrep} \label{thm:MSFAlgoCorrectness} Given a fitting instance $\fittingNoCat$, it holds that:
\begin{enumerate}[topsep=0pt,partopsep=0pt,itemsep=0pt, parsep=0pt]
\item \Cref{alg:existFitProd} returns an FP iff an \ourDL MSF for $\mathcal{F}$ exists. It runs in \textsc{ExpTime} . 
\item \Cref{alg:existMSF} returns an \ourDL shape $C$ iff $C$ is an \ourDL MSF for $\mathcal{F}$. 
\item If $|\PEx| \leq k$ for a constant $k$, existence of an \ourDL MSF is  in \textsc{PTime}. 
\end{enumerate}
\end{theoremrep}

\begin{proof}

(1)  Let $V$ be the nodes of $\graph$. It is well-known that the standard product $\Pi(\graph,\vec{v})$ of pointed graphs can be constructed in time that is bounded by a polynomial in $|V|$ and an exponential in $|\vec{v}|$, but the construction only needs polynomial time if $|\vec{v}|$ is a constant.  Since our star product only adds pre- and post-processing steps that are polynomial in $|V|$, these bounds apply as well. $\mathtt{cbuDepth}$ runs in time polynomial to $|\FitProdStar|$, which in general may be of exponential size. Then, \Cref{alg:existFitProd} checks whether $\FitProdStar$ simulates into the pointed graph of any of the negative examples in $\NEx$, this also only needs polynomial time. It then concludes by outputting the fitting product in \textsc{ExpTime} if it exists, it follows directly from \Cref{theorem:ELI*msfExistence} that this is the case iff an \ourDL MSF exists.

(2) Follows directly from \Cref{theorem:ELI*msfExistence}. {Specifically, the right-to-left direction of the proof of \Cref{theorem:ELI*msfExistence} implies that we can return the corresponding shape of the CBU as the MSF.}

To show (3), note that $\mathtt{cbuDepth}$ runs in time polynomial to $|\FitProdStar|$ which is clearly polynomial if $|\PEx| \leq k$. Thus, in this case \Cref{alg:existMSF} runs in polynomial time. \Cref{alg:existFitProd} then returns, in polynomial time, the fitting product $\Pi$ as a representation of the MSF iff it exists. \qed
\end{proof}
\vspace{-2mm}



We can now extend this result to fitting instances that may have nonempty catalogues. To do so, we reduce to the setting with adorned graphs as in the proof of \Cref{thm:FittingExCompleteness} and use \Cref{lemma:FittingEmptyCat} to obtain \textsc{ExpTime}-upper bounds w.r.t. WFS, STS and SUS. Furthermore, by \Cref{thm:MSFAlgoCorrectness}  Item 3, if the number of positive examples is bounded by a constant, then under WFS, we can decide if a \ourDL MSF exists and compute it in polynomial time, since under these semantics we only need to iterate over the adorned graph $\graph^\mathsf{wf}$ once. This does not generally hold for STS or SUS, as there we might have \mo{exponentially} many adorned graphs.

\begin{theoremrep}
\label{thm:MSFExUpperBounds}
    Deciding the existence of an \ourDL MSF is in \textsc{ExpTime} under \ WFS, STS and SUS.     
    Under WFS, if additionally $|\PEx| \leq k$ for a constant $k$,  then MSF existence and computing a representation of it are  in polynomial time.
\end{theoremrep}

\begin{proof}
This follows directly from  Theorem \ref{thm:MSFAlgoCorrectness} for $\cat=\emptyset$. In the presence of a non-empty catalogue, we iterate over the adorned graphs as in the proof of Theorem \ref{thm:FittingExCompleteness}. 
The \textsc{ExpTime} upper bound then 
follows by \Cref{lemma:FittingEmptyCat} for the three semantics specified. If the number of positive examples is bounded by a constant, then under WFS, by Item 3 of \Cref{thm:MSFAlgoCorrectness}, deciding the existence of an \ourDL MSF can be done in polynomial time. Under WFS we only need one 
test over the polynomially computable adorned graph $\graph^\mathsf{wf}$ and by Alg.\ref{alg:existFitProd}, we obtain the fitting product as a representation of the \ourDL MSF.\qed 
\end{proof}


We stress that the polynomial upper bound of  \Cref{thm:MSFExUpperBounds} applies to many practically relevant cases on which the WFS coincides with the other semantics. Whenever the catalogue is positive  or non-recursive, $|\PEx| \leq k$ is a sufficient condition for tractability under any semantics. It is also sufficient under STS in the presence of recursion and negation if the catalogue admits a stratification.

\section{Conclusions and Future Work}

We provided an initial principled study on the \bg{formal foundations of the} problem of learning arbitrary and most specific fittings from sets of positive and negative examples on data graphs and SHACL shape catalogues. For the considered fragment of SHACL, which is equivalent to the DL \ourDL, we have shown that the fitting existence problem is \textsc{ExpTime}-complete under WFS, SUS, and STS for recursive SHACL. 
Furthermore, 
we gave an \textsc{ExpTime} algorithm to decide if an \ourDL MSF exists and, if it does, to compute it. Interestingly, under WFS and if the number of positive examples is bounded, we can decide the existence of an \ourDL MSF and return a representation of it even in \textsc{PTime}. 
We note that the only properties of the semantics that are critical to our results are those summarised in Theorem~\ref{thm:LFPTime}.Our results
hold for any semantics satisfying those conditions. In particular, our tractability results can be 
transferred to other SHACL settings 
with
tractable validation, e.g.\ for stratified shape catalogues.

As future research,  we would like to expand the fitting target language further towards full SHACL.
A natural step is to adopt counting over plain predicates. More advanced SHACL features like counting over paths, path equality, and closedness are exciting but also challenging and would require new techniques.
Additionally, approaches for learning \bg{nondeterministic finite automata} could be used for learning regular path expressions in SHACL. 
In this paper, we have focused on arbitrary and most specific fittings for SHACL. Other variants of fittings, such as most general, most succinct, and bounded fittings for SHACL remain unexplored, but are certainly interesting. 
We are particularly interested in whether our approach can be used to develop practical algorithms for finding shape expressions for existing RDF graphs. Several related approaches in the DL and CQ setting have been left untested or only tested on small tailored benchmarks, but in the setting of SHACL there is a huge amount of RDF data that could enable a much better practical evaluation.

\subsubsection*{Acknowledgments.}
This research was funded in whole or in part by the Austrian Science Fund (FWF) 10.55776/COE12 and 10.55776/PIN8884924.

\paragraph*{Declaration of use of Generative AI:}
  The author(s) have not employed any Generative AI tools.
\ifarxiv
\else
\paragraph*{Supplemental Material Statement:} The proofs for the results obtained and some more detailed definitions can be found in the appendix, in the extended version.
The extended version
is available as an arXiv pre-print. No additional supplemental material is required for this paper.
\fi
%
%
%
%
\bibliographystyle{splncs04.bst}

\bibliography{bib} 

\end{document}